\newcommand{\blue}[1]{#1}
\newcommand{\smax}{\sigma_\mathrm{max}}
\newcommand{\smin}{\sigma_\mathrm{min}}
\newcommand{\bx}{{\bm x}}
\newcommand{\by}{{\bm y}}
\newcommand{\bw}{{\bm w}}
\newcommand{\bg}{{\bm g}}
\newcommand{\bb}{{\bm b}}
\newcommand{\bA}{\mathbf{A}}
\newcommand{\bC}{\mathbf{C}}
\newcommand{\bH}{\mathbf{H}}
\newcommand{\bF}{\mathbf{F}}
\newcommand{\bG}{\mathbf{G}}
\newcommand{\bI}{\mathbf{I}}
\newcommand{\bX}{{\mathbf X}}
\newcommand{\bY}{{\mathbf Y}}
\newcommand{\bW}{{\mathbf W}}
\newcommand{\bB}{{\mathbf B}}
\newcommand{\bD}{{\mathbf D}}
\newcommand{\TV}{{\mathrm{TV}}}
\newcommand{\Z}{\mathbb{Z}}
\newcommand{\cA}{\mathcal{A}}
\newcommand{\cF}{\mathcal{F}}
\newcommand{\cN}{\mathcal{N}}
\newcommand{\SR}{SR3}
\DeclareMathOperator{\prox}{prox}
\DeclareMathOperator{\proj}{proj}
\DeclareMathOperator*{\argmin}{argmin}
\newcommand*\Let[2]{\State #1 $\gets$ #2}
\algrenewcommand\algorithmicrequire{\textbf{Input:}}
\algrenewcommand\algorithmicensure{\textbf{Input:}}
\newcommand{\mbf}{\mathbf}
\newcommand{\bc}{{\bm c}}
\newcommand{\bd}{{\bm d}}
\newcommand{\bh}{{\bm h}}
\newcommand{\bv}{{\bm v}}
\newcommand{\R}{\mathbb{R}}
\newcommand{\ip}[1]{\left\langle #1 \right\rangle}
\DeclareMathOperator{\cond}{cond}
\newcommand{\vectorize}{\mathrm{vec}}
\newtheorem{theorem}{Theorem}
\newtheorem{definition}{Definition}
\newtheorem{lemma}{Lemma}
\newtheorem{corollary}{Corollary}
\DeclareMathOperator{\lip}{Lip}
\DeclareMathOperator{\Diag}{Diag}
\DeclareMathOperator{\sign}{sign}
\algrenewcommand\algorithmicrequire{\textbf{Input:}}
\algrenewcommand\algorithmicensure{\textbf{Output:}}
\def\elsq{{\mathbin{\ThisStyle{\stackinset{c}{-.1\LMpt}{c}{}{%
  $\SavedStyle 2$}{$\SavedStyle\bigcirc$}}}}}
\title{A Unified Framework for Sparse Relaxed Regularized Regression: SR3}
\author{{Peng Zheng}$^\dagger$\thanks{$^\dagger$Department of Applied Mathematics, University of Washington, Seattle, WA
(zhengp@uw.edu). A. Aravkin was supported in part by the Washington 
  Research Foundation Data Science Professorship.},
{Travis Askham}$^\dagger$\thanks{$^\dagger$Department of Applied Mathematics, University of Washington, Seattle, WA
(askham@uw.edu).  J. N. Kutz and T.~Askham
  acknowledge support from the Air Force Office of Scientific
  Research (FA9550-17-1-0329).}, 
{Steven L. Brunton}$^*$\thanks{$^*$Department of Mechanical Engineering, University of Washington, Seattle, WA
(sbrunton@uw.edu). S. L. Brunton acknowledges
  support from the Army Research Office through the
  Young Investigator Program (W911NF-17-1-0422).},
{J. Nathan Kutz}$^\dagger$\thanks{$^\dagger$Department of Applied Mathematics, University of Washington, Seattle, WA
(kutz@uw.edu)},
and 
{Aleksandr Y. Aravkin}$^\dagger$\thanks{$^\dagger$Department of Applied Mathematics, University of Washington, Seattle, WA
(saravkin@uw.edu)}
}
\begin{document}

\maketitle

\begin{abstract}
Regularized regression problems are ubiquitous in statistical modeling, signal processing, 
and machine learning. 
Sparse regression in particular has been instrumental in scientific model discovery, including compressed sensing applications, variable selection, and high-dimensional analysis. 
We propose a broad framework for sparse relaxed regularized regression, called \SR. 
The key idea is to solve a {\it relaxation} of the regularized problem, which has three advantages over the state-of-the-art: (1) solutions of the relaxed problem are superior with respect to errors, false positives, and conditioning, (2) relaxation allows extremely fast algorithms for both convex and nonconvex formulations, and (3) the methods apply to composite regularizers such as total variation (TV) and its nonconvex variants. 
We demonstrate the advantages of \SR~ (computational efficiency, higher accuracy, faster convergence rates, greater flexibility) across a range of regularized regression problems with synthetic and real data, including applications in compressed sensing, LASSO, matrix completion, TV regularization, and group sparsity. 
To promote reproducible research, we also provide a companion \textsc{Matlab} package that implements these examples. 
\end{abstract}

\section{Introduction}
Regression is a cornerstone of data science.
In the age of big data, optimization algorithms are largely focused on regression problems in machine learning and AI. 
%
As data volumes increase, algorithms must be fast, scalable, and robust to low-fidelity measurements (missing data, outliers, etc).
Regularization, which includes priors and constraints, is essential for the recovery of interpretable solutions in high-dimensional and ill-posed settings.
Sparsity-promoting regression is one such fundamental technique, that enforces solution parsimony by balancing model error with complexity.
Despite tremendous methodological progress over the last 80 years, many difficulties remain, including (i) restrictive theoretical conditions for practical performance,
(ii) the lack of fast solvers for large scale and ill-conditioned problems,  
(iii) practical difficulties with nonconvex implementations,
and (iv) high-fidelity requirements on data.
To overcome these difficulties, we propose a broadly applicable method, {\em sparse relaxed regularized regression} (\SR), based on a relaxation reformulation of {\it any} regularized regression problem.
We demonstrate that \SR~is fast, scalable,  robust to noisy and missing data, and flexible enough to apply broadly to regularized regression problems, ranging from the ubiquitous
LASSO and compressed sensing (CS), to composite regularizers such as the total variation (TV) regularization,
and even to nonconvex regularizers, including $\ell_0$ and rank.
\SR~improves on the
state-of-the-art on all of these applications, both in terms of computational speed and performance.
Moreover, \SR~is flexible and simple to implement.
A companion open source package implements a range of examples using \SR.

The origins of regression extend back more than two centuries to the pioneering mathematical contributions of Legendre~\cite{legendre1805nouvelles} and Gauss~\cite{gauss1809theoria,gauss1821theory}, who were interested in determining the orbits of celestial bodies. 
%
%
%
%
%
%
The invention of the digital electronic computer in the mid 20th century greatly increased interest in regression methods, as computations became faster and larger problems from a variety of fields became tractable.
It was recognized early on that many regression problems are ill-posed in nature, either being under-determined, resulting in an infinite set of candidate solutions, or otherwise sensitive to perturbations in the observations, often due to some redundancy in the set of possible models.
%
%
Andrey Tikhonov~\cite{tihonov1943ob} was the first to systematically study the use of regularizers to achieve stable and unique numerical solutions of such ill-posed problems.
%
The regularized linear least squares problem is given by 
\begin{equation}
\min_{\bx}~~\frac{1}{2}\|\bA \bx-\bb\|^2 + \lambda  R(\bC \bx) \, ,
\label{eq:generic}
\end{equation}
\blue{
where $\bx \in \mathbb{R}^d$ is the unknown signal, $\bA \in \mathbb{R}^{m\times d}$ is the linear data-generating mechanism for the observations $\bb \in \mathbb{R}^m$, 
$\bC\in \mathbb{R}^{n\times d}$ is a linear map,
$R(\cdot)$ is any regularizer, and $\lambda$ parametrizes the strength of the regularization.}
Tikhonov proposed a simple $\ell_2$ penalty, 
i.e. $R({\bx})=\| {\bx}\|^2 = \sum x_i^2$, which eventually led to the formal introduction of the {\em ridge} regression strategy by Hoerl and Kennard 30 years later~\cite{hoerl1976ridge}.
Other important regularizers include the $\ell_0$ penalty,
$R({\bx})=\|\bx\|_0$, and the sparsity-promoting convex $\ell_1$ relaxation $R({\bx})=\|\bx\|_1$, introduced by Chen and Donoho in 1994~\cite{shaobing1994basis} as {\it basis pursuit}, and by Tibshirani in 1996~\cite{tibshirani1996regression} as the {\em least absolute shrinkage and selection operator} (LASSO). More generally,  the $\ell_1$ norm was introduced much earlier: as a {penalty} in 1969~\cite{pietrzykowski1969exact}, with specialized algorithms in 1973~\cite{conn1973constrained}, 
and as a robust loss in geophysics in 1973~\cite{claerbout1973robust}.
In modern optimization, nonsmooth regularizers are widely used across a diverse set of applications, including in the training of neural network architectures~\cite{Goodfellow-et-al-2016}.
Figure~\ref{fig:lasso_main}(a) illustrates the classic sparse regression iteration procedure for LASSO. Given the 1-norm of the solution, i.e. $\|\hat \bx\|_1 = \tau$, the solution can be found by
`inflating' the level set of the data misfit until it intersects the ball $\mathbb{B}_1 \leq \tau$.  The geometry of the level sets influences both the robustness of the procedure with 
respect to noise, and the convergence rate of iterative algorithms used to find $\hat \bx$.

\noindent
\blue{
{\bf Contributions.}
In this paper, we propose a broad framework for sparse relaxed regularized regression, called \SR. 
The key idea of \SR~is to solve a regularized problem that has three advantages over the state-of-the-art: (1) solutions are superior with respect to errors, false positives, and conditioning, (2) relaxation allows extremely fast algorithms for both convex and nonconvex formulations, and (3) the methods apply to composite regularizers. 
Rigorous theoretical results supporting these claims are presented in Section~\ref{sec:method}.
We demonstrate the advantages of \SR~ (computational efficiency, higher accuracy, faster convergence rates, greater flexibility) across a range of regularized regression problems with synthetic and real data, including applications in compressed sensing, LASSO, matrix completion, TV regularization, and group sparsity using a range of 
test problems in Section~\ref{sec:results}. 
}



\section{\SR~Method}
\begin{figure*}
\centering
\begin{overpic}[width=0.3\textwidth]{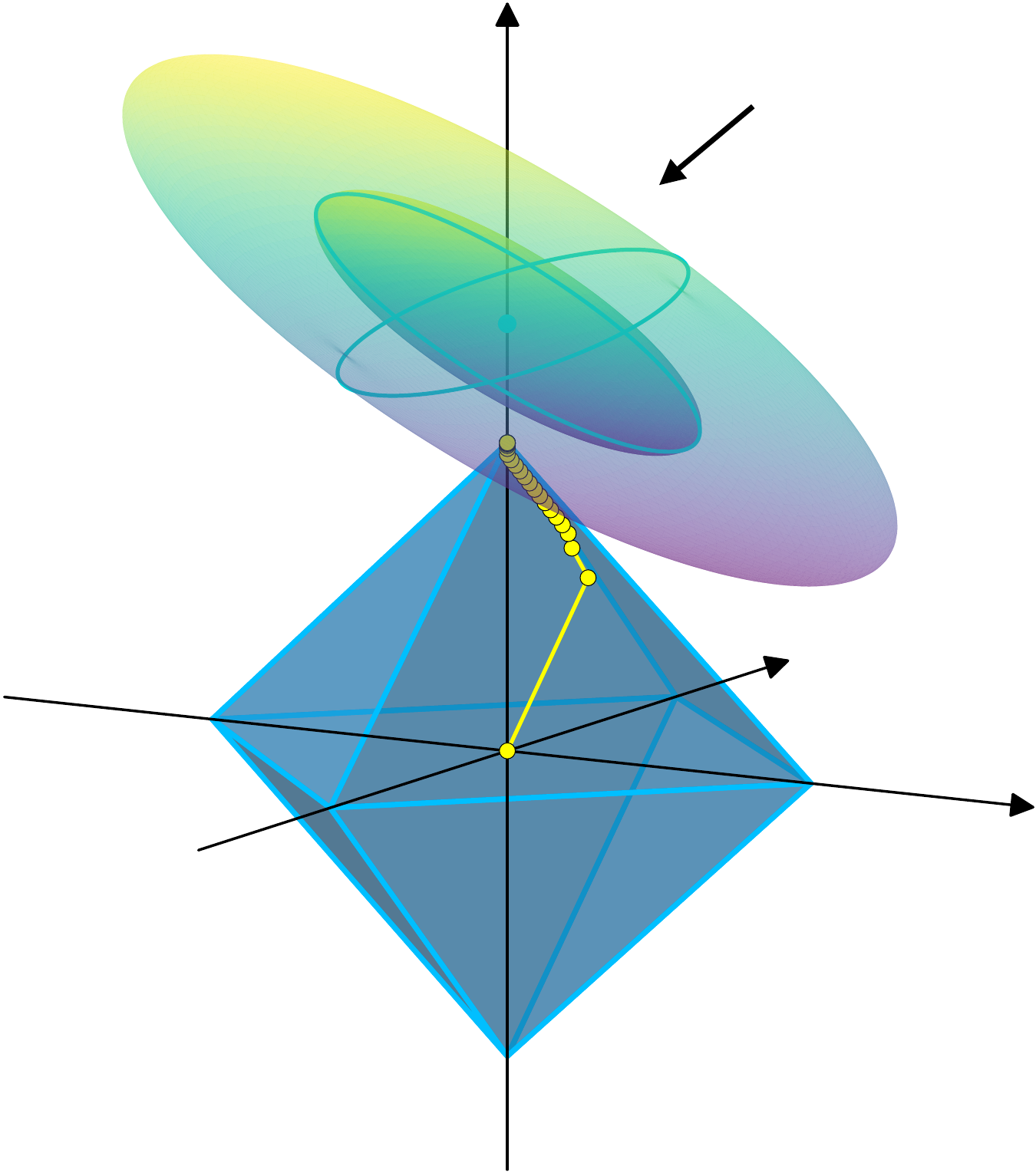} 
\put(80,25){$x_1$}
\put(70,42){$x_2$}
\put(32,95){$x_3$}
\put(50,8){original}
\put(50,2){coordinates}
\put(5,5){(a)}
\put(57,95){$\|\bA \bx - \bb\|^2$}
\end{overpic}\qquad\qquad\qquad
\begin{overpic}[width=0.32\textwidth]{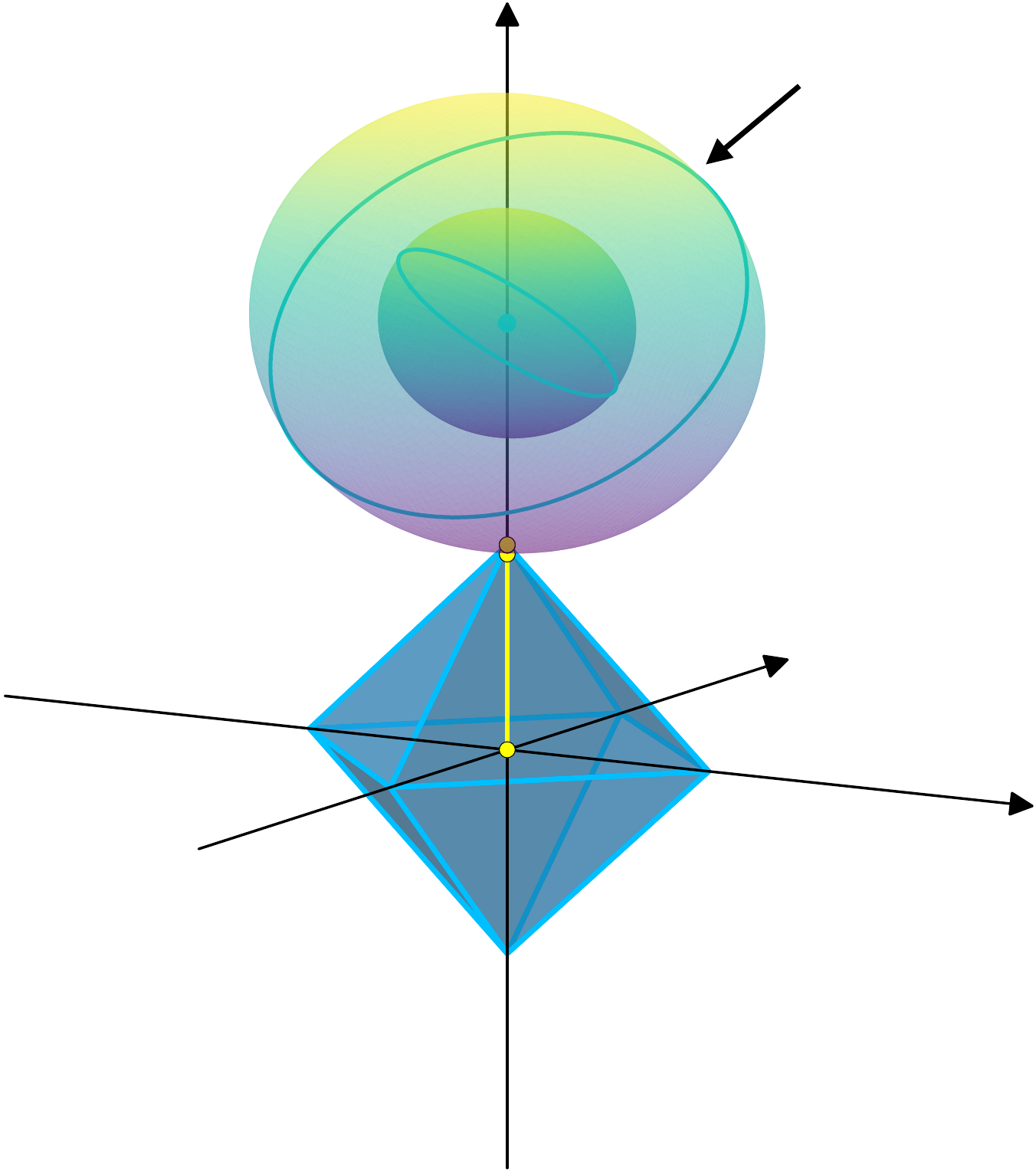} 
\put(80,25){$w_1$}
\put(70,42){$w_2$}
\put(32,95){$w_3$}
\put(50,8){relaxed}
\put(50,2){coordinates}
\put(5,5){(b)}
\put(57,95){$\|\bF_\kappa \bw - \bg_\kappa\|^2$}
\end{overpic}
\caption{\label{fig:lasso_main}
(a) Level sets (green ellipses) of the quadratic part of LASSO (\ref{eq:generic}) and corresponding path of prox-gradient to the solution (40 iterations) in $\bx$-coordinates. (b) Level sets (green spheres) of the quadratic part of the \SR~value function~(\ref{eq:value}) and corresponding \SR~solution path (2 iterations) in relaxed coordinates $\bw$.  Blue octahedra show the $\ell_1$ ball in each set of coordinates. 
\SR~decreases the singular values of $\bF_\kappa$ relative to those of $\bA$ with a weaker effect on the small ones,
`squashing' the level sets into approximate spheres, accelerating convergence, and improving performance.}
\end{figure*}

Our goal is to improve the robustness, computational efficiency, and accuracy of sparse and nonsmooth formulations.
We {\it relax}~\eqref{eq:generic} using an auxiliary variable \blue{$\bw\in\mathbb{R}^{n}$} that is forced to be close to $\mathbf{C}\bx$.
Relaxation was recently shown to be an efficient technique for dealing with the class of nonconvex-composite problems~\cite{zheng2018fast}.
The general \SR~formulation modifies~\eqref{eq:generic} to the following
\begin{equation}
\label{eq:generalxw}
\min_{\bx,\bw} \frac{1}{2}\|\bA\bx-\bb\|^2 + \lambda R(\bw) + \frac{\kappa}{2} \|\mathbf{C}\bx-\bw\|^2,
\end{equation}
where $\kappa$ is a relaxation parameter that controls the gap between $\bC\bx$ and $\bw$. 
Importantly, $\kappa$ controls both the strength of the improvements to the geometry/regularity of the relaxed problem relative to the original and the fidelity of the relaxed problem to the original.
To recover a relaxed version of LASSO, for example, we take $R(\cdot)= \|\cdot\|_1$ and $\mathbf{C} = \mathbf{I}$. 
The \SR~formulation allows non-convex $\ell_p$ ``norms'' with $p<1$, as well as smoothly clipped absolute deviation~(SCAD)~\cite{fan2001variable},
and easily handles linear composite regularizers. 
\blue{Two widely used examples that rely on compositions are compressed sensing formulations that use tight frames~\cite{donoho2006compressed}, 
and total variation (TV) regularization in image denoising~\cite{rudin1992nonlinear}.}

In the convex setting, the formulation~(\ref{eq:generalxw}) fits into a class of problems studied by Bauschke, Combettes, and Noll~\cite{bauschke2006joint}, 
who credit the natural alternating minimization algorithm to Acker and Prestel in 1980~\cite{acker1980convergence},
and the original alternating projections method to Cheney and Goldstein in 1959~\cite{cheney1959proximity}
and Von Neumann in 1950~\cite[Theorem 13.7]{von1950functional}.
The main novelty of the \SR~approach is in using~\eqref{eq:generalxw} to extract 
information from the $\bw$ variable. We also allow nonconvex regularizers $R(\cdot)$, 
using the structure of~\eqref{eq:generalxw} to simplify the analysis.

The success of \SR~stems from two key ideas.
First, {sparsity} and {accuracy} requirements are split between ${\bw}$ and ${\bx}$ in the formulation~\eqref{eq:generalxw}, 
relieving the pressure these competing goals put on $\bx$ in~\eqref{eq:generic}. 
Second, we can partially 
minimize~\eqref{eq:generalxw} in $\bx$ to obtain a function in $\bw$ alone, with nearly spherical level sets, 
in contrast to the elongated elliptical level sets of $\|\mathbf{A}\bx - \bb\|^2$. 
In $\bw$ coordinates, it is much easier to find the correct support. 
 Figure~\ref{fig:lasso_main}(b) illustrates this advantage of \SR~on the LASSO problem.

\subsection{\SR~and Value Function Optimization}
\label{sec:theory}
Associated with~\eqref{eq:generalxw} is a {\it value function} formulation that allows us to precisely characterize the relaxed framework.  Minimizing~\eqref{eq:generalxw} in $\bx$, we obtain the {value function}
\begin{equation}
\label{eq:value}
v(\bw) := \min_{\bx} \frac{1}{2}\|\bA\bx-\bb\|^2 + \frac{\kappa}{2} \|\mathbf{C}\bx-\bw\|^2. 
\end{equation}
\blue{
We assume that $\bH_\kappa = \bA^\top \bA + \kappa\bC^\top \bC $ is invertible. 
 Under this assumption, $\bx(\bw) = \bH_\kappa^{-1}\left(\bA^\top\bb + \kappa\bC^\top\bw\right)$ is unique.
We now define   
\begin{equation} \label{eq:Fdef}
\begin{aligned}
\bF_\kappa &= \begin{bmatrix}
\kappa \bA  \bH_\kappa^{-1}\bC^\top\\
\sqrt{\kappa}(\bI - \kappa \bC \bH_\kappa^{-1} \bC^\top)
\end{bmatrix},   && \bF_\kappa \in \mathbb{R}^{(m + n) \times n} \\
\bG_\kappa &= \begin{bmatrix}
\bI - \bA \bH_\kappa^{-1} \bA^\top\\
\sqrt{\kappa}\bC \bH_\kappa^{-1} \bA^\top
\end{bmatrix}, && \bG_\kappa \in \mathbb{R}^{(m + n) \times m} \\
\bg_\kappa &= \bG_\kappa \bb, && \bg_\kappa \in \mathbb{R}^{m+n}
\end{aligned}
\end{equation}
}
which gives a closed form for \eqref{eq:value}: 
\[
v(\bw) = \frac{1}{2}\|\bF_\kappa \bw - \bg_\kappa\|^2.
\]
Problem~\eqref{eq:generalxw}  then reduces to    
\begin{equation}
\label{eq:valueExp}
\min_{\bw} \frac{1}{2}\|\bF_\kappa \bw - \bg_\kappa\|^2 + \lambda R(\bw) \; .
\end{equation}
The ellipsoid in Fig.~\ref{fig:lasso_main}(a) shows the level sets of  $\|\bA\bx-\bb\|^2$, 
while the spheroid in Fig.~\ref{fig:lasso_main}(b) shows
the level sets of $\|\bF_\kappa \bw - \bg_\kappa\|^2$. 
Partial minimization improves the conditioning of the problem, as seen in Figure~\ref{fig:lasso_main},
and can be characterized by a simple theorem. 

\blue{Denote by $\sigma_i(\cdot)$ the function that returns the
  $i$-th largest singular value of the argument, with $\sigma_{\mathrm{max}}(\bA)$
  denoting the largest singular value $\sigma_1(\bA)$, and
   $\sigma_{\mathrm{min}}(\bA)$ denoting the smallest (reduced) singular
  value $\sigma_{\min(m,d)}(\bA)$.
  Let $\cond(\bA):= \sigma_{\mathrm{max}}(\bA)/\sigma_{\mathrm{min}}(\bA)$
  denote the condition number of $\bA$.}
\blue{The following result relates singular values of $\bF_\kappa$ to those of $\bA$ 
and $\bC$. Stronger results apply to the special cases $\bC= \bI$, 
which covers the Lasso, and $\bC^\top \bC = \bI$, 
which covers compressed sensing formulations with  
tight frames ($\bC = \mathbf{\Phi}^\top$ with $\mathbf{\Phi}\mathbf{\Phi}^\top = \bI$)~\cite{chen2001atomic,donoho2006compressed,elad2007analysis}.  
}
%
%
\begin{theorem}
\label{thm:sol}
When $\lambda = 0$, \eqref{eq:valueExp} and \eqref{eq:generic} share the same solution set.
\blue{
We also have the following relations: 
\begin{align}
\label{eq:FtF}
\mbf F_\kappa^\top \mbf F_\kappa &= \kappa \mbf I - \kappa^2 \bC \mbf H_\kappa^{-1} \bC^\top \\
\label{eq:svFtF}
\sigma_i(\mbf F_\kappa^\top \mbf F_\kappa) &= \kappa - \kappa^2\sigma_{n-i+1}(\bC \mbf H_\kappa^{-1} \bC^\top).
\end{align}
In addition, $\mbf 0 \preceq \mbf F_\kappa^\top \mbf F_\kappa \preceq \kappa \mbf I$ always, and
when $n \ge d$ and $\bC$ has full rank (i.e. $\bC^\top\bC$ is invertible), we have  
\[
\smin(\mbf F_\kappa^\top \mbf F_\kappa) \ge \frac{\smin(\mbf A^\top \mbf A)/\smax(\bC^\top \bC)}{1 + \smin(\mbf A^\top \mbf A)/(\kappa\smax(\bC^\top \bC))}.
\]
\blue{When $\bC = \mbf I$, we have}
\begin{align}
\label{eq:FtFCI}
\mbf F_\kappa^\top \mbf F_\kappa &= \mbf A^\top(\mbf I + \mbf A \mbf A^\top/\kappa)^{-1}\mbf A\\
\label{eq:svFtFCI}
\sigma_i(\mbf F_\kappa^\top \mbf F_\kappa) &=\frac{\sigma_i(\mbf A^\top \mbf A)}{1 + \sigma_i(\mbf A^\top \mbf A)/\kappa} \; ,
\end{align}
so that the condition numbers of $\bF_\kappa$
and $\bA$ are related by
\begin{equation} \label{eq:condf}
  \cond(\bF_\kappa) = \cond(\bA) \sqrt{ \frac{\kappa+\sigma_{\mathrm{min}}(\bA)^2}
    {\kappa+\sigma_{\mathrm{max}}(\bA)^2}} \; .
\end{equation}
}
\end{theorem}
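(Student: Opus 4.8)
The plan is to separate the optimal-value claim from the spectral identities, derive all the spectral results from the single identity \eqref{eq:FtF}, and then specialize.

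First, for the $\lambda = 0$ claim I would argue by exchanging the order of minimization in the joint problem \eqref{eq:generalxw}. Minimizing \eqref{eq:generalxw} over $\bw$ first sends the coupling term to zero (its minimizer is $\bw = \bC\bx$), leaving exactly \eqref{eq:generic}; minimizing over $\bx$ first produces $v(\bw)$ and hence \eqref{eq:valueExp}. Both orders share the same infimum, so the optimal values agree. To identify the solution sets I would check that $\bx^\star \mapsto \bC\bx^\star$ and $\bw^\star \mapsto \bx(\bw^\star)$ are mutually inverse bijections: if $\bw^\star$ minimizes $v$, nonnegativity of the coupling term forces $\bC\bx(\bw^\star) = \bw^\star$ and makes $\bx(\bw^\star)$ solve \eqref{eq:generic}; conversely, substituting the normal equation $\bA^\top\bA\bx^\star = \bA^\top\bb$ into $\bx(\bC\bx^\star) = \bH_\kappa^{-1}(\bA^\top\bb + \kappa\bC^\top\bC\bx^\star)$ collapses it to $\bH_\kappa^{-1}\bH_\kappa\bx^\star = \bx^\star$.

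Next, everything spectral flows from \eqref{eq:FtF}. I would obtain it by expanding $\bF_\kappa^\top\bF_\kappa$ from \eqref{eq:Fdef} and repeatedly substituting $\bA^\top\bA = \bH_\kappa - \kappa\bC^\top\bC$; the cross terms telescope and leave $\kappa\bI - \kappa^2\bC\bH_\kappa^{-1}\bC^\top$. Since $\bF_\kappa^\top\bF_\kappa$ is $\kappa\bI$ minus a scalar multiple of the symmetric positive semidefinite matrix $\bC\bH_\kappa^{-1}\bC^\top$, the two share an eigenbasis and \eqref{eq:svFtF} follows, the index reversal $i\leftrightarrow n-i+1$ coming from the decreasing map $t\mapsto \kappa-\kappa^2 t$. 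The sandwich $\mathbf{0}\preceq\bF_\kappa^\top\bF_\kappa\preceq\kappa\bI$ is then immediate: the left inequality is the Gram structure, and the right one is $\kappa^2\bC\bH_\kappa^{-1}\bC^\top\succeq\mathbf{0}$, which holds because $\bH_\kappa\succ\mathbf{0}$.

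The lower bound on $\smin(\bF_\kappa^\top\bF_\kappa)$ is the step I expect to be the main obstacle, and I would handle it with a whitening change of variables. Setting $\mathbf{P}=(\bC^\top\bC)^{1/2}$ and $\mathbf{Q}=\bC\mathbf{P}^{-1}$, one verifies $\mathbf{Q}^\top\mathbf{Q}=\bI$, so $\mathbf{Q}$ has orthonormal columns, and $\bH_\kappa = \mathbf{P}(\tilde{\bA}^\top\tilde{\bA}+\kappa\bI)\mathbf{P}$ with $\tilde{\bA}=\bA\mathbf{P}^{-1}$. Substituting gives $\bC\bH_\kappa^{-1}\bC^\top = \mathbf{Q}(\tilde{\bA}^\top\tilde{\bA}+\kappa\bI)^{-1}\mathbf{Q}^\top$, so the eigenvalues of $\bF_\kappa^\top\bF_\kappa$ are $\kappa$ (with multiplicity $n-d$, on $\ker\mathbf{Q}^\top$) together with $\sigma_i(\tilde{\bA}^\top\tilde{\bA})/(1+\sigma_i(\tilde{\bA}^\top\tilde{\bA})/\kappa)$. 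Since $t\mapsto t/(1+t/\kappa)$ is increasing and bounded above by $\kappa$, the reduced minimum is exactly this expression at $\sigma_{\min}(\tilde{\bA}^\top\tilde{\bA})$; a generalized Rayleigh-quotient estimate $\sigma_{\min}(\tilde{\bA}^\top\tilde{\bA})=\min_{\by\ne0}\by^\top\bA^\top\bA\by/(\by^\top\bC^\top\bC\by)\ge\smin(\bA^\top\bA)/\smax(\bC^\top\bC)$ combined with that monotonicity yields the claimed bound. The delicate points are verifying $\mathbf{Q}^\top\mathbf{Q}=\bI$ and accounting for the zero eigenvalues produced by the low-rank embedding.

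Finally, for $\bC=\bI$ we have $\mathbf{P}=\mathbf{Q}=\bI$, so \eqref{eq:FtF} reads $\kappa\bI-\kappa^2(\bA^\top\bA+\kappa\bI)^{-1}$; factoring and applying the push-through identity $\bA^\top(\bI+\bA\bA^\top/\kappa)^{-1}=(\bI+\bA^\top\bA/\kappa)^{-1}\bA^\top$ gives \eqref{eq:FtFCI}, and applying $t\mapsto t/(1+t/\kappa)$ to the eigenvalues of $\bA^\top\bA$ gives \eqref{eq:svFtFCI}. Because that map is monotone increasing, the extreme singular values of $\bF_\kappa$ correspond to those of $\bA$; writing $\cond(\bF_\kappa)=\sqrt{\sigma_{\max}(\bF_\kappa^\top\bF_\kappa)/\smin(\bF_\kappa^\top\bF_\kappa)}$, substituting \eqref{eq:svFtFCI} in numerator and denominator, and using $\sigma_i(\bA^\top\bA)=\sigma_i(\bA)^2$ simplifies to \eqref{eq:condf}.
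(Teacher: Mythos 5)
Your proposal is correct, and on the spectral side it follows essentially the paper's own route: the same expansion of $\bF_\kappa^\top\bF_\kappa$ using $\bA^\top\bA = \bH_\kappa - \kappa\bC^\top\bC$ to get \eqref{eq:FtF}, the same spectral-mapping and positive-semidefiniteness arguments for \eqref{eq:svFtF} and the sandwich $\mbf 0 \preceq \bF_\kappa^\top\bF_\kappa \preceq \kappa\mbf I$, a reduction of full-rank $\bC$ to the orthonormal-column case (you whiten with $\mathbf{P} = (\bC^\top\bC)^{1/2}$ where the paper uses the SVD $\bC = \mbf U_c \mbf\Sigma_c \mbf V_c^\top$; these are equivalent, since your $\tilde\bA^\top\tilde\bA$ is orthogonally similar to the paper's $\mbf\Sigma_c^{-1}\mbf V_c^\top \bA^\top\bA \mbf V_c \mbf\Sigma_c^{-1}$, and your Rayleigh-quotient step just makes explicit a bound the paper asserts), and the same push-through and monotonicity steps for \eqref{eq:FtFCI}--\eqref{eq:condf}. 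The genuine difference is the $\lambda = 0$ claim. The paper proves it by a direct calculation showing $\bF_\kappa^\top\bF_\kappa\bw - \bF_\kappa^\top\bg_\kappa = \kappa\bH_\kappa^{-1}(\bA^\top\bA\bw - \bA^\top\bb)$, i.e.\ the stationarity conditions of \eqref{eq:valueExp} and \eqref{eq:generic} coincide because the value-function gradient is a positive-definite preconditioning of the original gradient; you instead exchange the order of minimization in \eqref{eq:generalxw} and construct the mutually inverse maps $\bx^\star \mapsto \bC\bx^\star$ and $\bw^\star \mapsto \bx(\bw^\star)$ between solution sets. Each has an advantage: the paper's identity is the conceptual core of \SR~(it exhibits the $\bw$-problem as a reweighted version of the $\bx$-problem, and the same computation reappears in Lemma~\ref{lm:stationarity}), but as displayed its factorization $\kappa\mbf I - \kappa^2\bC\bH_\kappa^{-1}\bC^\top = \kappa\bH_\kappa^{-1}(\bH_\kappa - \kappa\mbf I)$ only typechecks when $\bC = \bI$; your variational argument works verbatim for arbitrary $\bC$, identifies genuine minimizers rather than stationary points, and gives the correct interpretation of ``same solution set'' when $\bx$ and $\bw$ live in different spaces.
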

%
Theorem~\ref{thm:sol} lets us interpret
\eqref{eq:valueExp} as a re-weighted version of the original problem \eqref{eq:generic}.
\blue{In the general case, the properties of $\bF$ depend on the interplay between $\bA$ and 
$\bC$. The re-weighted linear map $\bF_\kappa$ has superior 
properties to $\bA$ in special cases. Theorem~\ref{thm:sol} gives strong results for $\bC =\bI$, 
and we can derive analogous results when
  $\bC$ has orthogonal columns and full rank.}

\begin{corollary} \label{cor:tight_frame}
  \blue{Suppose that $\bC \in \R^{n\times d}$ with $n \geq d$
    and $\bC^\top \bC = \mbf I_d$.}
  \blue{Then, 
    \begin{equation}
      \sigma_i(\mbf F_\kappa) = \left \{ \begin{array}{lr}
        \sqrt{\kappa} \ \frac{\sigma_{i-(n-d)}(\mbf A)}{\sqrt{\kappa + \sigma_{i-(n-d)}(\mbf A)^2}} & i > n-d \\
        \sqrt{\kappa} & i \leq n-d
      \end{array} \right. .
    \end{equation}
    For $n > d$, this implies
  \begin{equation}
    \label{eq:condffc}
    \cond(\bF_\kappa) = \cond(\bA) \sqrt{ \frac{\kappa+\sigma_{\mathrm{min}}(\bA)^2}
      {\sigma_{\mathrm{max}}(\bA)^2}} \; .
  \end{equation}
  When $n=d$, this implies
  \begin{equation} \label{eq:condf}
    \cond(\bF_\kappa) = \cond(\bA) \sqrt{ \frac{\kappa+\sigma_{\mathrm{min}}(\bA)^2}
      {\kappa+\sigma_{\mathrm{max}}(\bA)^2}} \; .
  \end{equation}
  }
  \begin{proof}
    \blue{Let $\bar{\bC} = \begin{bmatrix} \bC &
        \bC^\perp\end{bmatrix}$ where the columns of
        $\bC^\perp$ form an orthonormal
        basis for the orthogonal complement of the
        range of $\bC$. Then, by Theorem~\ref{thm:sol},
        \begin{equation}
          \label{eq:cffc}
          \bar{\bC}^\top \bF_\kappa^\top \bF_\kappa \bar{\bC} = \begin{bmatrix} \mbf A^\top(\mbf I + \mbf A \mbf A^\top/\kappa)^{-1}\mbf A& \\ & \kappa \mbf I_{n-d} \end{bmatrix} \; .
        \end{equation}
The result follows from the second part of Theorem~\ref{thm:sol} .}
  \end{proof}
\end{corollary}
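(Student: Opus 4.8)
The plan is to diagonalize $\bF_\kappa^\top\bF_\kappa$ in the orthonormal frame built from $\bC$, reduce its nontrivial block to the already-solved $\bC=\bI$ case, and then read off both the singular values and the condition numbers by carefully ordering the resulting spectrum.

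First I would confirm that $\bar{\bC}=\begin{bmatrix}\bC & \bC^\perp\end{bmatrix}$ is an $n\times n$ orthogonal matrix: since $\bC^\top\bC=\bI_d$ and the columns of $\bC^\perp$ form an orthonormal basis for the orthogonal complement of the range of $\bC$, we have $\bar{\bC}^\top\bar{\bC}=\bI_n$ and $(\bC^\perp)^\top\bC=\mbf 0$. Feeding the general identity $\bF_\kappa^\top\bF_\kappa=\kappa\bI-\kappa^2\bC\bH_\kappa^{-1}\bC^\top$ from Theorem~\ref{thm:sol} into the conjugation $\bar{\bC}^\top(\cdot)\bar{\bC}$, the relations just noted force $\bar{\bC}^\top\bC$ to be $\bI_d$ stacked over a zero block, so the $\kappa^2$ term collapses onto the leading $d\times d$ corner while the trailing corner is left as $\kappa\bI_{n-d}$; this yields the block-diagonal form~\eqref{eq:cffc}. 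The key observation is that $\bC^\top\bC=\bI_d$ makes $\bH_\kappa=\bA^\top\bA+\kappa\bI_d$, so the leading block $\kappa\bI_d-\kappa^2\bH_\kappa^{-1}$ is \emph{exactly} the $\bC=\bI$ matrix $\bA^\top(\bI+\bA\bA^\top/\kappa)^{-1}\bA$.

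Next I would apply the $\bC=\bI$ spectral formula~\eqref{eq:svFtFCI} to conclude that the eigenvalues of $\bF_\kappa^\top\bF_\kappa$ are $\kappa\sigma_j(\bA)^2/(\kappa+\sigma_j(\bA)^2)$ for $j=1,\dots,d$, together with the value $\kappa$ repeated $n-d$ times from the trailing block. Taking square roots produces the candidate singular values of $\bF_\kappa$. The one genuinely delicate step is the ordering: since $t\mapsto \kappa t/(\kappa+t)$ is strictly increasing and strictly bounded above by $\kappa$, each block value $\sqrt{\kappa}\,\sigma_j(\bA)/\sqrt{\kappa+\sigma_j(\bA)^2}$ is strictly smaller than $\sqrt{\kappa}$. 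Hence the $n-d$ copies of $\sqrt{\kappa}$ are the largest singular values (indices $i\le n-d$), while the remaining values inherit the decreasing order of the $\sigma_j(\bA)$ by monotonicity, which is precisely what produces the index shift $i\mapsto i-(n-d)$ in the stated piecewise formula.

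Finally the condition numbers follow by substituting the extreme singular values. For $n>d$ the flat block is present, so $\smax(\bF_\kappa)=\sqrt{\kappa}$ while $\smin(\bF_\kappa)=\sqrt{\kappa}\,\sigma_d(\bA)/\sqrt{\kappa+\sigma_d(\bA)^2}$; here I would invoke $\sigma_d(\bA)=\smin(\bA)$, which tacitly requires $m\ge d$ so that $\bA$ has full column rank and the smallest reduced singular value is nonzero, and simplifying the ratio gives~\eqref{eq:condffc}. For $n=d$ the flat block disappears, both extremes come from the $\bC=\bI$ block, and the common $\sqrt{\kappa}$ prefactor cancels in the ratio to give~\eqref{eq:condf}. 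The main obstacle is thus not any sharp estimate but the careful bookkeeping of the ordering and of the reduced-singular-value convention; once~\eqref{eq:cffc} is in hand, the remainder is monotonicity and arithmetic.
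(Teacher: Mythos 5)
Your proposal is correct and follows essentially the same route as the paper: conjugating $\bF_\kappa^\top\bF_\kappa$ by the orthogonal completion $\bar{\bC}=\begin{bmatrix}\bC & \bC^\perp\end{bmatrix}$ to obtain the block-diagonal form~\eqref{eq:cffc} and then invoking the $\bC=\bI$ spectral formulas of Theorem~\ref{thm:sol}. You additionally spell out details the paper leaves implicit --- the eigenvalue ordering via monotonicity of $t\mapsto \kappa t/(\kappa+t)$ and the tacit full-column-rank requirement on $\bA$ behind $\sigma_d(\bA)=\smin(\bA)$ --- which is a fair and careful reading, not a different proof.
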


\blue{When $\bC$ is a square orthogonal matrix, }partial minimization of~\eqref{eq:value} 
shrinks the singular values of $\bF_\kappa$ relative to $\bA$,
with less shrinkage for  smaller singular values,
\blue{which gives a smaller condition number
  as seen in Figure~\ref{fig:lasso_main} for $\bC = \bI$.}
As a result, iterative methods
for \eqref{eq:valueExp}  converge much faster than
the same methods applied to \eqref{eq:generic}, 
especially for ill-conditioned $\bA$.
The geometry of the level sets of~\eqref{eq:valueExp} 
also encourages the discovery of sparse solutions;
see \blue{the path-to-solution for each formulation in}
Figure~\ref{fig:lasso_main}.
\blue{The amount of improvement depends on the size
  of $\kappa$, with smaller values of $\kappa$ giving better
  conditioned problems. For instance, consider setting
  $\kappa = (\sigma_{\mathrm{max}}(\bA)^2-\sigma_{\mathrm{min}}(\bA)^2)/\mu^2$
  for some $\mu > 1$.
  Then, by Corollary~\ref{cor:tight_frame},
  $\cond (\bF_\kappa) \leq 1 + \cond (\bA)/\mu$.}

%


%

%
%
%

  \subsection{Algorithms for the \SR~Problem}
  Problem \eqref{eq:valueExp} can be solved using a
 variety of algorithms, including the prox-gradient
 method detailed in Algorithm~\ref{alg:pg_x_gen}. 
 In the convex case, Algorithm~\ref{alg:pg_x_gen} is equivalent to the alternating method of~\cite{bauschke2006joint}. 
 The $\bw$ update is given by
\begin{equation}
\label{w-update}
\hat \bw^{k+1} = \prox_{\frac{\lambda}{\kappa} R}\left(\bw^k - \frac{1}{\kappa}\bF_\kappa^\top(\bF_\kappa \bw^k - \bg_\kappa)\right) \; ,
\end{equation}
where $\prox_{\frac{\lambda}{\kappa} R}$ is the {\it proximity operator} (prox) for $R$ (see e.g.~\cite{combettes2011proximal}) evaluated at $\bC \bx$.
\begin{algorithm}[t]
\caption{\SR~for~\eqref{eq:generalxw}}
\label{alg:pg_x_gen}
\begin{algorithmic}[1]
\State {\bfseries Input:} $\bw^0$
\State {\bfseries Initialize:} $k=0$, $\eta \leq \frac{1}{\kappa}$ 
\While{not converged}
\Let{$k$}{$k+1$}
\Let{$\bw^{k}$}{$\prox_{\eta\lambda R}(\bw^{k-1} - \eta \bF_\kappa^\top(\bF_\kappa \bw^{k-1} - \bg_\kappa))$}
\EndWhile
\State {\bfseries Output:} $\bw^k$
\end{algorithmic}
\end{algorithm}
The prox in Algorithm~\ref{alg:pg_x_gen} is
easy to evaluate for many important convex and nonconvex functions,
often taking the form of a separable atomic operator, i.e. the prox requires a simple computation for each individual entry of the
input vector. 
For example, $\prox_{\lambda \|\cdot\|_1}$ is the {\it soft-thresholding} (ST) operator:
\begin{equation}
\label{eq:ST}
\prox_{\lambda \|\cdot\|_1}(\bx)_i = \mbox{sign}(x_i) \max(|x_i|-\lambda, 0).
\end{equation}

Algorithm~\ref{alg:pg_x_gen} is the proximal gradient algorithm applied to~\eqref{eq:valueExp}.
It is useful to contrast it with the proximal gradient algorithm for the original problem~\eqref{eq:generic}, detailed in Algorithm~\ref{alg:pg}. 
\begin{algorithm}[t]
\caption{Prox-gradient~for~\eqref{eq:generic}}
\label{alg:pg}
\begin{algorithmic}[1]
\State {\bfseries Input:} $\bx^0$
\State {\bfseries Initialize:} $k=0, \eta \leq \frac{1}{\sigma_{\max}(\bA)^2}$
\While{not converged}
\Let{$k$}{$k+1$}
\Let{$\bx^{k}$}{$\prox_{\eta\lambda R(\bC\cdot)}(\bx^{k-1} - \eta\bA^\top(\bA\bx^{k-1} - b))$}
\EndWhile
\State {\bfseries Output:} $\bx^k$
\end{algorithmic}
\end{algorithm}
First, Algorithm~\ref{alg:pg} may be difficult to implement when $\bC \neq \bI$,
as the prox operator may no longer be separable or atomic. 
An iterative algorithm is required to evaluate 
\begin{equation}
\label{eq:STC}
\prox_{\lambda \|\bC\cdot\|_1}(\bx) = \arg\min_{\by} \frac{1}{2\lambda}\|\bx - \by\|^2  + \|\bC\by\|_1.
\end{equation}
\blue{
In contrast, Algorithm 1 always solves~\eqref{eq:valueExp}, which is regularized by $R(\bw)$ rather than a composition, 
 with $\bC$ affecting $\bF_\kappa$ and $\bg_\kappa$, see~\eqref{eq:Fdef}. This simple observation has important 
 consequences, since the prox-gradient method converges for a wide class of problems, including non-convex 
 regularizers~\cite{attouch2010proximal}. For regularized least squares problems specifically, we  
 derive a self-contained convergence theorem with a sublinear convergence rate. 
}

\blue{
\begin{theorem}[Proximal Gradient Descent for Regularized Least Squares]
\label{thm:general_lr}
Consider the linear regression objective,
\[
\min_{\bx}~p(\bx) := \frac{1}{2}\|\bA \bx - \bb\|^2 + \lambda R(\bx) \;,
\]
where $p$ is bounded below, so that 
\[
-\infty < p^* = \inf_{\bx} p(\bx),
\]
and $R$ may be nonsmooth and nonconvex.
With step $t=1/\sigma_\mathrm{max}(\bA)^2$, the
iterates generated by Algorithm~\ref{alg:pg}
satisfy 
\[
\bv_{k+1} := (\|\bA\|_2^2 \mathbf I - \bA^\top\bA)(\bx_k - \bx_{k+1}) \in \partial p(\bx_{k+1}),
\]
i.e. $\bv_{k+1}$ is an element of the subdifferential of $p(\bx)$ at the point $\bx_{k+1}$\footnote{For nonconvex problems, the subdifferential must be carefully defined; see
the preliminaries in the Appendix.}, and
\[
\min_{k = 0, \dots N} \|\bv_{k+1}\|^2 \leq  \frac{1}{N}\sum_{k=0}^{N-1}\|\bv_{k+1}\|^2 \le \frac{\|\bA\|_2^2}{N}(p(\bx_0) - p^*) \;.
\]
Therefore Algorithm~\ref{alg:pg} converges at a sublinear rate to a stationary point of $p$.
\end{theorem}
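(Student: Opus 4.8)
The plan is to establish the three claims—the subdifferential inclusion, the telescoped bound, and the resulting convergence—in that order. Throughout I write $f(\bx)=\tfrac12\|\bA\bx-\bb\|^2$, so that $\nabla f(\bx)=\bA^\top(\bA\bx-\bb)$ has Lipschitz constant $\|\bA\|_2^2=\smax(\bA)^2$ and, crucially, $f$ is \emph{exactly} quadratic, i.e. $f(\by)=f(\bx)+\langle\nabla f(\bx),\by-\bx\rangle+\tfrac12\|\bA(\by-\bx)\|^2$ with no error term. I fix the step $t=1/\|\bA\|_2^2$ and set $\mathbf{M}=\|\bA\|_2^2\bI-\bA^\top\bA$, which satisfies $\mbf 0\preceq\mathbf{M}\preceq\|\bA\|_2^2\bI$ and $\bv_{k+1}=\mathbf{M}(\bx_k-\bx_{k+1})$.

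For the inclusion, the update makes $\bx_{k+1}$ a (global) minimizer of the model $m_k(\by):=f(\bx_k)+\langle\nabla f(\bx_k),\by-\bx_k\rangle+\tfrac{1}{2t}\|\by-\bx_k\|^2+\lambda R(\by)$. Fermat's rule for the limiting subdifferential, with the sum rule (valid since the quadratic part is $C^1$), gives $\mbf 0\in\tfrac1t(\bx_{k+1}-\bx_k)+\nabla f(\bx_k)+\lambda\partial R(\bx_{k+1})$. I then add $\nabla f(\bx_{k+1})$ to both sides and invoke $\partial p(\bx_{k+1})=\nabla f(\bx_{k+1})+\lambda\partial R(\bx_{k+1})$; using $\nabla f(\bx_{k+1})-\nabla f(\bx_k)=-\bA^\top\bA(\bx_k-\bx_{k+1})$ and $\tfrac1t=\|\bA\|_2^2$, the expression collapses to exactly $(\|\bA\|_2^2\bI-\bA^\top\bA)(\bx_k-\bx_{k+1})=\bv_{k+1}\in\partial p(\bx_{k+1})$.

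For the decrease, the exact quadratic identity yields the clean relation $m_k(\by)-p(\by)=\tfrac12(\by-\bx_k)^\top\mathbf{M}(\by-\bx_k)$. Evaluating at $\by=\bx_{k+1}$ and $\by=\bx_k$ and using that $\bx_{k+1}$ minimizes $m_k$ gives a sufficient decrease controlled by $(\bx_k-\bx_{k+1})^\top\mathbf{M}(\bx_k-\bx_{k+1})$; sharpening this with the $\tfrac1t$-strong convexity of the quadratic part of $m_k$ supplies the extra $\tfrac{1}{2t}\|\bx_k-\bx_{k+1}\|^2$ needed to reach the stated constant. I then convert to $\|\bv_{k+1}\|^2$: in the eigenbasis of $\bA^\top\bA$, the eigenvalues $\nu\in[0,\|\bA\|_2^2]$ of $\mathbf{M}$ satisfy $\tfrac{\nu}{2}+\tfrac{1}{2t}\ge t\nu^2$, which is the operator statement $\tfrac12\mathbf{M}+\tfrac{1}{2t}\bI\succeq t\mathbf{M}^2$ and hence $p(\bx_k)-p(\bx_{k+1})\ge\tfrac{1}{\|\bA\|_2^2}\|\bv_{k+1}\|^2$. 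Summing over $k=0,\dots,N-1$ telescopes the left-hand side to $p(\bx_0)-p(\bx_N)\le p(\bx_0)-p^*$, and $\min_k a_k\le\tfrac1N\sum_k a_k$ delivers the minimum form.

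Finally, the bound forces $\sum_k\|\bv_{k+1}\|^2<\infty$, so $\|\bv_{k+1}\|\to0$ and $\min_{k\le N}\|\bv_{k+1}\|^2=O(1/N)$ is the advertised sublinear rate; since $\bv_{k+1}\in\partial p(\bx_{k+1})$, outer semicontinuity (graph closedness) of the limiting subdifferential, applied along any convergent subsequence $\bx_{k_j}\to\bx^*$ with $p(\bx_{k_j})\to p(\bx^*)$, gives $\mbf 0\in\partial p(\bx^*)$, i.e. stationarity. I expect the main obstacle to lie entirely in the nonsmooth/nonconvex bookkeeping rather than the algebra: one must use the \emph{limiting} (Mordukhovich) subdifferential so that Fermat's rule, the smooth-plus-nonsmooth sum rule, and graph closedness are all legitimate, and—most delicately—the sharp per-step constant relies on the strong-convexity refinement of the model descent, which is immediate when $R$ is convex but for genuinely nonconvex $R$ must be argued through prox-regularity of $R$ (otherwise the minimizer property alone yields the same rate up to a factor of two).
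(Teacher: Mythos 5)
Your proposal follows the same route as the paper's proof: Fermat's rule plus linearity of $\nabla f$ for the inclusion $\bv_{k+1}\in\partial p(\bx_{k+1})$, the exact quadratic expansion of $\tfrac12\|\bA\bx-\bb\|^2$ combined with the prox-step comparison for the per-iteration decrease, and telescoping. The place where you are more careful than the paper is, in fact, a place where the paper's own write-up is loose. Writing $\bd_k=\bx_k-\bx_{k+1}$ and $\mathbf{M}=\|\bA\|_2^2\mathbf{I}-\bA^\top\bA$, the basic comparison $m_k(\bx_{k+1})\le m_k(\bx_k)$ (the only property of the prox step valid for arbitrary nonconvex $R$) yields
\[
p(\bx_k)-p(\bx_{k+1})\;\ge\;\tfrac12\,\ip{\bd_k,\mathbf{M}\bd_k},
\]
whereas the paper asserts this inequality without the factor $\tfrac12$ and then deduces $\|\bA\|_2^2\|\bd_k\|^2\le p(\bx_k)-p(\bx_{k+1})$, an implication that runs the wrong way since $\mathbf{M}\preceq\|\bA\|_2^2\mathbf{I}$. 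Consequently, for general nonconvex $R$ this argument only certifies the rate with constant $2\|\bA\|_2^2$; your diagnosis that the advertised constant $\|\bA\|_2^2$ requires the strong-convexity refinement $m_k(\bx_k)\ge m_k(\bx_{k+1})+\tfrac{1}{2t}\|\bd_k\|^2$ --- valid for convex (or prox-regular) $R$, false for, e.g., $\ell_0$ --- is exactly right, and your spectral inequality $\tfrac12\mathbf{M}+\tfrac{1}{2t}\mathbf{I}\succeq t\mathbf{M}^2$ correctly recovers the stated constant in that case. Your closing use of outer semicontinuity of the limiting subdifferential to convert $\|\bv_{k+1}\|\to 0$ into stationarity of accumulation points is likewise a step the paper leaves implicit. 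In short: same approach, but your version supplies the nonsmooth bookkeeping and pinpoints the constant-factor gap that the paper's proof glosses over.
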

Theorem~\ref{thm:general_lr} always applies to the \SR~approach, which uses value function~\eqref{eq:valueExp}. 
When $\bC = \bI$, we can also compare the convergence rate of Algorithm~\ref{alg:pg_x_gen} for~\eqref{eq:valueExp}
to the rate for Algorithm~\ref{alg:pg} for~\eqref{alg:pg}.
}
In particular, the rates of Algorithm~\ref{alg:pg_x_gen} are independent of $\bA$ when $\bA$ does not have full rank, 
and depend only weakly on $\bA$ when $\bA$ has  full rank,  as detailed in Theorem~\ref{col:cvx}.   
\begin{theorem}
\label{col:cvx}
Suppose that $\bC = \bI$.
Let
  $\bx^*$ and $\bw^*$ denote the minimum values of
  $p_x(\bx) := \frac{1}{2}\|\bA \bx - \bb \|^2 + R(\bx)$ and
  $p_w(\bw) := \frac{1}{2}\|\bF_\kappa\bw-\bg_\kappa \|^2 + R(\bw)$, respectively.
  Let $\bx_k$ denote the iterates of Algorithm~\ref{alg:pg} applied to $p_x$, and 
$\bw_k$ denote the iterates of Algorithm~\ref{alg:pg_x_gen} applied to $p_w$, 
 with step sizes  $\eta_x = \frac{1}{\sigma_\mathrm{max}(\bA)^2}$ and $\eta_w = \frac{1}{\sigma_\mathrm{max}(\bF_\kappa)^2}$. 
The iterates always satisfy 
  \[
  \begin{aligned} \bv^x_{k+1} & = (\|\bA\|_2^2 \mathbf I - \bA^\top\bA)(\bx_k - \bx_{k+1}) \in \partial p_x(\bx_{k+1}) \\
\bv^w_{k+1} &= (\kappa \mathbf I - \bF^\top\bF)(\bw_k - \bw_{k+1}) \in \partial p_w(\bw_{k+1}).
\end{aligned}
\]
\blue{
For general $R$ and any $\bA$ we have the following rates: 
  \begin{align*}
    \frac{1}{N}\sum_{k=0}^{N-1}\|\bv_{k+1}^x\|^2 &\le \frac{\|\bA\|_2^2}{N}(p_x(\bx_0) - p_x^*)\\
    \frac{1}{N}\sum_{k=0}^{N-1}\|\bv_{k+1}^w\|^2 &\le \frac{\kappa}{N}(p_w(\bx_0) - p_w^*).
  \end{align*}
}
For convex $R$ and any $\bA$ we also have
\begin{align*}
\frac{p_x(\bx) - p_x(\bx^*)}{\|\bx^0 - \bx^*\|^2} &\le \frac{\sigma_\mathrm{max}(\bA)^2}{2(k+1)}\\
\frac{p_w(\bw) - p_w(\bw^*)}{\|\bw^0 - \bw^*\|^2} &\le \frac{\sigma_\mathrm{max}(\bF_\kappa)^2 }{2(k+1)} \\
&\blue{ \le \frac{\frac{\sigma_{\mathrm{max}}(\bA)^2}{1 + \sigma_{\mathrm{max}}(\bA)^2/\kappa}}{2(k+1)} \le \frac{\kappa}{2(k+1)}.}
\end{align*}

For convex $R$ and $\bA$ with full rank, we also have 
\begin{align*}
\frac{\|\bx^k - \bx^*\|^2}{\|\bx^0 - \bx^*\|^2} &\le \left(1 - \frac{\sigma_\mathrm{min}(\bA)^2}{\sigma_\mathrm{max}(\bA)^2}\right)^k\\
\frac{\|\bw^k - \bw^*\|^2}{\|\bw^0 - \bw^*\|^2} &\le \left(1 - \frac{\sigma_\mathrm{min}(\bA)^2}{\sigma_\mathrm{max}(\bA)^2}\frac{\sigma_\mathrm{max}(\bA)^2 + \kappa}{\sigma_\mathrm{min}(\bA)^2 + \kappa}\right)^k
\end{align*}
\end{theorem}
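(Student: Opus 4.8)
The plan is to observe that $p_x$ and $p_w$ are two instances of one template,
\[
\min_{\by}\ \tfrac12\|\mbf{M}\by-\bc\|^2 + R(\by),
\]
attacked by prox-gradient with a constant step, so that every claim follows by applying a single master convergence result to $\mbf{M}=\bA$ (for $p_x$) and $\mbf{M}=\bF_\kappa$ (for $p_w$), and then translating the spectral quantities of $\bF_\kappa$ back to those of $\bA$ via Theorem~\ref{thm:sol}. The only properties of the quadratic $f(\by)=\tfrac12\|\mbf{M}\by-\bc\|^2$ the arguments use are that its gradient is $L$-Lipschitz with $L=\smax(\mbf{M})^2$ and, when $\mbf{M}$ has full column rank, that $f$ is $\mu$-strongly convex with $\mu=\smin(\mbf{M})^2$.

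First I would dispatch the subgradient identity and the general (possibly nonconvex) $R$ rate. Writing the optimality condition of $\by_{k+1}=\prox_{\eta R}(\by_k-\eta\nabla f(\by_k))$ gives $\tfrac1\eta(\by_k-\by_{k+1})-\nabla f(\by_k)\in\partial R(\by_{k+1})$, and adding $\nabla f(\by_{k+1})-\nabla f(\by_k)=\mbf{M}^\top\mbf{M}(\by_{k+1}-\by_k)$ yields $(\tfrac1\eta\mathbf{I}-\mbf{M}^\top\mbf{M})(\by_k-\by_{k+1})\in\partial p(\by_{k+1})$. For $\mbf{M}=\bA$ with $\eta=1/\|\bA\|_2^2$ this is exactly $\bv_{k+1}^x$, and the averaged bound is precisely Theorem~\ref{thm:general_lr}. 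For $\mbf{M}=\bF_\kappa$ I would invoke the same theorem, but with the conservative yet valid smoothness bound $L=\kappa\ge\smax(\bF_\kappa)^2$ (guaranteed by $\bF_\kappa^\top\bF_\kappa\preceq\kappa\mathbf{I}$ in Theorem~\ref{thm:sol}); with the step $\eta_w=1/\kappa$ built into Algorithm~\ref{alg:pg_x_gen} this produces $\bv_{k+1}^w=(\kappa\mathbf{I}-\bF_\kappa^\top\bF_\kappa)(\bw_k-\bw_{k+1})\in\partial p_w(\bw_{k+1})$ together with the stated $\kappa$-rate.

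For convex $R$ I would use the standard fundamental prox-gradient inequality (e.g.\ Beck--Teboulle): with step $1/L$, $p(\by_k)-p(\by^*)\le\frac{L}{2(k+1)}\|\by_0-\by^*\|^2$, applied with $L=\smax(\bA)^2$ and $L=\smax(\bF_\kappa)^2$, respectively. The chain of inequalities in the $\bw$-bound then comes from \eqref{eq:svFtFCI}: since $s\mapsto s/(1+s/\kappa)$ is increasing, the extremal singular values map monotonically, so $\smax(\bF_\kappa)^2=\smax(\bA)^2/(1+\smax(\bA)^2/\kappa)$, and the final step $\le\kappa$ is just $\tfrac{\kappa s}{\kappa+s}\le\kappa$. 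For the last block (convex $R$, full-rank $\bA$), $f$ is $\mu$-strongly convex, so the forward--backward operator contracts with factor $1-\mu/L$ at step $1/L$, giving $\|\by_k-\by^*\|^2\le(1-\mu/L)^k\|\by_0-\by^*\|^2$ (one in fact gets the square of this, but the weaker form is what is stated). For $\mbf{M}=\bA$ this is $1-\smin(\bA)^2/\smax(\bA)^2$; for $\mbf{M}=\bF_\kappa$ the same monotone formula gives $\smin(\bF_\kappa)^2=\smin(\bA)^2/(1+\smin(\bA)^2/\kappa)$, whence
\[
\frac{\mu_w}{L_w}=\frac{\smin(\bA)^2}{\smax(\bA)^2}\cdot\frac{\kappa+\smax(\bA)^2}{\kappa+\smin(\bA)^2},
\]
which is the claimed contraction and, as the last factor exceeds $1$, is strictly larger than $\mu_x/L_x$, proving the faster rate.

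I expect no deep obstacle: the nonconvex case is carried by the already-established Theorem~\ref{thm:general_lr} and the convex case by textbook prox-gradient estimates. The part needing care is the spectral bookkeeping, namely verifying from \eqref{eq:svFtFCI} that the extremal singular values of $\bF_\kappa$ are the monotone images of those of $\bA$ so that $\smax,\smin$ are correctly identified, and keeping the two step-size conventions straight, since the nonconvex stationarity bound uses the conservative $\eta_w=1/\kappa$ whereas the convex rates use the tighter $\eta_w=1/\smax(\bF_\kappa)^2$.
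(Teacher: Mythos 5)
Your proposal is correct and follows essentially the same route as the paper, whose proof is the single line ``immediate from combining Theorem~\ref{thm:general_lr} and Theorem~\ref{thm:sol}'' (with the convex rates supplied by the standard prox-gradient results recorded as Theorem~\ref{thm:pg} in the appendix). You simply fill in the details the paper leaves implicit—reusing the Theorem~\ref{thm:general_lr} argument with the conservative bound $\kappa \ge \smax(\bF_\kappa)^2$, invoking the textbook sublinear and linear rates, and doing the spectral bookkeeping via \eqref{eq:svFtFCI}—and you correctly flag the paper's own step-size inconsistency between the stated $\eta_w = 1/\smax(\bF_\kappa)^2$ and the $\kappa$-form of $\bv^w_{k+1}$.
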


\blue{
When $\bC^\top \bC = \bI$,  algorithm~\ref{alg:pg} may not be implementable. However, \SR~is implementable, 
with rates equal to those for the $\bC=\bI$ case when $n=d$ and with rates as in the following corollary
when $n > d$.
\begin{corollary} When $\bC^\top \bC = \bI$ and $n > d$, let $\bw^*$ denote the minimum value of
  $p_w(\bw) := \frac{1}{2}\|\bF_\kappa\bw-\bg_\kappa \|^2 + R(\bw)$, and let 
$\bw_k$ denote the iterates of Algorithm~\ref{alg:pg_x_gen} applied to $p_w$, 
 with step size  $\eta_w = \frac{1}{\kappa}$.
The iterates always satisfy 
\[
\bv^w_{k+1} = (\kappa \mathbf I - \bF^\top\bF)(\bw_k - \bw_{k+1}) \in \partial p_w(\bw_{k+1}).
\]
For general $R$ and any $\bA$ we have the following rates: 
  \begin{align*}
    \frac{1}{N}\sum_{k=0}^{N-1}\|\bv_{k+1}^w\|^2 &\le \frac{\kappa}{N}(p_w(\bx_0) - p_w^*).
  \end{align*}
For convex $R$ and any $\bA$ we also have
\begin{align*}
\frac{p_w(\bw) - p_w(\bw^*)}{\|\bw^0 - \bw^*\|^2} &\le \frac{\kappa}{2(k+1)} 
\end{align*}
For convex $R$ and $\bA$ with full rank, we also have 
\begin{align*}
\frac{\|\bw^k - \bw^*\|^2}{\|\bw^0 - \bw^*\|^2} &\le \left(1 -\frac{\sigma_{\min}(\bA^\top\bA)}{\kappa + \sigma_{\min}(\bA^\top\bA)}\right)^k
\end{align*}
\label{cor:ctcrates}
\end{corollary}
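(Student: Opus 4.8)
The plan is to reduce everything to two spectral facts about $\bF_\kappa$ and then invoke the convergence machinery already developed in Theorem~\ref{thm:general_lr} together with standard proximal-gradient theory for the convex estimates. The central observation is that, because $n > d$, Corollary~\ref{cor:tight_frame} produces $n - d \geq 1$ singular values of $\bF_\kappa$ equal to $\sqrt{\kappa}$ (from the directions in the orthogonal complement of the range of $\bC$), while every remaining singular value has the form $\sqrt{\kappa}\,\sigma_{j}(\bA)/\sqrt{\kappa + \sigma_j(\bA)^2} < \sqrt{\kappa}$. Hence $\sigma_{\max}(\bF_\kappa)^2 = \|\bF_\kappa\|_2^2 = \kappa$ exactly, which is precisely what makes the step size $\eta_w = 1/\kappa = 1/\|\bF_\kappa\|_2^2$ the natural choice in Algorithm~\ref{alg:pg_x_gen}.

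First I would record this fact and then apply Theorem~\ref{thm:general_lr} verbatim to the regularized least-squares objective $p_w(\bw) = \frac{1}{2}\|\bF_\kappa\bw - \bg_\kappa\|^2 + \lambda R(\bw)$, with $\bF_\kappa$ playing the role of $\bA$. Since $\|\bF_\kappa\|_2^2 = \kappa$, the subgradient surrogate $(\|\bF_\kappa\|_2^2\bI - \bF_\kappa^\top\bF_\kappa)(\bw_k - \bw_{k+1})$ becomes $(\kappa\bI - \bF_\kappa^\top\bF_\kappa)(\bw_k-\bw_{k+1})$, giving the stated inclusion $\bv^w_{k+1}\in\partial p_w(\bw_{k+1})$, and the theorem's bound immediately yields the general-$R$ rate $\frac{1}{N}\sum_{k=0}^{N-1}\|\bv^w_{k+1}\|^2 \le \frac{\kappa}{N}(p_w(\bw_0) - p_w^*)$. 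This settles the first two claims with no additional work.

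For the convex claims I would treat $p_w$ as a composite convex function whose smooth part $\frac{1}{2}\|\bF_\kappa\bw-\bg_\kappa\|^2$ has gradient with Lipschitz constant $L = \sigma_{\max}(\bF_\kappa)^2 = \kappa$. The standard proximal-gradient sublinear bound with step $1/L$ then gives $p_w(\bw^k) - p_w(\bw^*) \le \frac{\kappa}{2(k+1)}\|\bw^0 - \bw^*\|^2$. For the linear rate under full-rank $\bA$, the second spectral ingredient is the smallest singular value: Corollary~\ref{cor:tight_frame} gives $\sigma_{\min}(\bF_\kappa)^2 = \kappa\,\sigma_{\min}(\bA)^2/(\kappa + \sigma_{\min}(\bA)^2) = \kappa\,\sigma_{\min}(\bA^\top\bA)/(\kappa + \sigma_{\min}(\bA^\top\bA))$, which is strictly positive since every $\sigma_j(\bA) > 0$, so $\bF_\kappa^\top\bF_\kappa \succ 0$ and $p_w$ is $\mu$-strongly convex with $\mu = \sigma_{\min}(\bF_\kappa)^2$. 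The contraction factor for proximal gradient on an $L$-smooth, $\mu$-strongly-convex objective with step $1/L$ is $1 - \mu/L$, and here $\mu/L = \sigma_{\min}(\bA^\top\bA)/(\kappa + \sigma_{\min}(\bA^\top\bA))$, which is exactly the stated rate.

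The only real subtlety, rather than an obstacle, is the role of the hypothesis $n > d$: it is what guarantees the orthogonal complement of $\mathrm{range}(\bC)$ is nontrivial, forcing $\sigma_{\max}(\bF_\kappa)^2 = \kappa$ and thereby justifying both the step size $1/\kappa$ and the clean constants in all four bounds. When $n = d$ this fails and $\sigma_{\max}(\bF_\kappa)^2 = \sigma_{\max}(\bA)^2/(1 + \sigma_{\max}(\bA)^2/\kappa) < \kappa$, which is exactly why that case inherits the sharper $\bC = \bI$ rates of Theorem~\ref{col:cvx} instead. I would therefore flag the $n > d$ assumption explicitly at the point where $\|\bF_\kappa\|_2^2 = \kappa$ is invoked, so that the distinction from the $n=d$ regime is transparent.
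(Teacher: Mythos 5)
Your proposal is correct and follows exactly the paper's own route: the paper's proof is the one-line remark that the result is immediate from combining Theorem~\ref{thm:general_lr} and Corollary~\ref{cor:tight_frame}, and your argument simply spells out that combination (using $\sigma_{\max}(\bF_\kappa)^2=\kappa$ when $n>d$ for the step size and general-$R$ rate, and $\sigma_{\min}(\bF_\kappa)^2=\kappa\,\sigma_{\min}(\bA^\top\bA)/(\kappa+\sigma_{\min}(\bA^\top\bA))$ with standard proximal-gradient theory for the convex rates). Your closing observation about why $n>d$ forces $\|\bF_\kappa\|_2^2=\kappa$, in contrast to the $n=d$ case, is a useful clarification the paper leaves implicit.
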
 
}

Algorithm~\ref{alg:pg_x_gen} can be used with both convex and nonconvex regularizers, 
as long as the prox operator of the regularizer is available.
\blue{
 A growing list of proximal 
operators is reviewed by~\cite{combettes2011proximal}. 
Notable nonconvex prox operators in the literature include 
(1) indicator of set of rank $r$ matrices, (2) spectral functions (with proximable outer functions)~\cite{drusvyatskiy2015variational,lewis1999nonsmooth}, 
(3) indicators of unions of convex sets (project onto each and then choose the closest point), 
(4) MCP penalty~\cite{zhang2010nearly}, 
(5) firm-thresholding penalty~\cite{gao1997waveshrink}, and 
(6) indicator functions of finite sets (e.g., $x \in \{-1, 0, 1\}^d$).
Several nonconvex prox operators specifically used in sparse regression are detailed in the next section. 
}

\subsection{\blue{Nonconvex Regularizers and Constraints}}

\begin{figure}[!t]
	\centering
	\begin{subfigure}[t]{0.23\textwidth}
		\begin{center}
		\includegraphics{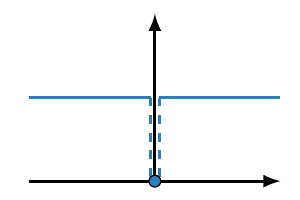}
		\end{center}
		\caption{$\ell_0$ norm.}
	\end{subfigure}
	\begin{subfigure}[t]{0.23\textwidth}
		\begin{center}
		\includegraphics{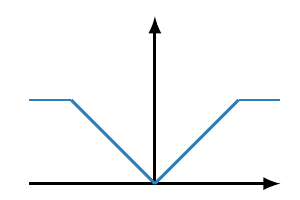}
		\end{center}
		\caption{Clipped absolute deviation.}
	\end{subfigure}\\
	\begin{subfigure}[t]{0.23\textwidth}
		\begin{center}
		\includegraphics{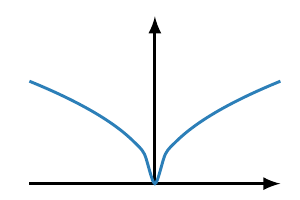}
		\end{center}
		\caption{$\ell_p$ norm ($p  = \frac{1}{2}$).}
	\end{subfigure}
	\begin{subfigure}[t]{0.23\textwidth}
		\begin{center}
		\includegraphics{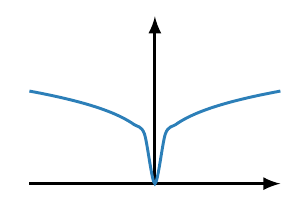}
		\end{center}
		\caption{$\ell_p$ norm ($p = \frac{1}{4}$).}
	\end{subfigure}
	\caption{Nonconvex sparsity promoting regularizers.}
	\label{Fig:geo_regularizers}
\end{figure}

\subsubsection{Nonconvex Regularizers: $\ell_0$.}
The 1-norm is often used as a convex alternative to $\ell_0$, 
defined by $\| {\bx} \|_0 = | \{ i: x_i \ne 0 \} |$, \blue{see panel (a) of Figure~\ref{Fig:geo_regularizers}}.
The nonconvex $\ell_0$ has a simple prox --- hard thresholding (HT)~\cite{blumensath2009iterative}, \blue{see Table~\ref{table:prox}}.
The \SR~formulation with the $\ell_0$ regularizer uses HT instead of the ST
operator \eqref{eq:ST} in line 5 of Algorithm~\ref{alg:pg_x_gen}.

%

\begin{table*}[h]
\centering
\blue{
\begin{tabular}{c | c | c | c }
$R(\bm x)$ & $r(x)$ & $\prox_{\alpha r}(z)$ &  Solution \\ \hline\hline
$\|\bm x\|_1$ & $|x|$ & $\begin{cases} \sign(z)(|z| - \alpha), &|z| > \alpha\\0, &|z| \le \alpha \end{cases}$ & Analytic\\ \hline
$\|\bm x\|_0$ & $\begin{cases} 1, & x \ne 0\\ 0, & x = 0 \end{cases}$ & $\begin{cases} 0, & |z| \le \sqrt{2\alpha}\\ z, & |z| > \sqrt{2\alpha} \end{cases}$ & Analytic \\ \hline
$\|\bm x\|_p^p$ $(p < 1)$ & $|x|^p$ & see Appendix & Coordinate-wise Newton \\ \hline
$\mbox{CAD}(\bm x; \rho)$ & $\begin{cases} |x|, & |x| \le \rho \\ \rho, & |x| > \rho \end{cases}$ & $\begin{cases} z, & |z| > \rho \\ \sign(z)(|z| - \alpha), & \alpha<|z|\le\rho  
\\ 0, &|z| \le \alpha \end{cases}$ & Analytic\\ \hline\hline
\end{tabular}
}
\caption{\label{table:prox}\blue{Proximal operators of sparsity-promoting regularizers.}}
\end{table*}

\subsubsection{\blue{Nonconvex Regularizers: $\ell_p^p$ for $p \in (0,1)$}}
\label{sec:lp}
\blue{
The $\ell_p^p$ regularizer for $p\in(0,1)$ is often used for sparsity promotion, see e.g.~\cite{lai2013improved} and the references within.
\blue{Two members of this family are shown in panels (c) and (d) of Figure~\ref{Fig:geo_regularizers}.}
The $\ell_p^p$ prox subproblem is given by 
\begin{equation}
\label{eq:1D_lp}
\min_x~f_{\alpha,p}(x;z) := \frac{1}{2\alpha}(x - z)^2 + |x|^p
\end{equation}
This problem is studied in detail by~\cite{chen2016computing}. 
Closed form solutions are available for special cases $p \in \left\{\frac{1}{2}, \frac{2}{3}\right\}$;
but a provably convergent Newton method is available for all $p$.
Using a simple method {\it for each coordinate}, we can globally solve 
the nonconvex problem~\eqref{eq:1D_lp}~\cite[Proposition 8]{chen2016computing}. 
Our implementation is summarized in the Appendix. 
The $\ell_{1/2}$ regularizer is particularly important for CS, and 
is known to do better than either $\ell_0$ or $\ell_1$.
}

\begin{figure*}[!t]
\begin{center}
\includegraphics[width=.9\textwidth]{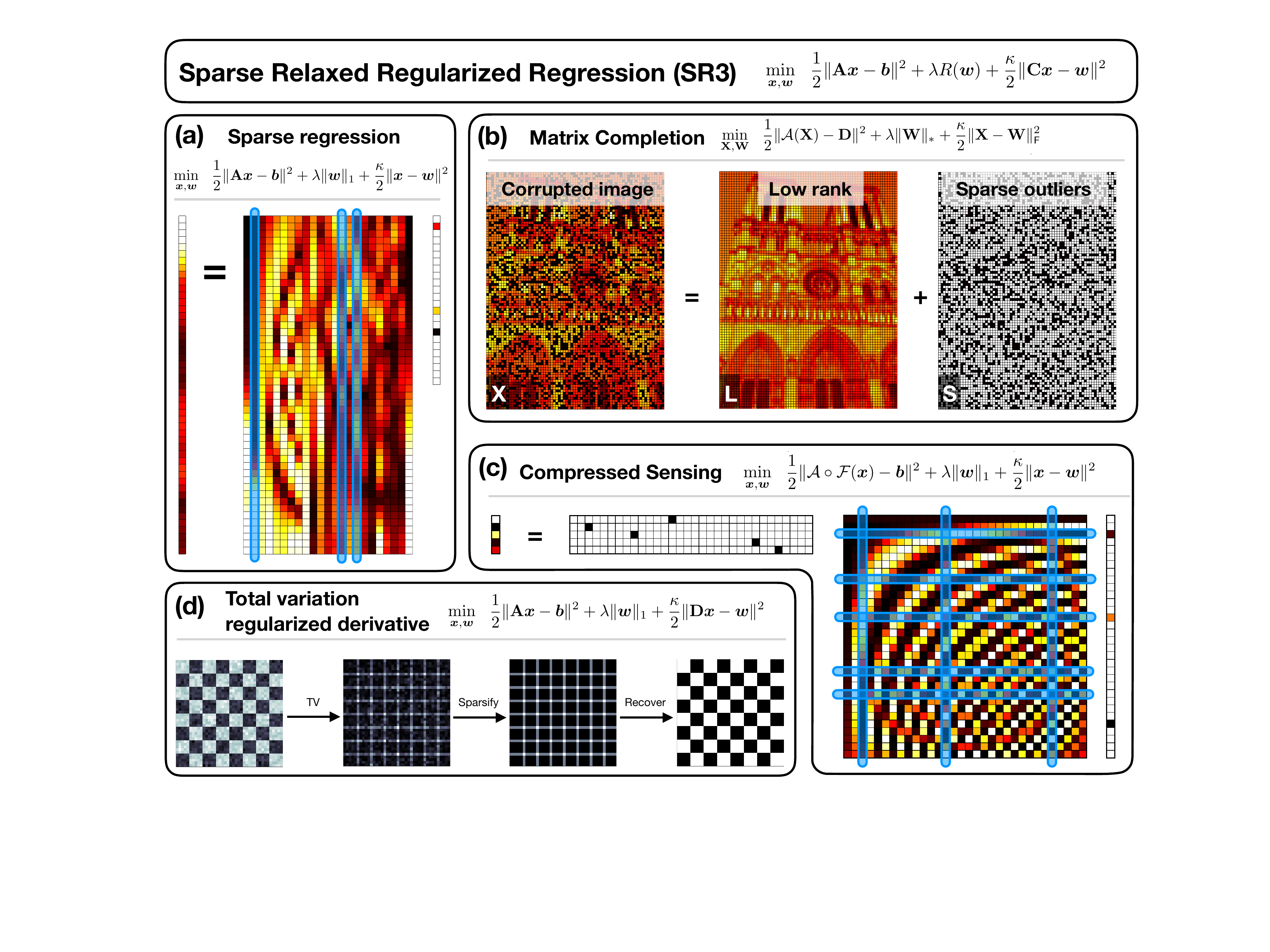}
\end{center}
\caption{Common optimization applications where the \SR~method improves performance.  For each method, the specific implementation of our general architecture (\ref{eq:generalxw}) is given.  }\label{Fig:OverviewBig}
\end{figure*}

\subsubsection{\blue{Nonconvex Regularizers: (S)CAD}}

\blue{The (Smoothly) Clipped Absolute Deviation (SCAD)~\cite{fan2001variable} is a sparsity promoting regularizer used to reduce 
bias in the computed solutions. A simple un-smoothed version (CAD) appears in panel (b) of Figure~\ref{Fig:geo_regularizers}, and the 
analytic prox is given in Table~\ref{table:prox}. This regularizer, when combined with \SR, obtains the best results in the CS experiments in Section~\ref{sec:results}.}

\subsubsection{Composite Regularization: Total Variation (TV).}
TV regularization can be written as 
$\mathrm{TV}(\bx) = R(\bC\bx) = \|\bC\bx\|_1$,  
%
%
with $\bC$ a (sparse) difference matrix (see~\eqref{eq:tvnorm}). 
The \SR~formulation is solved by Algorithm~\ref{alg:pg_x_gen}, 
a prox-gradient (primal) method. 
In contrast, most TV algorithms use primal-dual methods 
because of the composition~$\|\bC\bx\|_1$~\cite{chan2011augmented}.

\subsubsection{Constraints as Infinite-Valued Regularizers.}  The term $R(\cdot)$ does not need 
to be finite valued. In particular, for any set $C$ that has a projection, we can take $R(\cdot)$
to be the indicator function of $C$,  given by 
\[
R_C(\bx) = \begin{cases} 
0 & \bx \in C \\
\infty & \bx \not\in C.
\end{cases},
\] 
so that $\prox_{R}(\bx) = \proj_C(\bx)$. Simple examples of such regularizers include 
convex non-negativity constraints ($\bx\geq 0$) and nonconvex spherical constraints ($\|\bx\|_2 = r$).

\subsection{Optimality of \SR~Solutions}

We now consider the relationship between 
the optimal solution $\hat\bw$ to problem~\eqref{eq:valueExp}, 
and the original problem~\eqref{eq:generic}.

\begin{theorem}[Optimal Ratio]
\label{thm:ratio}
Assume $\bC = \bI$, and let $\lambda_1$ for~\eqref{eq:generic} and $\lambda_2$ for~\eqref{eq:valueExp}  be related by
the ratio $\tau = \lambda_2/\lambda_1$,  and let $\hat \bw^k$ be the optimal solution for~\eqref{eq:valueExp} 
with parameter $\lambda_2$. If  $\lambda_2$ is set to be $\tau\lambda_1$ where
\[
\hat \tau = \argmin_{\tau > 0}~\|\tau \mbf I - \kappa \bH_\kappa^{-1}\|_2 = \frac{\kappa}{2}(\sigma_\mathrm{max}(\bH_\kappa^{-1}) + \sigma_\mathrm{min}(\bH_\kappa^{-1})) \; ,
\]
then have that the distance to optimality of $\hat\bw^1$ for~\eqref{eq:generic} 
is bounded above by 
\[
\frac{\sigma_\mathrm{max}(\bA)^2 - \sigma_\mathrm{min}(\bA)^2}{\sigma_\mathrm{max}(\bA)^2 + \sigma_\mathrm{min}(\bA)^2 + 2\kappa}\|\bA^\top \bA \hat \bw - \bA^\top \bb\|.
\]
\end{theorem}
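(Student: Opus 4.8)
The plan is to measure the distance to optimality of $\hat\bw$ for~\eqref{eq:generic} by the distance from the origin to the subdifferential of the~\eqref{eq:generic} objective at $\hat\bw$, and to control it by exhibiting one explicit subdifferential element. Write $f(\bx) := \tfrac12\|\bA\bx-\bb\|^2$, so $\nabla f(\bx) = \bA^\top\bA\bx - \bA^\top\bb$ and a point is stationary for~\eqref{eq:generic} iff $0 \in \nabla f + \lambda_1\partial R$. First I would record that, since $\hat\bw$ minimizes $v + \lambda_2 R$ in~\eqref{eq:valueExp}, we have $0 \in \nabla v(\hat\bw) + \lambda_2\partial R(\hat\bw)$, so the vector ${\bm s} := -\nabla v(\hat\bw)/\lambda_2$ belongs to $\partial R(\hat\bw)$. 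Feeding this \emph{same} subgradient into the stationarity map of~\eqref{eq:generic} produces the explicit residual $\bd := \nabla f(\hat\bw) + \lambda_1{\bm s} \in \nabla f(\hat\bw) + \lambda_1\partial R(\hat\bw)$, so that $\mathrm{dist}\bigl(0,\ \nabla f(\hat\bw)+\lambda_1\partial R(\hat\bw)\bigr) \le \|\bd\|$.

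Next I would establish the key identity linking the two gradients. With $\bC = \bI$, the closed form $\bx(\bw) = \bH_\kappa^{-1}(\bA^\top\bb + \kappa\bw)$ together with the envelope theorem applied to~\eqref{eq:value} gives $\nabla v(\bw) = \kappa(\bw - \bx(\bw))$; using $\bA^\top\bA = \bH_\kappa - \kappa\bI$ this collapses to the clean relation $\nabla v(\bw) = \kappa\bH_\kappa^{-1}\nabla f(\bw)$. Substituting into $\bd$ and recalling $\tau = \lambda_2/\lambda_1$ yields
\[
\bd = \nabla f(\hat\bw) - \tfrac{1}{\tau}\nabla v(\hat\bw) = \Bigl(\bI - \tfrac{\kappa}{\tau}\bH_\kappa^{-1}\Bigr)\nabla f(\hat\bw),
\]
so that, submultiplicatively,
\[
\|\bd\| \le \Bigl\|\bI - \tfrac{\kappa}{\tau}\bH_\kappa^{-1}\Bigr\|_2\,\|\nabla f(\hat\bw)\| = \tfrac{1}{\tau}\|\tau\bI - \kappa\bH_\kappa^{-1}\|_2\,\|\nabla f(\hat\bw)\|,
\]
where the last equality is positive homogeneity of the operator norm, and $\|\nabla f(\hat\bw)\| = \|\bA^\top\bA\hat\bw - \bA^\top\bb\|$ is precisely the norm appearing in the stated bound.

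Finally I would optimize over $\tau$. Diagonalizing $\bH_\kappa$, whose eigenvalues are $\sigma_i(\bA)^2 + \kappa$, reduces the scalar factor to the Chebyshev problem $\min_{\tau>0}\max_i \bigl|1 - \tfrac{\kappa/\tau}{\sigma_i(\bA)^2+\kappa}\bigr|$, whose optimum is attained by balancing the deviations at $\sigma_\mathrm{max}(\bA)$ and $\sigma_\mathrm{min}(\bA)$. Solving the balancing equation gives $\hat\tau = \tfrac{\kappa}{2}\bigl(\sigma_\mathrm{max}(\bH_\kappa^{-1}) + \sigma_\mathrm{min}(\bH_\kappa^{-1})\bigr)$, and plugging back produces the minimum value $\dfrac{\sigma_\mathrm{max}(\bA)^2 - \sigma_\mathrm{min}(\bA)^2}{\sigma_\mathrm{max}(\bA)^2 + \sigma_\mathrm{min}(\bA)^2 + 2\kappa}$, which completes the estimate.

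The main obstacle is the final $\min$-over-$\tau$ Chebyshev step, and the subtle point to get right is that the quantity actually controlling $\|\bd\|$ is the scaled norm $\tfrac{1}{\tau}\|\tau\bI - \kappa\bH_\kappa^{-1}\|_2$, not $\|\tau\bI - \kappa\bH_\kappa^{-1}\|_2$ as literally written in the $\argmin$; one must verify these two objectives share the same minimizer, which they do because both reduce to the identical midpoint/balancing condition on the extreme eigenvalues. A secondary care point is the subdifferential sum rule for possibly nonconvex $R$, which should be invoked through the calculus set up in the appendix rather than taken for granted.
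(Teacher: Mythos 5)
Your proof is correct and follows essentially the same route as the paper's: extract the subgradient of $R$ at $\hat\bw$ from stationarity of the relaxed problem (your identity $\nabla v(\bw) = \kappa\bH_\kappa^{-1}\nabla f(\bw)$ is exactly what underlies the paper's Lemma on optimality conditions, where it defines the set $\mathcal{S}_2$), transfer that same subgradient into the optimality condition of~\eqref{eq:generic} to get the residual $\bigl(\bI - \tfrac{\kappa}{\tau}\bH_\kappa^{-1}\bigr)\nabla f(\hat\bw)$, and then optimize the scalar factor over $\tau$. Your closing observation --- that the quantity actually being minimized is $\tfrac{1}{\tau}\|\tau\bI - \kappa\bH_\kappa^{-1}\|_2$ rather than the unscaled norm in the $\argmin$, and that both are minimized at the same midpoint $\hat\tau$ --- is a subtlety the paper passes over silently (it also writes the submultiplicative bound as an equality), so your treatment is, if anything, slightly more careful on the same path.
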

Theorem~\ref{thm:ratio} gives a way to choose $\lambda_2$ given $\lambda_1$
so that $\hat\bw$ is as close as possible to the stationary point of \eqref{eq:generic}, 
and characterizes the distance of $\hat \bw$ to optimality of the original problem. 
The proof is given in the Appendix. 

Theorem~\ref{thm:ratio} shows that as $\kappa$ increases, the solution $\hat \bw$ 
moves closer to being optimal for the original problem~\eqref{eq:generic}. 
On the other hand, Theorem~\ref{col:cvx}  suggests that lower $\kappa$ values regularize the problem, 
making it easier to solve. 
In practice,  we find that $\hat \bw$ is useful and informative in a range of applications with 
moderate values of $\kappa$, see Section~\ref{sec:results}.

\section{Results}
\label{sec:results}

The formulation~\eqref{eq:generic} covers many standard problems, 
including variable selection (LASSO), compressed sensing,
TV-based image de-noising, and matrix completion, shown in Fig.~\ref{Fig:OverviewBig}.
In this section, we demonstrate the general flexibility of the \SR~formulation and its advantages over other state-of-the-art techniques. 
In particular, \SR~is faster than competing algorithms, and $\bw$ is far more useful in identifying the support of sparse signals, particularly 
when data are noisy and $\mathbf{A}$ is ill-conditioned.  
%

\subsection{\SR~vs. LASSO and Compressed Sensing}
%
Using Eqs.~(\ref{eq:generic}) and (\ref{eq:generalxw}),
the LASSO and associated \SR~problems are 
\begin{eqnarray}
\label{eq:lasso}
&& \min_{\bx}~~\frac{1}{2}\|\bA\bx - \bb\|^2 + \lambda\|\bx\|_1 \\
\label{eq:lasso_sr3}
&& \min_{\bx, \bw}~~\frac{1}{2}\|\bA\bx - \bb\|^2 + \lambda\|\bw\|_1 + \frac{\kappa}{2}\|\bx - \bw\|^2
\end{eqnarray}
where $\bA\in\mathbb{R}^{m \times n}$ with $m \ge n$.
LASSO is often used for variable selection,
i.e. finding a sparse 
set of coefficients $\bx$ that correspond 
to variables (columns of $\bA$)
most useful for predicting the observation $\bb$.
We compare the quality and numerical efficiency of Eqs.~(\ref{eq:lasso})
and (\ref{eq:lasso_sr3}). 
The formulation in (\ref{eq:lasso_sr3}) is related to an earlier sequentially thresholded least square algorithm that was used for variable selection to identify nonlinear dynamical systems from data~\cite{Brunton2016pnas}.  

In all LASSO experiments, observations are generated by
$\bb = \bA \bx_t + \sigma \bm \epsilon$, where
$\bx_t$ is the true signal, and $\bm \epsilon$
is independent Gaussian noise. 

\begin{figure}[h]
\centering
\includegraphics[width=0.48\textwidth]{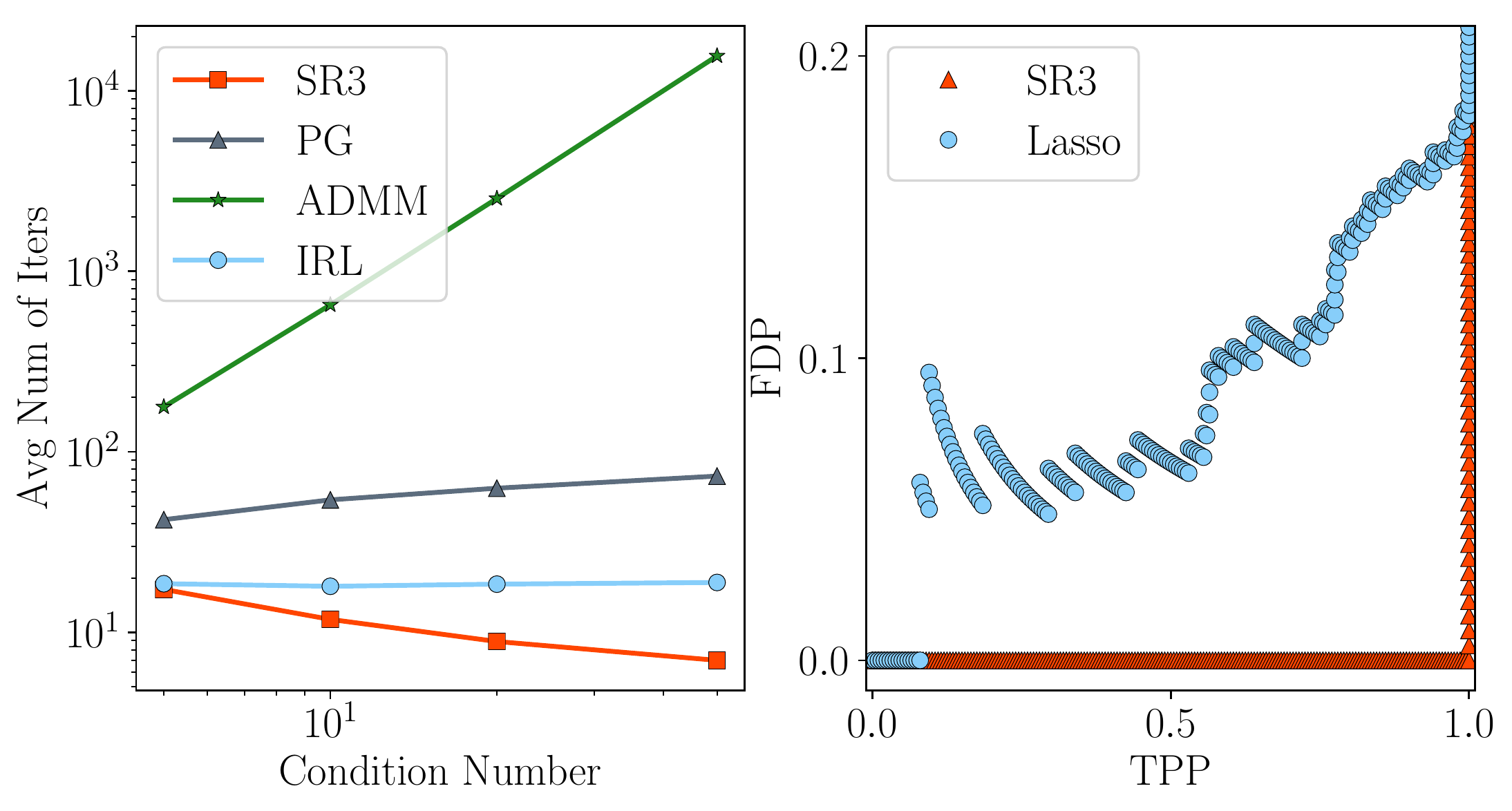}
\includegraphics[width=0.48\textwidth]{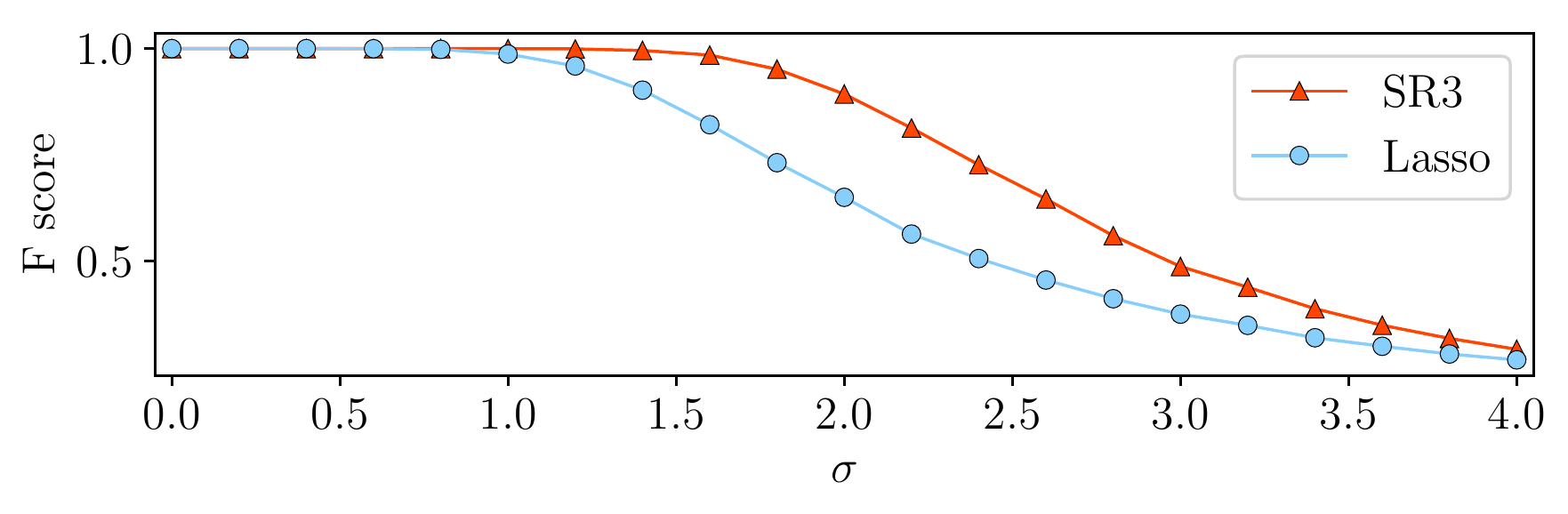}
\caption{{\bf Top Left:} \SR~approach (red) is orders of magnitude faster than ADMM (green) or other first-order methods such as prox-gradient (gray).
\blue{While IRL (blue) requires a comparable number of iterations, its cost per iteration is more expensive than \SR.}
{\bf Top Right:} True Positives vs. False Positives along the LASSO path (blue) and along the \SR~path (red). 
{\bf Bottom:} $F_1$ score of \SR~(red) and LASSO formulation (blue) with respect to different noise levels.}
\label{fig:variable_select}
\end{figure}

\subsubsection{LASSO Path.}
The LASSO path refers to the set of solutions obtained by sweeping over $\lambda$ in \eqref{eq:generic}
from a maximum $\lambda$, which gives $\bx = \bm 0$, down to $\lambda = 0$, which gives the least squares solution.
In \cite{su2017false}, it was shown that \eqref{eq:lasso} makes mistakes early along this path.

\noindent
\blue{{\bf Problem setup.}} As in \cite{su2017false}, the measurement matrix $\bA$ is $1010\times1000$, with entries drawn from $\cN(0,1)$.
The first 200 elements of the true solution $\bx_t$ are set to be 4 and the rest to be 0; $\sigma = 1$ is used to generate $\bb$.
Performing a $\lambda$ sweep, we track the fraction of incorrect nonzero elements in the last 800 entries vs. the fraction of nonzero elements in the first 200 entries 
of each solution, i.e. the false discovery proportion (FDP) and true positive proportion (TPP).

\noindent
\blue{{\bf Parameter selection.}
We fix $\kappa=100$ for \SR. Results are presented across a $\lambda$-sweep for both \SR~and LASSO. 
}

\noindent
\blue{{\bf Results.}}
The results are shown in the top-right panel of Fig.~\ref{fig:variable_select}.
LASSO makes mistakes early along the path~\cite{su2017false}.
In contrast, \SR~recovers the support without introducing
any false positives along the entire path until overfitting
sets in with the 201\textsuperscript{st} nonzero entry.
%

\subsubsection{Robustness to Noise.}
%
Observation noise makes signal recovery more difficult.
%
%
%
%
We conduct a series of experiments to compare the robustness with respect to noise of \SR~with LASSO.

\noindent
\blue{{\bf Problem setup.} We choose our sensing matrix with dimension $200$ by $500$ and elements drawn independently from a standard Gaussian distribution.
The true sparse signal has $20$ non-zero entries, 
and we consider a range of noise levels $\sigma\in\{0.2i:i = 0, 1, \ldots, 20 \}$.
For each $\sigma$, we solve \eqref{eq:lasso} and \eqref{eq:lasso_sr3} for 200 different random trials.
We record the $F_1$-score, $F_1=2(\mbox{precision} \cdot\mbox{recall})/(\mbox{precision} + \mbox{recall})$, to compare reconstruction quality.
In the experiments, any entry in $\bx$ which is greater than 0.01
is considered non-zero for the purpose of defining the recovered
support.}
%

\noindent
\blue{{\bf Parameter selection.} We FIX $\kappa = 100$ and perform a $\lambda$-sweep for both \eqref{eq:lasso} and \eqref{eq:lasso_sr3}
to record the best $F_1$-score achievable by each method.}

\noindent
\blue{{\bf Results.}
We plot the average normalized $F_1$-score for different noise levels in the bottom panel of Fig.~\ref{fig:variable_select}.
\SR~has a uniformly higher $F_1$-score across all noise levels.}

\subsubsection{Computational Efficiency.}
%
%
We compare the computational efficiency of the Alternating Directions Method of Multipliers (ADMM)
(see e.g.~\cite{boyd2011distributed,goldstein2009split}), proximal gradient algorithms (see e.g.~\cite{combettes2011proximal}) on \eqref{eq:lasso} with Algorithm~\ref{alg:pg_x_gen},
\blue{and a state-of-the-art Iteratively Reweighted Least-Squares (IRL) method, specifically IRucLq-v as in \cite{lai2013improved}.}

\noindent
\blue{{\bf Problem setup.} We generate the observations with $\sigma=0.1$.
%
%
The dimension of $\bA$ is $600\times 500$, and we vary the condition number of the matrix $\bA$ from 1 to 100.
For each condition number, we solve the problem 10 times and record the average number of iterations required to reach a specified tolerance.
We use the distance between the current and previous iteration to detect convergence for all algorithms.
When the measure is less than a tolerance of $10^{-5}$ we terminate the algorithms.}

%
\noindent
\blue{{\bf Parameter selection.}
  We choose $\kappa = 1$, $\lambda$ in \eqref{eq:lasso} to be $\|\bA^\top\bb\|_\infty/5$, and $\lambda$ in \eqref{eq:lasso_sr3}
  to be $\|\bF_\kappa^\top \bg_\kappa\|_\infty/5$.}

\begin{table}[h!]
\blue{
\caption{\label{table:comp}Complexity Comparison for $A \in \mathbb{R}^{m\times n}$, $m \geq n$.}
\begin{tabular}{c|c|c}
{\bf Method} & {\bf One-time Overhead} & {\bf Cost of generic iteration} \\\hline
PG &  --- & $O(mn)$ \\
ADMM & $O(mn^2 + n^3)$ & $O(n^2)$\\
IRucLq-v & --- & $O(mn^2 + n^3)$ \\ 
\SR &  $O(mn^2 + n^3)$ & $O(n^2)$ \\\hline
\end{tabular}
}
\end{table}

\noindent
\blue{{\bf Results.}
The results (by number of iterations) are shown in the top left panel of Fig.~\ref{fig:variable_select}.
The complexity of each iteration is given in Table~\ref{table:comp}. The generic iterations of PG, ADMM, and \SR~have 
nearly identical complexity, with ADMM and \SR~requiring a one-time formation and factorization of an $n\times n$ matrix. 
The IRucLq-v method requires the formation and inversion of such a matrix at each iteration. 
}
\blue{
From Fig.~\ref{fig:variable_select}, 
\SR~requires far fewer iterations than ADMM and the proximal gradient method, especially as $\cond(\bA)$ increases.
\SR And the IRucLq-v method require a comparable number of iterations.
%
A key difference is that ADMM requires dual variables, while \SR~is fundamentally a primal-only method. 
When $\cond(\bA) = 50$, ADMM needs almost $10^4$ iterations to solve~\eqref{eq:lasso};
proximal gradient descent requires $10^2$ iterations; and \SR~requires 10 to solve \eqref{eq:lasso_sr3}.
Overall, the \SR~method takes by far the least total compute time as the condition number increases. 
More detailed experiments, including for larger systems where iterative methods are needed, are left to future work. 
}

\subsubsection{\SR~for Compressed Sensing.}
\label{sec:cs}
%
When $m \ll n$, the variable selection problem
targeted by \eqref{eq:lasso} is often called {\it compressed sensing} (CS).
Sparsity is required to make the problem well-posed, as \eqref{eq:lasso} has infinitely many solutions with $\lambda=0$.
In CS, columns of $\bA$ are basis functions,
e.g. the Fourier modes $A_{ij} \!=\! \exp(\bm{i}\alpha_jt_i)$, and $\bb$ may be corrupted by noise~\cite{candes2006robust}.
In this case, 
compression occurs when
$m$ is smaller than the number of samples required
by the Shannon sampling theorem.

Finding the optimal sparse solution is inherently
combinatorial, and brute force solutions are only
feasible for small-scale problems.
In recent years, a series of powerful theoretical tools
have been developed in \cite{candes2005decoding,
  candes2006robust, candes2006stable,
  donoho2006compressed, donoho2009observed}
to analyze and understand the behavior of \eqref{eq:generic}
with $R(\cdot)=\|\cdot\|_1$ as a sparsity-promoting penalty. 
The main theme of these works is that if there is sufficient
incoherence between the measurements and the basis, then exact recovery
is possible.
One weakness of the approach is that the incoherence requirement --- for instance, having a small
restricted isometry constant (RIC) \cite{candes2005decoding} ---
may not be satisfied by the given samples, leading to sub-optimal
recovery.
%
%

%

%
\noindent
\blue{{\bf Problem setup.}
We consider two synthetic CS problems.
The sparse signal has dimension $d=500$ and
$k=20$ nonzero coefficients with uniformly distributed positions
and values randomly chosen as $-2$ or $2$.
In the first experiment, the entries of 
$\bA \in \mathbb{R}^{m \times n}$ are drawn
independently from a normal distribution, which
will generally have a small RIC \cite{candes2005decoding}
for sufficiently large $m$.
In the second experiment, entries of 
$\bA \in \mathbb{R}^{m\times n}$ are drawn from a
uniform distribution on the interval $[0,1]$, which are generally more coherent than using Gaussian
entries.}
\begin{figure*}
\centering
\begin{overpic}[width=0.49\textwidth]{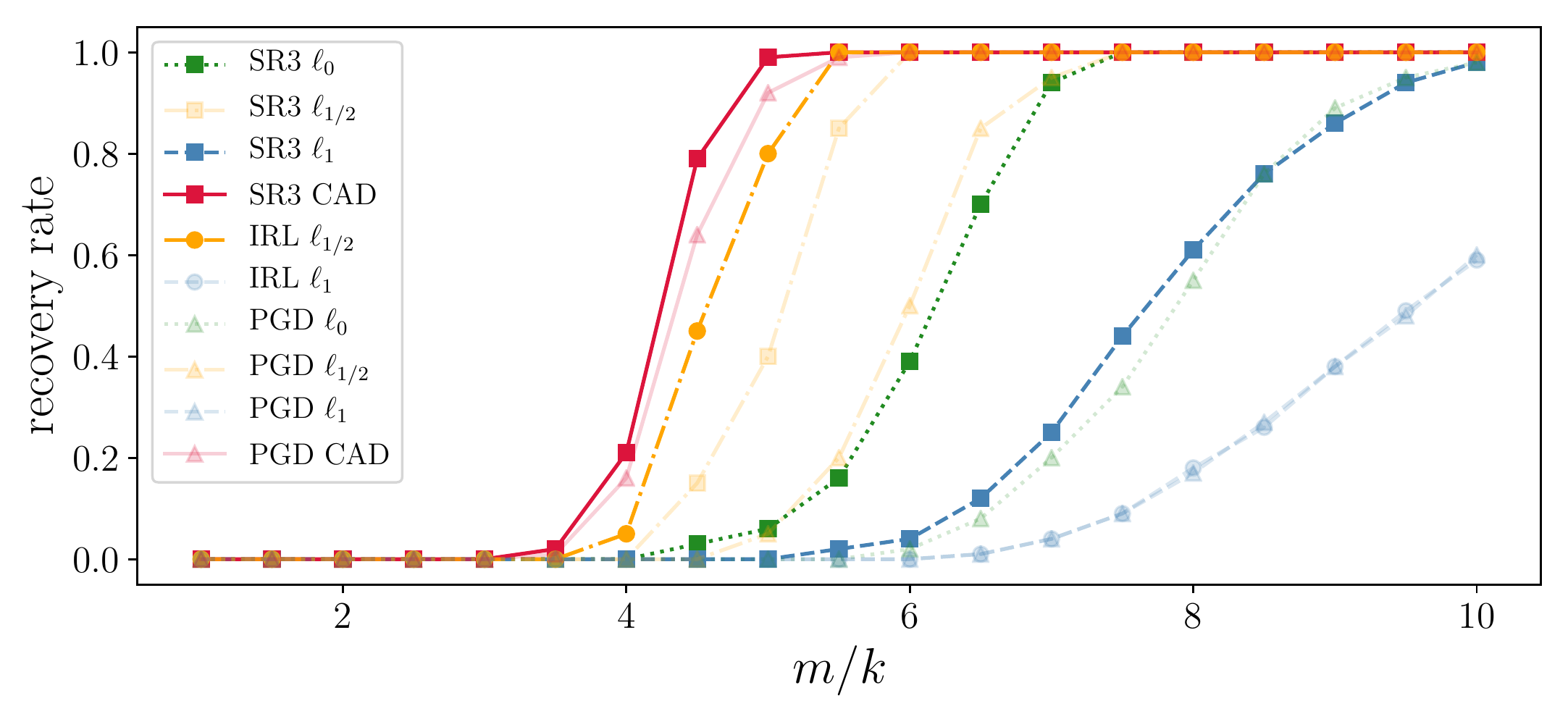} 
\put(33,45){Gaussian Sensing Matrix}
\end{overpic}
\begin{overpic}[width=0.49\textwidth]{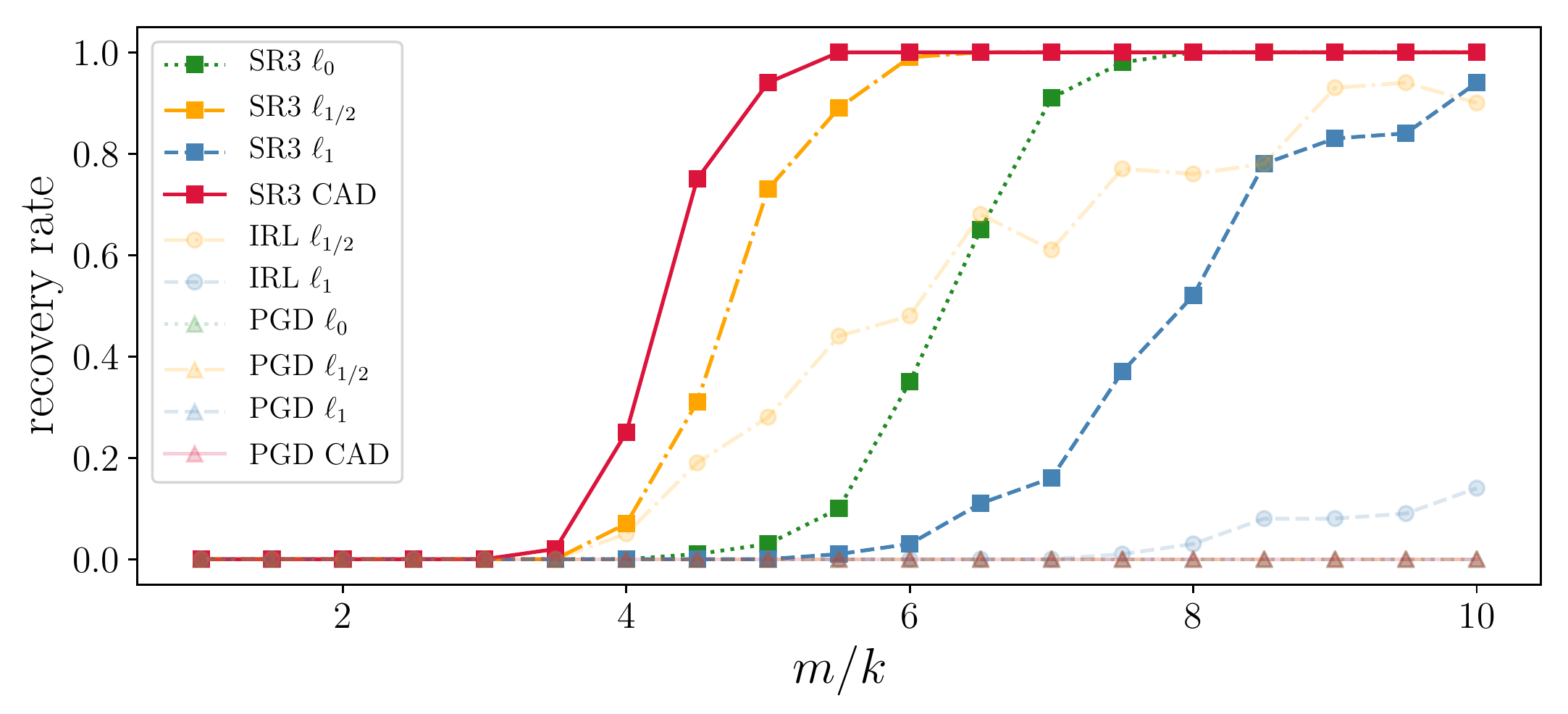} 
\put(33,45){Uniform Sensing Matrix}
\end{overpic}
\caption{\label{fig:cs}
\blue{Compressed sensing results: recovering a $20$-sparse signal
in $\mathbb{R}^{500}$ from a small number of measurements. We plot the
recovery rate as the number of measurements increases.
Line color and style are determined by the regularizer while
marker shapes are determined by the algorithm/formulation used.
For readability, only the best performing algorithm for each regularizer is
plotted in bold, with the rest opaque. {\bf Left panel:} the sensing matrix $\bA$
has {Gaussian} entries. 
Nonconvex regularizers are in general more effective than convex regularizers. 
\SR~is the most effective formulation for each
regularizer aside from $\ell_{1/2}$ for which the standard formulation with the IRucLq-v algorithm is best. 
\SR~CAD achieves a better final result compared to $\ell_{1/2}$ with IRLucLq-v. 
{\bf Right panel:}
the sensing matrix $\bA$ has {uniform} entries. The traditional convex approaches
fail dramatically as there is no longer a RIP-like condition. Even for
the nonconvex regularizers, IRucLq-v shows
significant performance degradation, while proximal gradient descent never succeeds.
However, \SR~approaches still succeed,
with only a minor efficiency gap (with respect to $m/k$) compared to the
easier conditions in the left panel. } }
\end{figure*}

\blue{
In the classic CS context, recovering the support of the signal
(indices of non-zero coefficients) 
is the main goal, as the optimal coefficients can be computed
in a post-processing step.
In the experiments, any entry in $\bx$ which is greater than 0.01
is considered non-zero for the purpose of defining the recovered
support.
To test the effect of the number of samples $m$ on recovery,
we take measurements  with additive Gaussian noise of the form $\cN(0,0.1)$,
and choose $m$ ranging from $k$ to $20k$.
For each choice of $m$ we solve \eqref{eq:generic} and
\eqref{eq:generalxw} 200 times.
We compare results from 10 different formulations and algorithms:
sparse regression with $\ell_0$, $\ell_{1/2}$, $\ell_1$ and CAD regularizers
using PG; 
\SR~reformulations of these four problems using Algorithm~\ref{alg:pg_x_gen},  
and sparse regression with $\ell_{1/2}$ and $\ell_1$ regularizers using IRucLq-v. 
}

\noindent
\blue{{\bf Parameter selection.}
For each instance, we perform a grid search on $\lambda$
to identify the correct non-zero support, if possible. 
The fraction of runs for which there is a $\lambda$ with successful
support recovery is recorded. 
For all experiments we fix $\kappa = 5$, and we set $\rho = 0.5$ for the CAD regularizer.} 

\noindent
\blue{{\bf Results.} As shown in Figure~\ref{fig:cs},
for relatively incoherent random Gaussian measurements,
both the standard formulation \eqref{eq:generic} and
\SR~succeed, particularly with the nonconvex regularizers.
$\mathrm{CAD}(\cdot,\rho)$, 
which incorporates some knowledge of the noise level in the parameter $\rho$, 
performs the best as a
regularizer, followed
by $\ell_{1/2}$, $\ell_0$, and $\ell_1$. The \SR~formulation
obtains a better recovery rate for each $m$ for most
regularizers, with the notable exception of $\ell_{1/2}$.
The IRucLq-v algorithm (which incorporates some knowledge
of the sparsity level as an internal parameter)
is the most effective method for
$\ell_{1/2}$ regularization for such matrices.}

\blue{For more coherent uniform measurements,
\SR~obtains a recovery rate which is only slightly degraded
from that of the Gaussian problem, while the results using
\eqref{eq:generic} degrade drastically.
In this case, \SR~is the most effective approach
for each regularizer and provides the only
methods which have perfect recovery at a sparsity
level of $m/k \leq 10$, namely \SR-CAD, \SR-$\ell_{1/2}$,
and \SR-$\ell_0$.
}

\noindent
\blue{
{\bf Remark:} Many algorithms focus on the noiseless setting in compressive sensing,
where the emphasis shifts to recovering signals that may have very small amplitudes~\cite{lai2013improved}. 
\SR~is not well suited to this setting, since the underlying assumption is that 
$\bw$ is near to $\bx$ in the least squares sense. 
}

\subsubsection{Analysis vs. Synthesis}

\blue{
Compressive sensing formulations fall into two broad categories, 
analysis \eqref{eq:analysis} and synthesis \eqref{eq:synthesis} (see \cite{chen2001atomic,elad2007analysis}):
\begin{align}
\min_{\bm x}~&\frac{1}{2}\|\bA \bx - \bb\|^2 + R(\bC \bx), \label{eq:analysis}\\
\min_{\bm \xi}~&\frac{1}{2}\|\bA\bC^\top \bm \xi - \bb\|^2 + R(\bm \xi), \label{eq:synthesis}
\end{align}
where $\bC$ is the {\it analyzing operator}, $\bx \in \mathbb{R}^d$ and $\bm \xi \in \mathbb{R}^n$,
and we assume $n \gg d$. In this section, we consider $\bC^\top \bC = \mbf I$, i.e. $\bC^\top$ is a tight frame. 
Synthesis represents $\bx$ using the over-determined system $\bC^\top$, and recovers the
coefficients $\bm \xi$ using sparse regression.
Analysis directly works over the domain of the underlying signal $\bx$ with the prior that $\bC \bx$
is sparse.
The two methods are equivalent when $n \le d$, and very different when $n > d$ \cite{chen2001atomic}.
Both forms appear in a variety of inverse problems including denoising, interpolation and super-resolution.
The work of \cite{elad2007analysis} presents a thorough comparison of \eqref{eq:analysis} and \eqref{eq:synthesis} 
across a range of signals, and finds that the effectiveness of each depends on problem type.
}

\blue{
The \SR~formulation can easily solve both analysis and synthesis formulations. 
We have focused on synthesis thus far, so in this section we briefly consider 
analysis \eqref{eq:analysis}, under the assumption that $\bC \bx$ is almost sparse.
When $l \gg d$, the analysis problem is formulated over a lower dimensional space.
However, since $\bC \bx$ is always in the range of $\bC$, it can never be truly sparse. 
If a sparse set of coefficients is needed, analysis formulations use post-processing steps such as thresholding. 
\SR, in contrast, can extract the sparse transform coefficients directly from the $w$ variable.
We compare \SR~with the Iteratively Reweighted Least-Squares-type algorithm
IRL-D proposed by \cite{huang2018new} for solving \eqref{eq:analysis}.
}

\noindent
\blue{{\bf Problem setup.}
We choose our dimensions to be $n=1024$, $d=512$ and $m=128$.
We generate the sensing matrix $\bA$ with independent Gaussian entries
and the true sparse coefficient $\bm \xi_t$ with 15 non-zero elements randomly selected
from the set $\{-1, 1\}$. The true underlying signal is $\bx_t = \bC^\top \bm \xi$ and the measurements are generated
by $\bb = \bA \bx_t + \sigma \bm\epsilon$, where $\sigma = 0.1$ and $\bm \epsilon$ has independent Gaussian entries.
We use $\ell_1$ as the regularizer, $R(\cdot) = \lambda\|\cdot\|_1$.
}

\noindent
\blue{{\bf Parameter selection.}
In this experiment, we set $\kappa$ for \SR~to be $5$, $\lambda$ for \SR~to be $\|\bF_\kappa^\top \bg_\kappa\|_\infty/2$,
and $\|\bA^\top\bb\|_\infty/10$ for IRL-D. The $\lambda$s are chosen to achieve the clearest separation between active 
and inactive signal coefficients for each method.
} 
\begin{figure}[h]
\centering
\includegraphics[width=0.49\textwidth]{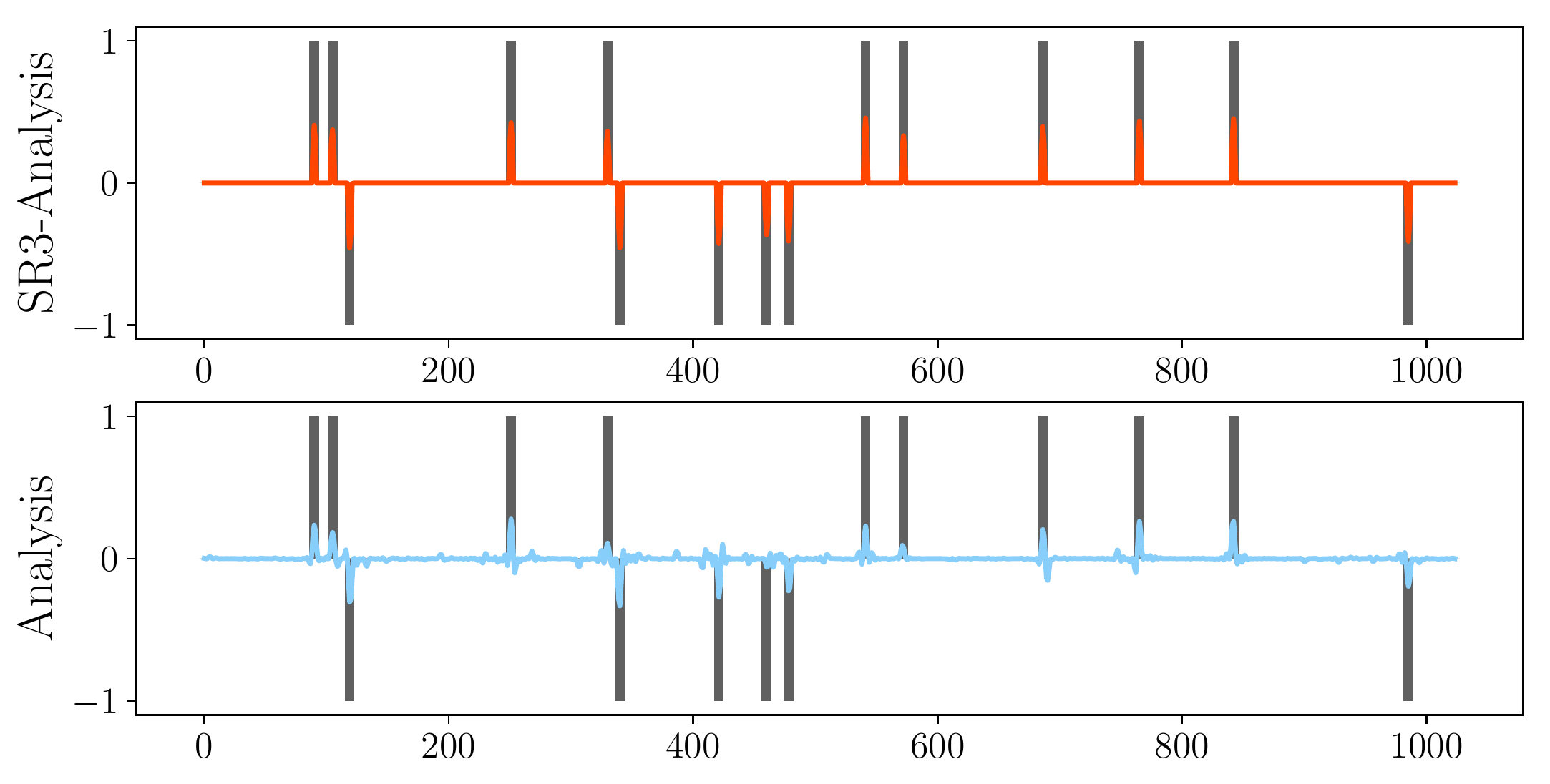}
\caption{\blue{Comparison of standard analysis with \SR-analysis.
{\bf Top panel:} result using \SR-analysis, plotting the final $\bw$ (red) against the true signal (dark grey).
{\bf Bottom panel:} result using standard analysis and the IRL-D algorithm, plotting final $\bC \bx$ (blue) against the true signal (dark grey).}}
\label{fig:tightframe}
\end{figure}

\noindent
\blue{{\bf Results.}
The results are shown in Figure~\ref{fig:tightframe}. The $\bw$ in the \SR~analysis formulation
is able to capture the support of the true signal cleanly, while $\bC \bx$ from the \eqref{eq:analysis} identifies the support 
but is not completely sparse, requiring post-processing steps such as thresholding to get a support estimate.
}

\subsection{\SR~for Total Variation Regularization}
\label{sec:TV}
Natural images are effectively modeled as 
large, smooth features separated by a few sparse edges.
It is common to regularize 
ill-posed inverse problems in imaging by adding the
so-called total variation (TV) regularization
\cite{rudin1992nonlinear,chan1998total,
  strong2003edge,osher2005iterative,wang2008new,
  beck2009fast,chan2011augmented}. 
Let $X_{ij}$ denote the $i,j$ pixel of an
$m\times n$ image. For convenience,
we treat the indices as 
doubly periodic, i.e. $X_{i+pm,j+qn} = X_{i,j}$ for $p,q\in \Z$.
Discrete $x$ and $y$ derivatives are defined by
$[\bD_x \bX]_{ij} = X_{i+1,j}-X_{ij}$ and
$[\bD_y \bX]_{ij} = X_{i,j+1}-X_{ij}$, respectively. The (isotropic)
total variation of the image is then given by the sum of
the length of the discrete gradient at each pixel, i.e.
\begin{equation}
  R_\TV\left(\begin{matrix} \bD_x\bX\\ \bD_y\bX \end{matrix}\right) :=
  \sum_{i=1}^m \sum_{j=1}^n \sqrt{ [ \bD_x \bX ]_{ij}^2 +
    [\bD_y\bX]_{ij}^2} \; . \label{eq:tvnorm}
\end{equation}
Adding the TV regularizer (\ref{eq:tvnorm}) to a
regression problem corresponds to imposing a sparsity
prior on the discrete gradient.

Consider image deblurring
(Fig.~\ref{fig:tv_cameraman}). 
The two-dimensional
convolution $\bY = \bA * \bX$ is given by the sum
\(
  Y_{ij}\! =\! \sum_{p=1}^m \!\sum_{q=1}^n \!A_{pq} X_{i-p,j-q} \; .
\)
\begin{figure}[t]
  \centering
\begin{overpic}[width=8cm]{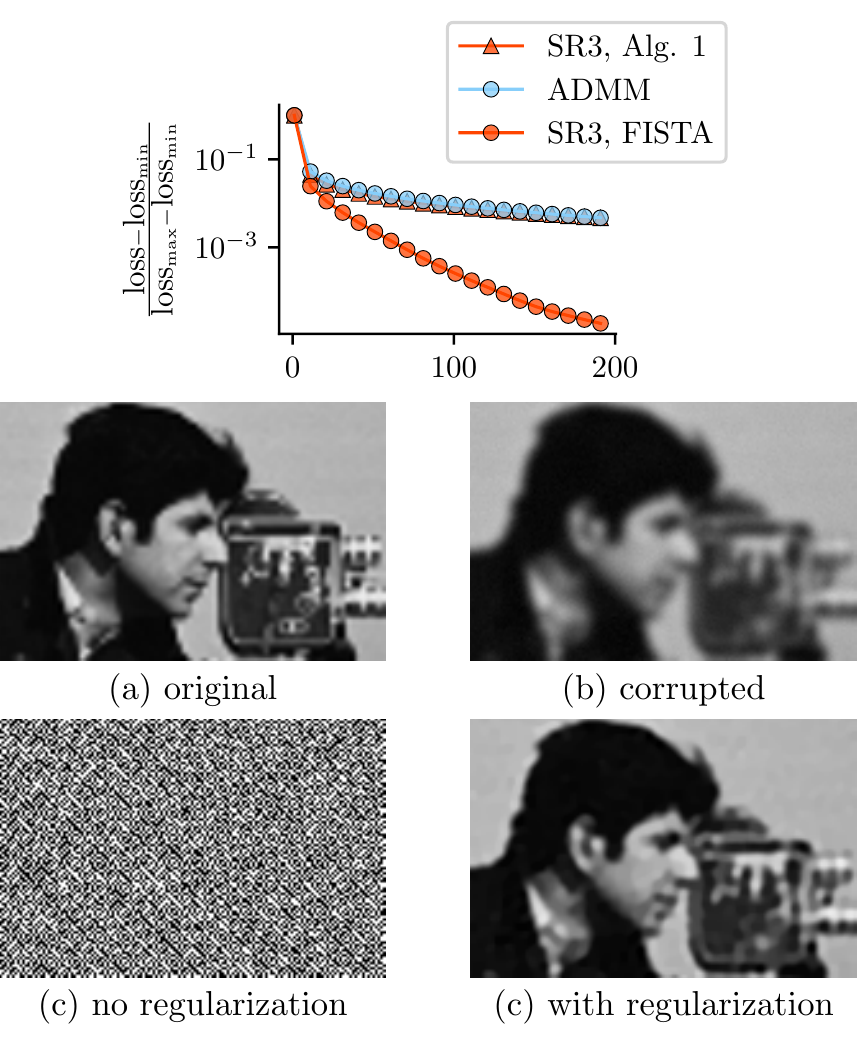}
\put(18,63.77){iters:}
    \end{overpic}
  \caption{\label{fig:tv_cameraman} The top plot compares
    the progress of the \SR~and ADMM-type algorithms in
    reducing their losses, showing similar rates of convergence.
    Panels (a) and (b) show a detail
    of the original cameraman image and the image corrupted as
    described in the text, respectively. The incredibly noisy
    image resulting from inverting the blur without regularization
    ($\lambda = 0$) is shown in panel (c) and the crisper
    image resulting from the
    regularized \SR~problem (with $\lambda = .075$) is shown in
    panel (d) (the image resulting from the ADMM type algorithm
    of \cite{chan2011augmented} is visually similar, with a
    similar SNR)}
\end{figure}
Such convolutions are often used to model photographic
effects, like distortion or motion blur.
Even when the kernel $\bA$ is known, the problem of recovering
$\bX$ given the blurred measurement
is unstable because measurement noise 
 is sharpened by `inverting' the blur. Suppose that $\bB = \bA * \bX + \nu \bG$,
where $\bG$ is a matrix with entries given by independent
entries from a standard normal distribution and $\nu$
is the noise level. To regularize the problem of recovering
$\bX$ from the corrupted signal $\bB$,
we add the TV regularization:
\begin{equation}
  \hat{\bX} = \argmin_\bX \frac{1}{2} \| \bA * \bX - \bB \|_F^2
  + \lambda R_\TV\left(\begin{matrix} \bD_x \bX\\ \bD_y \bX\end{matrix}\right) \; . \label{eq:imtv}
\end{equation}
The natural \SR~reformulation is given by 
\begin{align}
  \min_{\bX,\bw_x,\bw_y} &\frac{1}{2} \| \bA \!*\! \bX \!-\! \bB \|_F^2 \nonumber\\
 & \!+\! \lambda R_\TV\left(\begin{matrix}\bw_x \\\bw_y\end{matrix}\right) \!+\! \frac{\kappa}{2} \left\|\begin{matrix} \bw_x\!-\!\bD_x\bX\\  \bw_y\!-\!\bD_y\bX \end{matrix}\right\|_F^2 \label{eq:imtvsr3} .
\end{align}

\noindent
\blue{{\bf Problem setup.} In this experiment, we use} the standard Gaussian
blur kernel of size $k$ and standard deviation
$\sigma$, given by
\(
  A_{ij} = \exp \left ( -(i^2+j^2)/(2\sigma^2)
  \right ),
\)
when $|i| < k$ and $|j|<k$, with the rest of the entries of $\bA$
determined by periodicity or equal to zero. \blue{The signal
$\bX$ is the classic ``cameraman'' image of size $512\times 512$.}
\blue{As a measure of the progress of a given method
  toward the solution, we evaluate the current loss at each
  iteration (the value of either the right hand side of
  \eqref{eq:imtv} or \eqref{eq:imtvsr3}).}

\noindent
\blue{{\bf Parameter Selection.} We set $\sigma = 2$, $k=4$,
  $\nu = 2$, and $\lambda = 0.075$. The value of $\lambda$
  was chosen by hand to achieve reasonable image recovery.
  For \SR, we set $\kappa = 0.25$.}

\noindent
\blue{{\bf Results.}} Figure~\ref{fig:tv_cameraman} demonstrates the stabilizing
effect of TV regularization.
Panels (a) and (b) show a detail of the
image, i.e. $\bX$, and the corrupted image, i.e. $\bB$,
respectively.
In panel (c), we see that simply inverting the effect of the
blur results in a meaningless image. Adding TV regularization
 gives a more reasonable result in
panel (d).

\begin{algorithm}[h!]
\caption{FISTA for \SR~TV}
\label{alg:accl_pg}
\begin{algorithmic}[1]
\State {\bfseries Input:} $\bw^0$
\State {\bfseries Initialize:} $k=0$, $a_0 = 1$, $\bv_0 = \bw^0$, $\eta \leq \frac{1}{\kappa}$
\While{not converged}
\Let{$k$}{$k+1$}
\Let{$\bv_k$}{$\prox_{\eta R}(\bw^{k-1} - \eta (\bF_\kappa^\top (\bF_\kappa \bw^{k-1} - \bg_\kappa)))$}
\Let{$a_k$}{$(1+\sqrt{1+4a_{k-1}^2})/2$}
\Let{$\bw^k$}{$\bv_k + (a_{k-1}-1)/a_k(\bv_{k} - \bv_{k-1})$}
\EndWhile
\State {\bfseries Output:} $\bw^k$
\end{algorithmic}
\end{algorithm}
In the top plot of Fig.~\ref{fig:tv_cameraman}, we compare \SR~and
a primal-dual algorithm \cite{chan2011augmented} on the objectives
\eqref{eq:imtvsr3} and \eqref{eq:imtv}, respectively.
Algorithm~\ref{alg:pg_x_gen} converges
as fast as the state-of-the-art method of~\cite{chan2011augmented}; 
it is not significantly faster because for TV regularization, the equivalent of the map $\bC$
\blue{does not have orthogonal columns (so that the stronger
  guarantees of Section~\ref{sec:method} do not apply)} and
the equivalent of $\bF_\kappa$, see \eqref{eq:Fdef}, is still ill-conditioned.
Nonetheless, since \SR~gives a primal-only method, it is straightforward
to accelerate using FISTA~\cite{beck2009fastlin}. In Fig.~\ref{fig:tv_cameraman},
we see that this accelerated method converges much more rapidly to
the minimum loss, giving a significantly better algorithm for TV deblurring.
The FISTA algorithm for \SR~TV is detailed in Algorithm~\ref{alg:accl_pg}.

\blue{We do not compare the support recovery of the two
  formulations, \eqref{eq:imtv} and \eqref{eq:imtvsr3},
  because the original signal does not have
  a truly sparse discrete gradient. The recovered signals
  for either formulation have comparable signal-to-noise
  ratios (SNR), approximately 26.10 for \SR~and 26.03
  for standard TV (these numbers vary quite a bit based
  on parameter choice and maximum number of iterations).}

\noindent
\blue{{\bf Analysis.}} We can further analyze \SR~for the \blue{specific} $\bC$
used in the TV denoising problem
in order to understand the mediocre performance of
unaccelerated \SR.
Setting $\bx = \vectorize(\bX)$, we have
\[
\begin{aligned}
\bA * \bX &= \cF^{-1} \Diag(\hat\bc) \cF \bx, \quad \bD_x \bX = \cF^{-1}\Diag(\hat\bd_x)\cF \bx, \\
\bD_y \bX &= \cF^{-1}\Diag(\hat\bd_y)\cF \bx
\end{aligned}
\]
where $\cF\bx$ corresponds to taking a 2D Fourier transform, i.e. of $\cF\bx = \vectorize(\cF^{(2\mathrm{d})}\bX)$.
Then, $\bF_\kappa$ can be written as
\[
\small
\begin{bmatrix}
\kappa \cF^{-1} \Diag(\hat\bc) \bH_\kappa^{-1}\begin{bmatrix} \Diag(\hat\bd_x) & \Diag(\hat \bd_y)\end{bmatrix} \cF\\ \\
\sqrt{\kappa}\cF^{-1}\left(\mathbf I - \kappa \begin{bmatrix} \Diag(\hat\bd_x) \\ \Diag(\hat\bd_y) \end{bmatrix} \bH_\kappa^{-1} \begin{bmatrix} \Diag(\hat\bd_x) & \Diag(\hat \bd_y)\end{bmatrix}\right )\cF
\end{bmatrix} \; ,
\]
where
\[
\bH_\kappa = \cF^{-1}\Diag(\hat \bc \odot \hat \bc + \kappa\hat \bd_x \odot \hat \bd_x + \kappa\hat \bd_y \odot \hat \bd_x) \cF,
\]
and $\odot$ is element-wise multiplication.
The \SR~formulation \eqref{eq:imtvsr3}  reduces to 
\[
\min_{\bw} \frac{1}{2}\|\bF_\kappa \bw - \bg_\kappa\|^2 + \lambda\|\bw\|_1,
\]
with $\bF_\kappa$ and $\bg_\kappa$ as above, and  
\(
\bw = \vectorize \left (\circ \sqrt{\bW_x^\elsq + \bW_y^\elsq } \right ) \; ,
\)
where $\circ\sqrt{A}$ and $A^\elsq$ denote element-wise
square root and squaring operations, respectively.

Setting $\hat\bh = \hat \bc \odot \hat \bc + \kappa\hat \bd_x \odot \hat \bd_x + \kappa\hat \bd_y \odot \hat \bd_x$, we have
\[
\bF_\kappa^\top \bF_\kappa = \cF^{-1}\cA_\kappa\cF,
\]
with $\cA_\kappa$ given by 
\[
\small
\begin{bmatrix}
\kappa \mathbf I - \kappa^2 \Diag(\hat\bd_x \odot \hat \bh^{-1} \odot \hat \bd_x) & - \kappa^2 \Diag(\hat\bd_x \odot \hat \bh^{-1} \odot \hat \bd_y)\\
- \kappa^2 \Diag(\hat\bd_y \odot \hat \bh^{-1} \odot \hat \bd_x) & \kappa \mathbf I - \kappa^2 \Diag(\hat\bd_y \odot \hat \bh^{-1} \odot \hat \bd_y)
\end{bmatrix}.
\] 
$\bF_\kappa^\top \bF_\kappa$ is a $2\times 2$ block system of diagonal matrices,
so we can efficiently compute its eigenvalues, thereby obtaining the singular values of $\bF_\kappa$.
In Figure~\ref{fig:tv_spectrum}, we plot the spectrum of
$\bF_\kappa$.
\begin{figure}[t]
\centering
\includegraphics[width=0.5\textwidth]{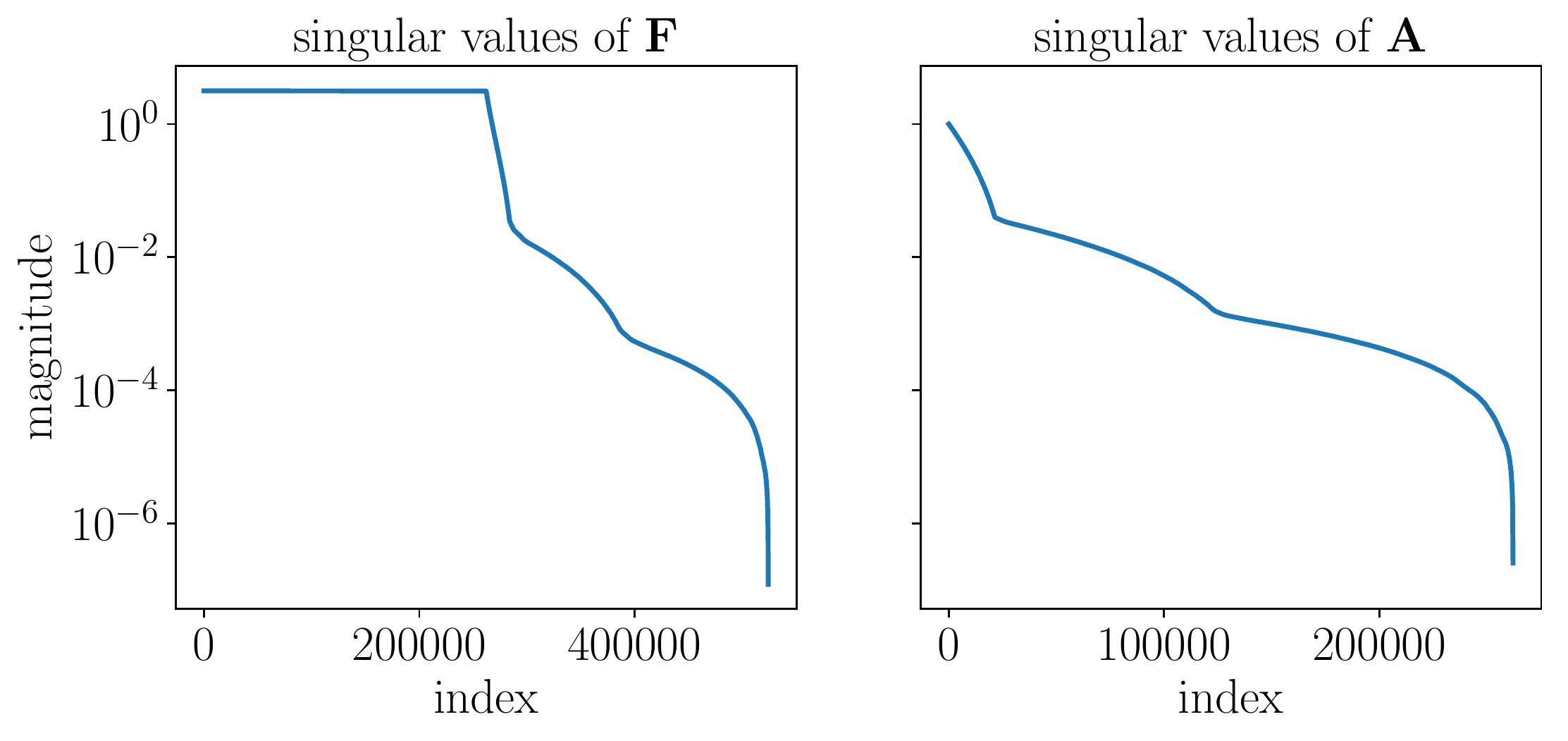}
\caption{Singular values (ordered by magnitude) of $\bF_\kappa$ (left panel) and $\bA$ (right panel) in the TV example.}
\label{fig:tv_spectrum}
\end{figure}
Half of the singular values are exactly $\sqrt{\kappa}$, and the other half drop rapidly to 0.
This spectral property is responsible for the slow sublinear convergence rate
of SR3. Because of the special structure of the $\bC$ matrix, $\bF_\kappa$ does not improve 
conditioning as in the LASSO example, where $\bC= \bI$. The \SR~formulation still makes it simple to apply the FISTA algorithm to 
the reduced problem~\eqref{eq:valueExp}, improving the convergence rates.

\subsection{\SR~for Exact Derivatives}
TV regularizers are often used in physical settings,
where the position and the magnitude of the
non-zero values for the derivative matters.
In this numerical example, we use synthetic data to
illustrate the efficacy of \SR~for such problems.
In particular, we demonstrate that the use of nonconvex
regularizers can improve performance.
\noindent
\blue{{\bf Problem setup.}
Consider a piecewise constant step function with dimension $\bx_t\in\mathbb{R}^{500}$ and values from $-2$ to $2$, see the first row of
Figure~\ref{fig:syn_tv} for a sample plot.
We take $100$ random measurements $\bb = \bA \bx_t + \sigma\bm \epsilon$ of the signal, where the elements of
$\bA$ and $\bm \epsilon$ are i.i.d. standard Gaussian, and we choose a noise level of $\sigma=1$.}

\begin{figure}[h]
\centering
\includegraphics[width=0.49\textwidth]{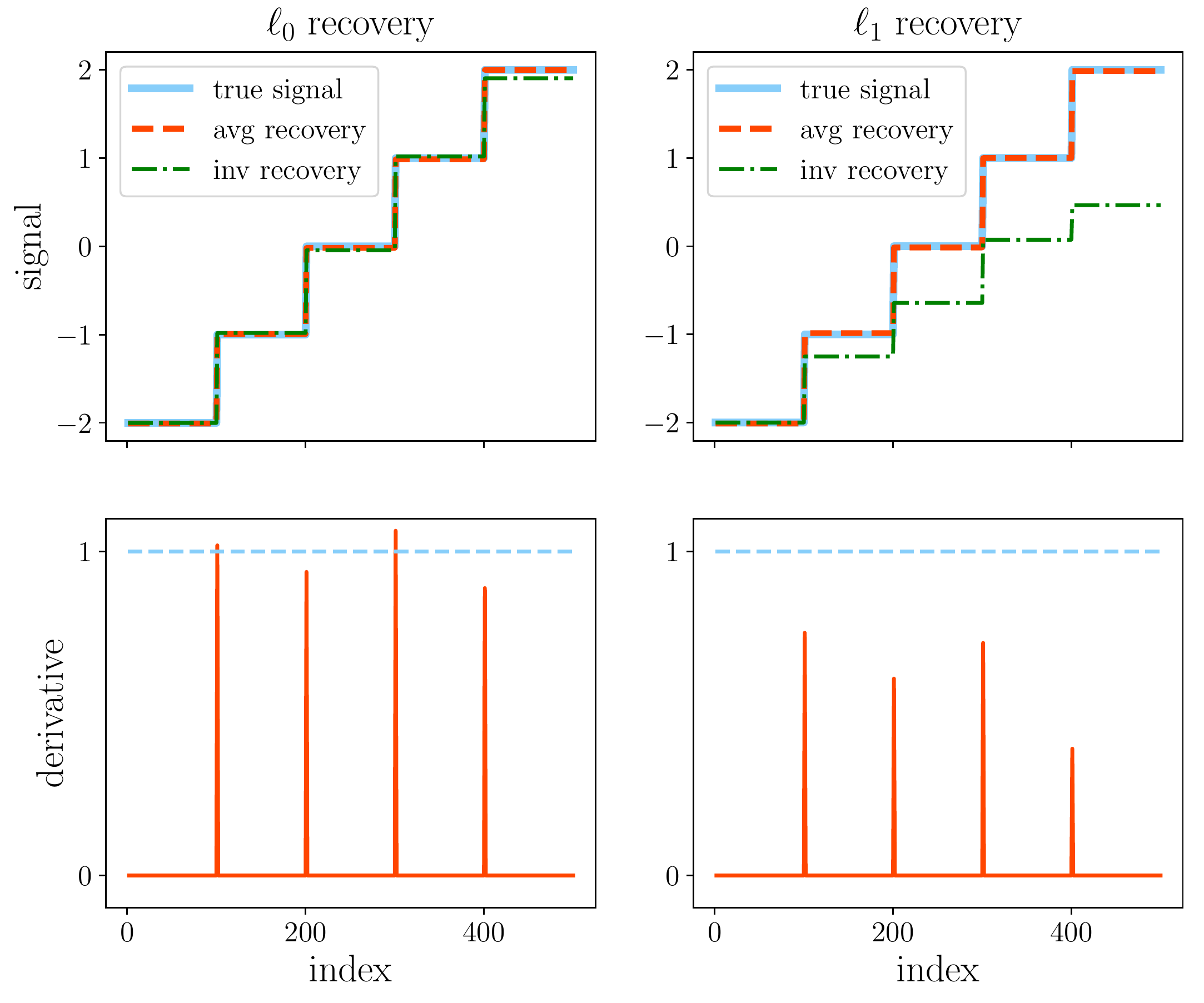}
\caption{SR3 TV regularization result on synthetic data. The first row plots the averaging recovery signal (dashed red line), integrating recovery signal (dot dashed green line) and
the true signal (solid blue line). Second row plots the discretized derivative (solid red line) and true magnitude (dashed blue line).
First column contain the results come from $\ell_0$ regularization, second column is from $\ell_1$.}
\label{fig:syn_tv}
\end{figure}

To recover the signal, we solve the \SR~formulation
\[
\min_{\bx, \bw} \frac{1}{2}\|\bA \bx - \bb\|^2 + \lambda R(\bw) + \frac{1}{2}\|\bw - \bC \bx\|^2,
\]
where $R$ is chosen to be $\|\cdot\|_0$ or $\|\cdot\|_1$, and $\bC$ is the appropriate forward difference matrix.
We want to both recover the signal $\bx_t$ and obtain
an estimate of the discrete derivative using $\bw$.

\noindent
\blue{{\bf Parameter selection.}
We set $\kappa=1$ and choose $\lambda$ by cross-validation. 
We set $\lambda=0.07$ when $R = \ell_1$ and $\lambda=0.007$
when $R = \ell_0$.
}

\noindent
\blue{{\bf Results.}
Results are shown in Figure~\ref{fig:syn_tv}, with the first row showing the recovered signals (red dashed line and green dot-dashed line) vs. true signal (blue solid line) 
and the second row showing the estimated signal derivative $\bw$.}
If we explicitly use the fact that our signal is a step function,
it is easy to recover an accurate approximation of the
signal using both $\bx$ and $\bw$.
We define groups of indices corresponding to contiguous
sequences for which $w_i = 0$. For such contiguous groups,
we set the value of the recovered signal to be the mean
of the $x_i$ values.
Ideally, there should be five such groups.
In order to recover the signal, we need good group identification (positions of nonzeros in $\bw$) and an unbiased estimation for signal $\bx$.
From the red dash line in the first row of Figure~\ref{fig:syn_tv}, we can see that both $\ell_0$ and $\ell_1$ reasonably achieve this goal using the grouping procedure.
However, such an explicit assumption on the structure of
the signal may not be appropriate in more complicated
applications. A more generic approach would ``invert'' $\bC$
(discrete integration in this example) to reconstruct the
signal given $\bw$.
From the second row of Figure~\ref{fig:syn_tv} we see that $\ell_0$-TV obtains a better unbiased estimation of the magnitude of the derivative compared to $\ell_1$-TV; accordingly, the signal
reconstructed by integration is more faithful using the
$\ell_0$-style regularizatoin.

\begin{figure}[t]
\centering
\includegraphics[width=0.4\textwidth]{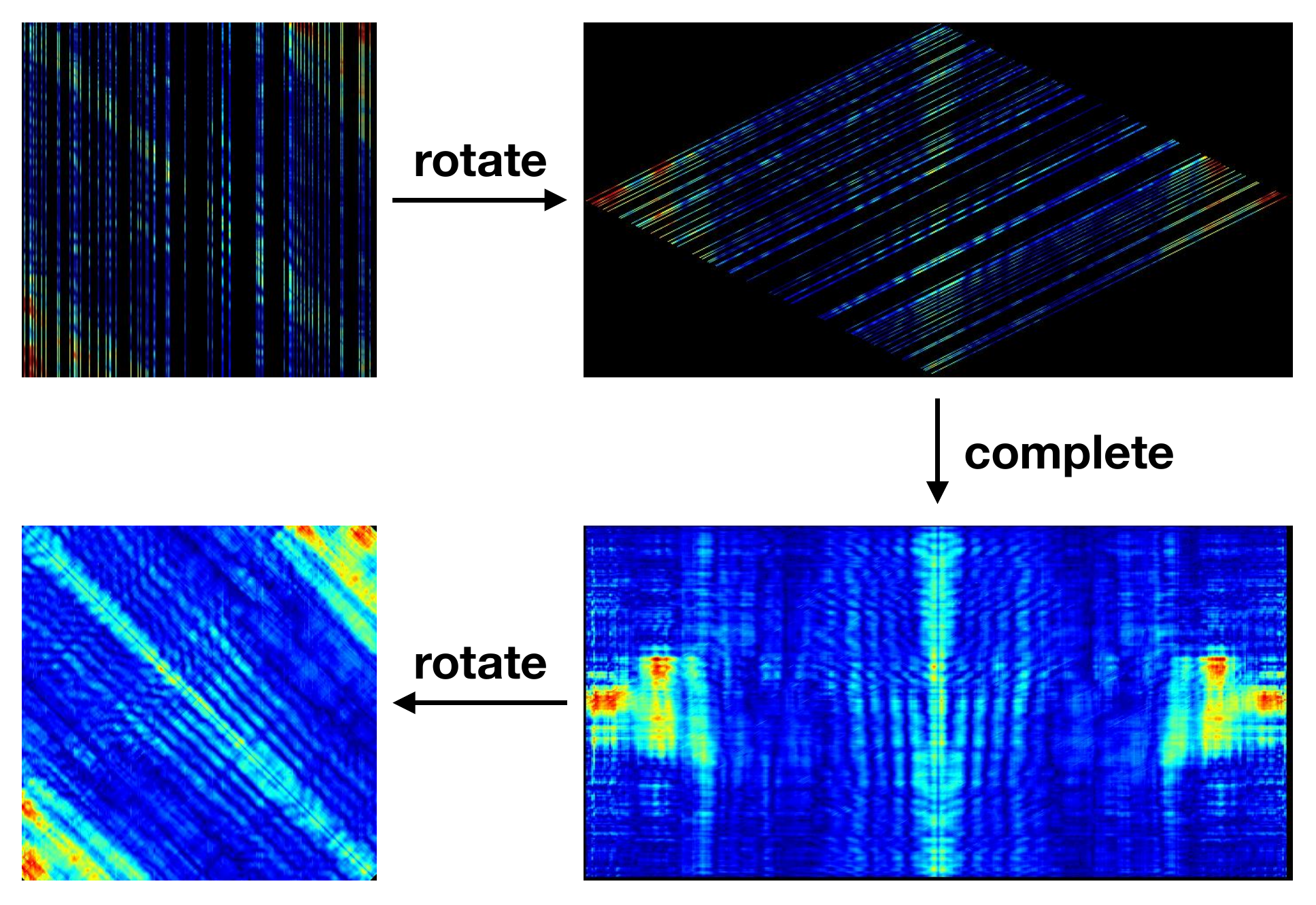}
\caption{\label{fig:mat_data} Interpolating a frequency slice from the Gulf of Suez dataset. Clockwise 
we see subsampled data in the source-receiver domain; transformation of the data to the midpont-offset domain, 
interpolation, and inverse transform back to the source/receiver domain.}
\end{figure}

\subsection{\SR~for Matrix Completion}
Analogous to sparsity in compressed sensing, low-rank structure has been used to solve a variety of matrix completion problems, including the famous Netflix Prize problem, as well as in control, 
system identification, {signal processing~\cite{Yang2017}}, combinatorial
optimization~\cite{RechtFazelParrilo2010,Candes2011-JACM}, 
and seismic data interpolation/denoising~\cite{oropeza:V25, aravkin2014fast}. 

We compare classic rank penalty approaches using the 
nuclear norm (see e.g.~\cite{RechtFazelParrilo2010}) to the \SR~approach 
on a seismic interpolation example. 
Seismic data interpolation is crucial for accurate inversion and imaging procedures such as full-waveform inversion \cite{virieux2009overview}, reverse-time migration \cite{baysal1983reverse} and multiple removal methods \cite{verschuur1992adaptive}.
Dense acquisition is prohibitively expensive in these applications, motivating reduction in seismic measurements. On the other hand, using subsampled sources and receivers without interpolation gives unwanted imaging artifacts.
The main goal is to simultaneously sample and compress a signal using optimization to replace dense acquisition, thus enabling a range of applications in seismic data processing at a fraction of the cost.
\begin{figure*}[h!]
\centering
\begin{tabular}{c  c}
\includegraphics[width=0.2\textwidth]{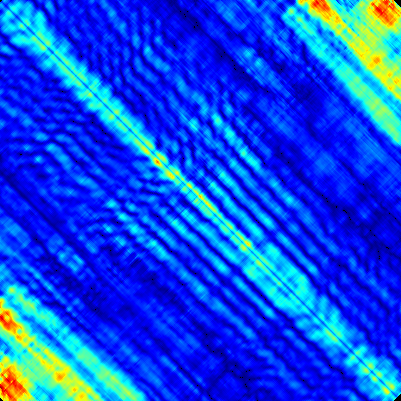} \includegraphics[width=0.2\textwidth]{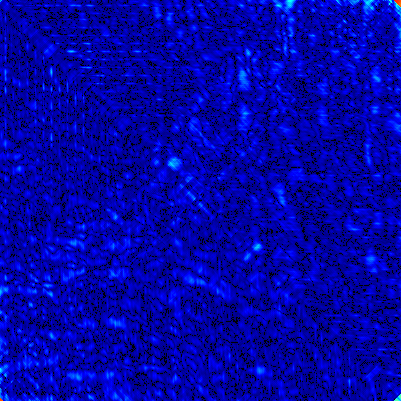} &
\includegraphics[width=0.2\textwidth]{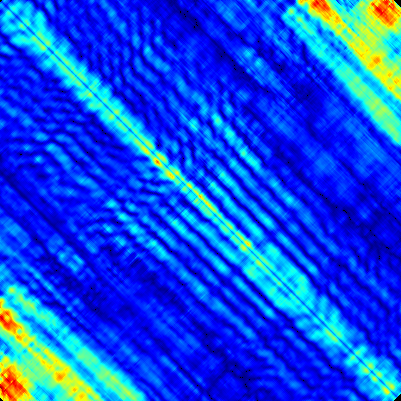} \includegraphics[width=0.2\textwidth]{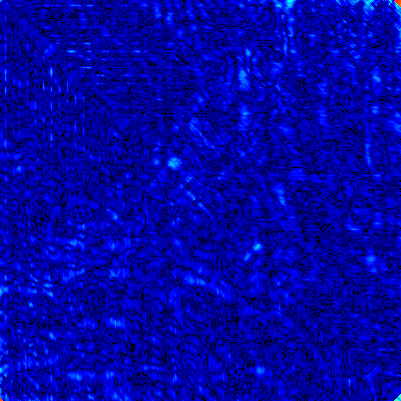} \\
(a) \SR~\eqref{eq:mat_sr3}, $R = \|\cdot\|_0$, SNR: $12.6489$ & (b) \SR~\eqref{eq:mat_sr3}, $R = \|\cdot\|_1$, SNR: $12.3508$\\
\includegraphics[width=0.2\textwidth]{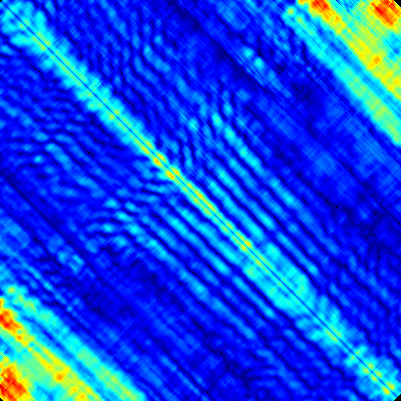} \includegraphics[width=0.2\textwidth]{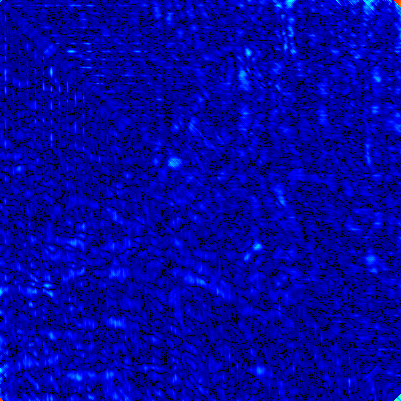} &
\includegraphics[width=0.2\textwidth]{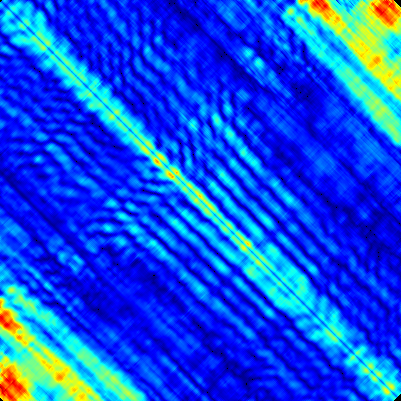} \includegraphics[width=0.2\textwidth]{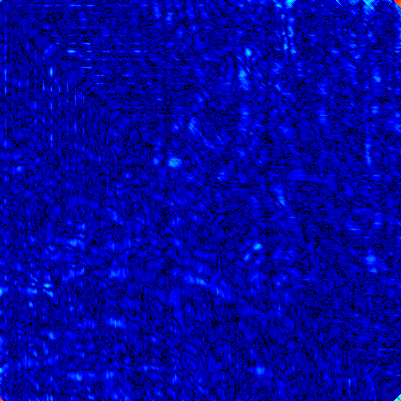} \\
(c) \eqref{eq:mat_pg}, $R = \|\cdot\|_0$, SNR: $12.1929$ & (d) \eqref{eq:mat_pg}, $R = \|\cdot\|_1$, $12.0572$
\end{tabular}
\includegraphics[width=0.5\textwidth]{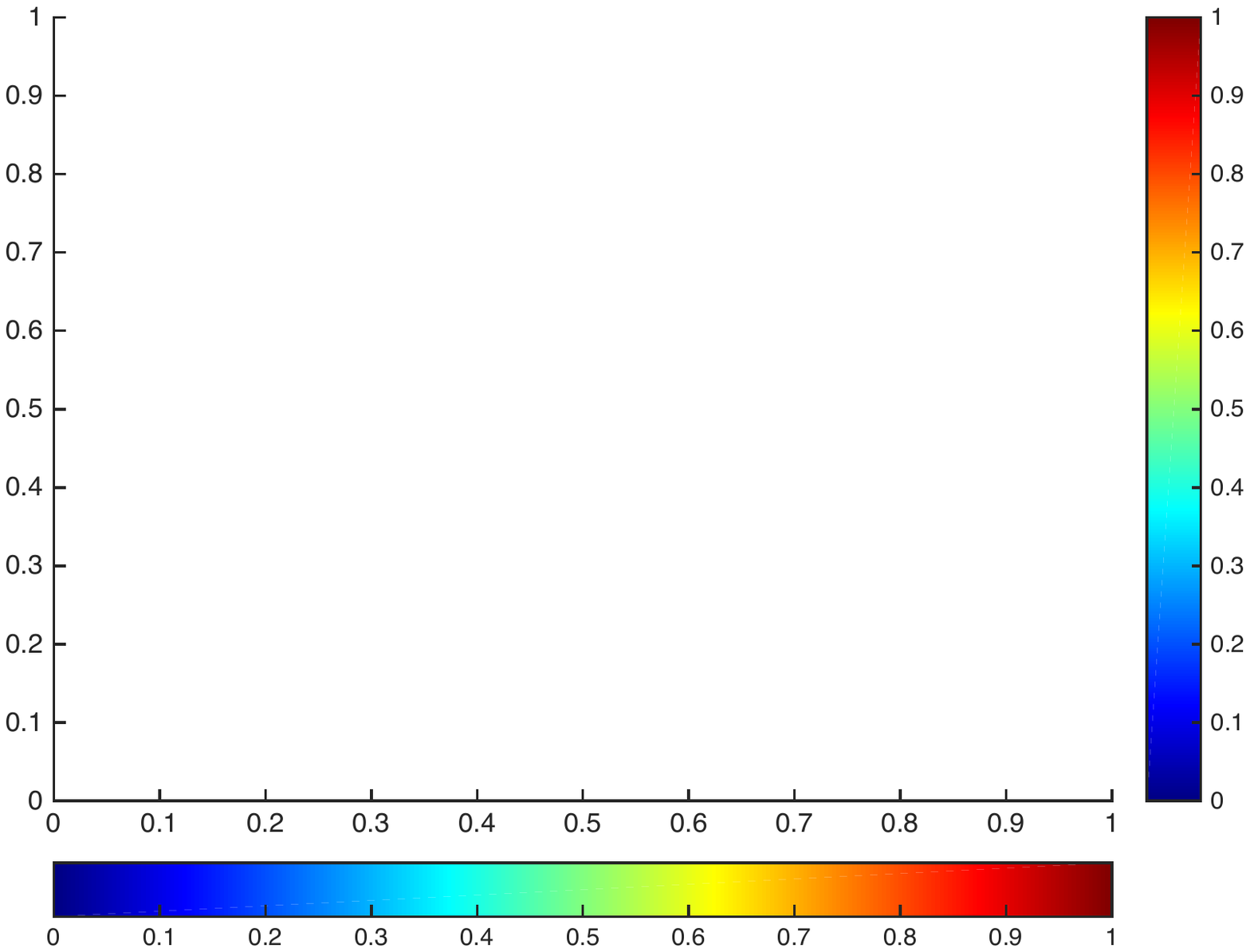}
\caption{Result comparison SR3 vs. classic low rank regression. In each subplot, we show the recovered signal matrix (left) and the difference between recovered the true signal (right).
The corresponding SNR is provided.
(a), (b) plot the the results of SR3 with $\ell_0$ and $\ell_1$ regularizers. (c), (d) plot the results of classic formulation with $\ell_0$ and $\ell_1$ regularizers.}
\label{fig:mat_result}
\end{figure*}

\noindent
\blue{{\bf Problem setup.}
We use a real seismic line from the Gulf of Suez. The signal is stored in a $401\times 401$ complex matrix, 
arranged as a matrix by source/receiver, see the left plot of Fig.~\ref{fig:mat_data}.}
Fully sampled seismic data has a fast decay of singular values, while sub-sampling breaks this decay~\cite{aravkin2014fast}.
A convex formulation for matrix completion with nuclear norm is given by~\cite{RechtFazelParrilo2010}
\begin{equation}
\label{eq:mat_pg}
\min_{\bX} \frac{1}{2}\|\mathcal{A}(\bX) - \bD\|_F^2 + \lambda R(\sigma(\bX))
\end{equation}
where $\mathcal{A}$ maps $\bX$ to data $\bD$, and  
$R(\cdot) = \|\cdot\|_1$ penalizes rank. 

The \SR~model relaxes~\eqref{eq:mat_pg} to obtain the formulation 
\begin{equation}
\label{eq:mat_sr3}
\min_{\bX, \bW} \frac{1}{2}\|\mathcal{A}(\bX)  - \bD\|_F^2 + \lambda R(\sigma(\bW)) + \frac{\kappa}{2}\|\bW - \bX\|_F^2.
\end{equation}
To find $\bX(\bW)$, the minimizer of~\eqref{eq:mat_sr3} with respect to $\bX$, we solve a least squares 
problem. The $\bW$ update requires thresholding the singular values of $\bX(\bW)$. 
We compare the results from four formulations, SR3 $\ell_0$, SR3 $\ell_1$, classic $\ell_0$ and classic $\ell_1$, i.e.
the equations
\begin{equation}
\label{eq:mat_pg}
\min_{\mathbf{X}} \frac{1}{2}\|\mathcal{A}(\mathbf{X}) - \mathbf{D}\|_F^2 + \lambda R(\sigma(\mathbf{X}))
\end{equation}
and
\begin{equation}
\label{eq:mat_sr3}
\min_{\mathbf{X}, \mathbf{W}} \frac{1}{2}\|\mathcal{A}(\mathbf{X})  - \mathbf{D}\|_F^2 + \lambda R(\sigma(\mathbf{W})) + \frac{\kappa}{2}\|\mathbf{W} - \mathbf{X}\|_F^2 \; ,
\end{equation}
where $R$ can be either $\ell_1$ or $\ell_0$.
To generate figures from \SR~solutions, we look at the signal matrix $\mathbf{X}$ rather than the auxiliary matrix $\mathbf{W}$, since we 
want the interpolated result  rather a support estimate, as in the compressive sensing examples. 

In Figure~\ref{fig:mat_data}, 85\% of the data is missing. We arrange the frequency slice
into a $401 \times 401$ matrix, and then transform the data into the midpoint-offset domain following~\cite{aravkin2014fast}, 
with $m = \frac{1}{2}(s+r)$ and $h = \frac{1}{2}(s-r)$, increasing the dimension to $401 \times 801$. 
We then solve~\eqref{eq:mat_sr3} to interpolate the slice, and compare with the original to get a signal-to-noise ratio (SNR)
of $9.7$ (last panel in Fig.~\eqref{fig:mat_data}). 
The SNR obtained by solving~\eqref{eq:mat_pg} is $9.2$. 

\noindent
\blue{{\bf Parameter selection.}
We choose $\kappa = 0.5$ for all the experiments and do a cross validation for $\lambda$.
When $R = \ell_1$, we range $\lambda$ from 5 to 8 and when $R = \ell_0$, we range $\lambda$ from 200 to 400.
}

\begin{figure}[h]
\centering
\includegraphics[width=0.49\textwidth]{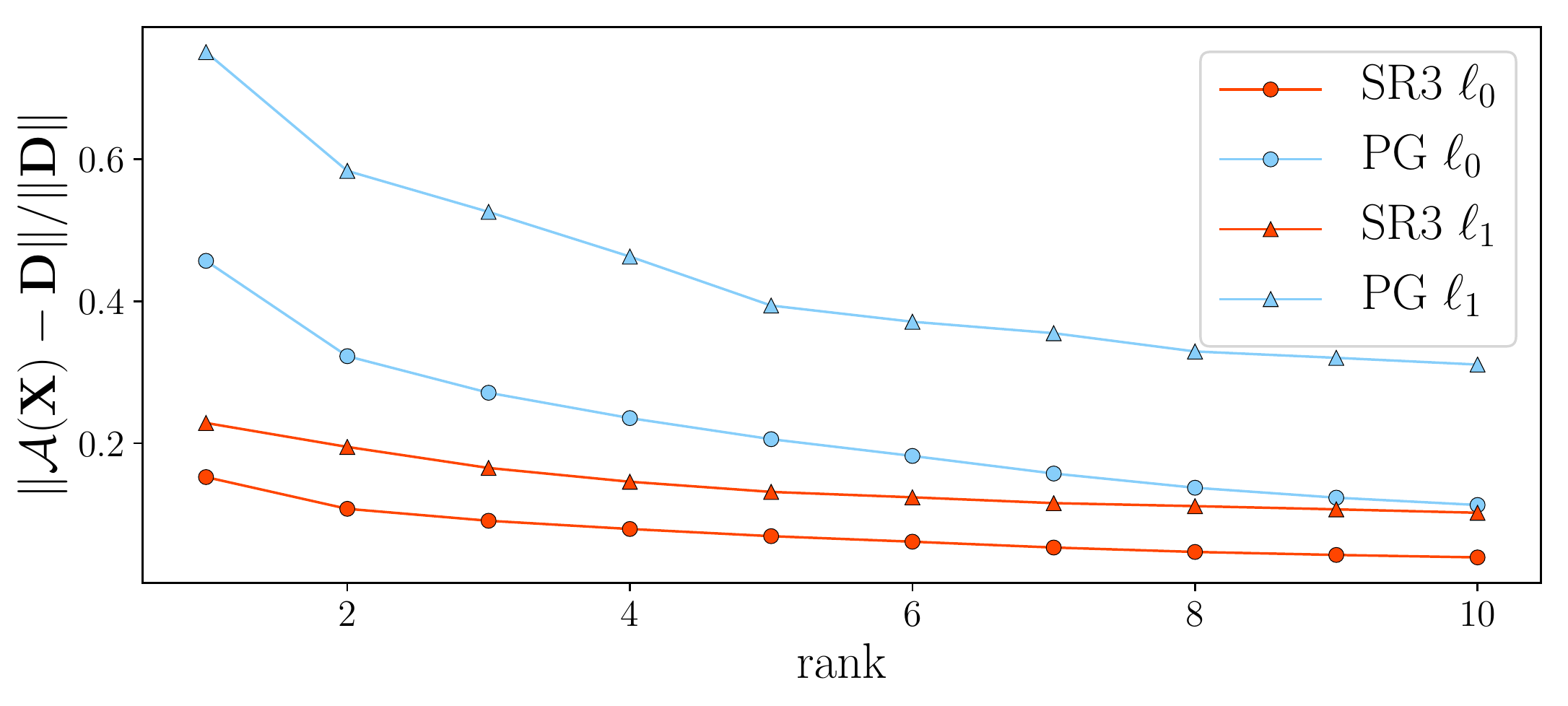}
\caption{\label{fig:pareto} Pareto frontiers (best fit achievable for each rank) for~\eqref{eq:mat_pg} with $R = \ell_1, R = \ell_0$, and for corresponding 
\SR~formulations~\eqref{eq:mat_sr3}, describing the best fits of observed values achievable for a given rank 
(obtained across regularizers for the four formulations). $\ell_0$ formulations are more efficient than 
those with $\ell_1$, and \SR~formulations~(\ref{eq:mat_sr3}) are more efficient classic formulations~(\ref{eq:mat_pg}).}
\end{figure}

\noindent
\blue{{\bf Results.}
Results are shown in Figures~\ref{fig:mat_result} and~\ref{fig:pareto}. The relative quality of the images is hard to compare with the naked eye, 
so we compute the Signal to Noise Ratio (SNR) with respect to the original (fully sampled) data to present a comparison. 
SR3 fits original data better than the solution of~\eqref{eq:mat_pg}, obtaining a maximum SNR of 12.6,  \blue{see Figure~\ref{fig:mat_result}}.}
%

We also generate Pareto curves for the four approaches, plotting achievable misfit on the observed data against the ranks of the solutions. 
Pareto curves for $\ell_0$ formulations  lie below those of $\ell_1$ formulations, i.e. using the 0-norm allows better 
data fitting for a given rank, and equivalently a lower rank at a particular error level, see~Figure~\ref{fig:pareto}. 
The Pareto curves obtained using the \SR~approach are lower still, through the relaxation. 

\subsection{\SR~for Group Sparsity}
Group sparsity is a composite sparse regularizer used in
multi-task learning to regularize under-determined
learning tasks by introducing redundancy in
the solution vectors.
Consider a set of under-determined linear systems,
\[
 \bb_i =  \bA_i \bx_i + \sigma \bm \epsilon_i, \quad i = 1,\ldots,k,
\]
where $\bA_i\in\R^{m_i\times n}$ and $m_i < n$. 
If we assume a priori that some of these systems might
share the same solution vector,
we can formulate the problem of recovering the $\bx_i$ as
\[
\min_{\bx_i}~~\frac{1}{2}\sum_{i=1}^k\| \bA_i  \bx_i -  \bb_i\|_2^2 + \lambda\sum_{i=1}^{k-1}\sum_{j=i+1}^k\| \bx_i -  \bx_j\|_2
\]
where the $\ell_2$ norm promotes
sparsity of the differences $\bx_i-\bx_j$ (or, equivalently,
encourages redundancy in the $\bx_i$).
To write the objective in a compact way, set
\[
 \bx =\begin{bmatrix}
\bx_1\\
\vdots\\
 \bx_k
\end{bmatrix}, \quad
\bb = \begin{bmatrix}
\bb_1\\
\vdots\\
 \bb_k
\end{bmatrix}, \quad
 \bA = \begin{bmatrix}
 \bA_1 & & \\
& \ddots & \\
& &  \bA_k
\end{bmatrix}.
\]
We can then re-write the optimization problem as
\[
\min_{\bx}~~\frac{1}{2}\| \bA \bx -  \bb\|_2^2 + \lambda\sum_{i=1}^{k-1}\sum_{j=i+1}^k\| \mathbf{D}_{ij}  \bx\|_2 \; ,
\]
where $ \mathbf{D}_{ij}\bx $ gives the pairwise differences
between $ \bx_i$ and $ \bx_j$. There is no simple primal
algorithm for this objective, as $\|\cdot\|_2$ is not smooth
and there is no efficient prox operation for the composition of $\|\cdot\|_2$
with the mapping $\mathbf{D}$.
\begin{figure}[t]
\centering
\includegraphics[width=0.3\textwidth]{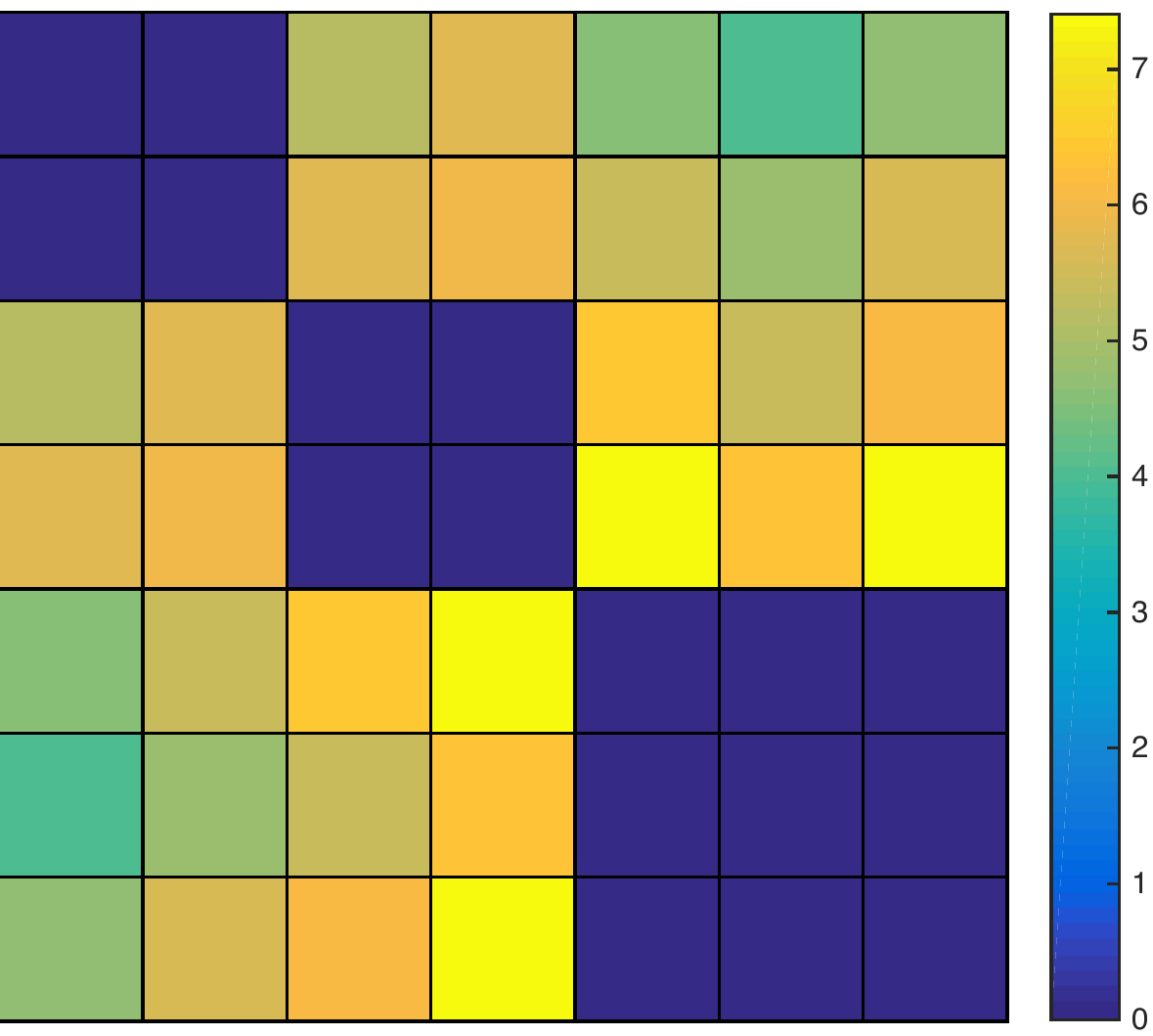}
\caption{Pairwise distance between all decision variables of different tasks obtained by \SR.}
\label{fig:multi_tasks}
\end{figure}

Applying the \SR~approach, we introduce the variables
$\bw_{ij}$ to approximate $\mathbf{D}_{ij}\bx$ and obtain
\[
\begin{aligned}
\min_{ \bx,  \bw} &~~\frac{1}{2}\| \bA  \bx -  \bb\|_2^2 + \lambda\sum_{i=1}^{k-1}\sum_{j=i+1}^k\| \bw_{ij}\|_2 \\
& \qquad+ \frac{\kappa}{2}\sum_{i=1}^{k-1}\sum_{j=i+1}^k\| \bw_{ij} -  \mathbf{D}_{ij} \bx\|_2^2 \; .
\end{aligned}
\]
\noindent
\blue{{\bf Problem setup.}
We set up a synthetic problem
with $n = 200$, $m_i = 150$, and $k = 7$.
The $ \bA_i$ are  random Gaussian matrices
and we group the true underlying signal as follows: 
\[
\bx_1 =  \bx_2, \quad  \bx_3 =  \bx_4, \quad  \bx_5 =  \bx_6 =  \bx_7 
\]
where the generators are sampled form a Gaussian distribution.
We set the noise level to $\sigma = 0.1$.}

\noindent
\blue{{\bf Parameter selection.}
We select optimization parameters to be $\lambda = 10$ and $\kappa=1$.
}

\noindent
\blue{{\bf Results.}
The pairwise distance of the result is shown in Figure~\ref{fig:multi_tasks}.
The groups have been successfully recovered. If we directly use the $\bx$
from the \SR~solution, we obtain $47\%$ relative error.
However, using the pattern discovered by $\bw$ to regroup
the least square problems, namely combine
$ \bA_1,  \bA_2$ and $ \bb_1,  \bb_2$ to solve for the first group of variables, $ \bx_1 =  \bx_2$, and so on, we improve
the result significantly to $1\%$ relative error (which is essentially
optimal given the noise).}

\section{Discussion and Outlook}
Sparsity promoting regularization of regression problems continues to play a critical role in obtaining actionable and interpretable models from data.  Further, the robustness, computational efficiency, and generalizability of such algorithms is required for them to have the potential for broad applicability across the data sciences.  The \SR~algorithm developed here satisfies all of these important criteria and provides a broadly applicable, simple architecture that is better than state-of-the-art methods for compressed sensing, matrix completion, LASSO, TV regularization, and group sparsity.  Critical to its success is the relaxation that splits {sparsity} and {accuracy} requirements. 

\blue{
The \SR~approach introduces an additional relaxation parameter. In the empirical results presented here, we did not vary $\kappa$ significantly, showing that 
for many problems, choosing $\kappa \approx 1$ can improve over the state of the art. The presence of $\kappa$ affects the regularization parameter $\lambda$, 
which must be tuned even if a good $\lambda$ is known for the original formulation. Significant improvements can be achieved by  
choices of the pair $(\kappa, \lambda)$; we recommend using cross-validation, and leave automatic strategies for parameter tuning to future work.  
}

The success of the relaxed formulation also suggests broader applicability of \SR.  
Specially, we can also consider the general optimization problem associated with nonlinear functions, such as the training of neural networks, 
optimizing over a set of supervised input-output responses that are given by a nonlinear function $f(\cdot)$ with constraints. 
 The relaxed formulation of \eqref{eq:generalxw} generalizes to
\begin{equation}
\label{eq:NNxw}
\min_{\bx,\bw} f( \bA,\bx,\bb) + \lambda R(\bw) + \frac{\kappa}{2} \|\mathbf{C}\bx-\bw\|^2.
\end{equation}
Accurate and sparse solutions for such neural network architectures can be more readily generalizable, analogous with how \SR~helps to achieve robust variable selection in sparse linear models.
The application to neural networks is beyond the scope of the current manuscript, but the architecture proposed has great potential for broader applicability.

\appendix

\section{}

We review necessary preliminaries from the optimization literature, and then 
present a series of theoretical results that explain some of the properties of SR3 solutions and characterize 
convergence of the proposed algorithms. 

\section*{Mathematical Preliminaries}
Before analyzing SR3, we give some basic results from the non-smooth optimization literature.

\subsection*{Subdifferential and Optimality}
In this paper, we work with nonsmooth functions, both convex and nonconvex. 
Given a convex nonsmooth function  $f\colon\R^n\to\overline \R$ and a point $\bar x$ with $f(\bar x)$ finite, 
the {\it subdifferential} of $f$ at $\bar x$, denoted
$\partial f(\bar x)$, is the set of all vectors $v$ satisfying
\[
f(x)\geq f(\bar x)+\langle v,x-\bar x\rangle \quad \forall \; x.
\]
The classic necessary stationarity condition $0 \in \partial f(\bar x)$ implies $f(x) \geq f(\bar x)$ for all $x$, i.e. global optimality. 
The definition of subdifferential must be amended for the general nonconvex case. 
Given an arbitrary function $f\colon\R^n\to\overline \R$ and a point $\bar x$ with $f(\bar x)$ finite,
the  {\em Fr\'{e}chet subdifferential} of $f$ at $\bar x$, denoted $\hat \partial f(\bar x)$, is the set of all vectors $v$ satisfying
\[f(x)\geq f(\bar x)+\langle v,x-\bar x\rangle+o(\|x-\bar x\|)\qquad \textrm{ as }x\to \bar x.\]
Thus the inclusion $v\in\hat\partial f(\bar x)$ holds precisely when the affine function
$x\mapsto f(\bar x)+\langle v,x-\bar x\rangle$ underestimates $f$ up to first-order near $\bar x$.
In general, the limit of Fr\'{e}chet subgradients $v_i\in \hat\partial f(x_i)$, along a
sequence $x_i\to\bar x$, may not be a Fr\'{e}chet subgradient at the limiting point $\bar x$.
Therefore, one formally enlarges the Fr\'{e}chet subdifferential and
defines the {\em limiting subdifferential} of $f$ at $\bar x$, denoted
$\partial f(\bar x)$, to consist of all vectors $v$ for which there exist
sequences $x_i$ and $v_i$, satisfying $v_i\in \partial f(x_i)$ and
$(x_i,f(x_i),v_i)\to (\bar x, f(\bar x),v)$. 
In this general setting, the condition $0 \in \partial f(\bar x)$ is necessary but not sufficient. 
However, stationary points are the best we can hope to find using iterative methods, 
and distance to stationarity serves as a way to detect convergence and analyze algorithms.  
In particular, we design and analyze algorithms that find the stationary points of
\eqref{eq:generic} and \eqref{eq:valueExp}, which are
defined below, for both convex and nonconvex regularizers $R(\cdot).$
\begin{definition}[Stationarity]
\label{df:stationary}
We call $\hat \bx$ the stationary point of \eqref{eq:generic} if,
\[
\bm 0 \in \bA^\top(\bA \hat \bx - \bb) + \lambda \bC^\top \partial R(\hat \bx).
\]
And $(\hat\bx, \hat\bw)$ the stationary point of \eqref{eq:valueExp} if,
\[\begin{aligned}
\bm 0 &= \bA^\top(\bA \hat \bx - \bb) + \kappa\bC^\top(\bC \hat \bx - \hat \bw),\\
\bm 0 &\in \lambda \partial R(\hat \bw) + \kappa(\hat \bw - \bC \hat \bx).
\end{aligned}\]
\end{definition}

\subsection*{Moreau Envolope and Prox Operators}
For any function $f$ and real $\eta>0$, the {\em Moreau envelope}
and the {\em proximal mapping} are defined by 
\begin{align}
  f_{\eta}(x)&:=\inf_{z}\, \left\{ f(z)+\tfrac{1}{2\eta}\|z-x\|^2\right\},
  \label{eq:envelope} \\
\prox_{{\eta}f}(x) &:=\argmin_{z}\, \left\{ \eta f(z)+\tfrac{1}{2}\|z-x\|^2\right\},
\end{align}
respectively.
The Moreau envelope has a smoothing effect on
convex functions, characterized by the following theorem.
Note that a proper function $f$ satisfies that $f>-\infty$ and
it takes on a value other than $+\infty$ for some $x$. A closed
function satisfies that $\{ x: f(x) \leq \alpha \}$ is a closed set
for each $\alpha \in \R$.
\begin{theorem}[Regularization properties of the envelope]\label{th:lip_cont}
  Let $f\colon\R^n\to\R$ be a proper closed convex function. Then $f_{\eta}$ is
  convex and $C^1$-smooth with
$$\nabla f_{\eta}(x)=\tfrac{1}{\eta}(x-\prox_{\eta f}(x))
\quad \textrm{ and }\quad \lip(\nabla f_{\eta})\leq \tfrac{1}{\eta}.$$
If in addition $f$ is $L$-Lipschitz, then the envelope $f_{\eta}(\cdot)$ is
$L$-Lipschitz and  satisfies
\begin{equation}\label{eqn:moreau}
	0\leq f(x)- f_{\eta}(x)\leq \frac{L^2\eta}{2}\qquad \textrm{ for all } x\in\R^n.
\end{equation}
\end{theorem}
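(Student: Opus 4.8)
The plan is to establish the claims in the natural order, building everything on the single optimality characterization of the proximal map. First I would note that since $f$ is proper, closed, and convex, the objective $z \mapsto f(z) + \frac{1}{2\eta}\|z-x\|^2$ is proper, closed, and $\frac{1}{\eta}$-strongly convex, hence coercive with a unique minimizer; this makes $f_\eta$ finite everywhere and $\prox_{\eta f}$ single-valued. Writing $p := \prox_{\eta f}(x)$, the first-order condition reads $\frac{1}{\eta}(x - p) \in \partial f(p)$, the identity I will use throughout. Convexity of $f_\eta$ then follows immediately, since $(x,z) \mapsto f(z) + \frac{1}{2\eta}\|z-x\|^2$ is jointly convex and $f_\eta$ is its partial minimization over $z$.

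The core step is the gradient formula. Set $g := \frac{1}{\eta}(x-p)$. I would prove $g \in \partial f_\eta(x)$ directly: for any $z$, the subgradient inequality $f(z) \ge f(p) + \langle g, z - p\rangle$ (valid since $g \in \partial f(p)$) gives $f(z) + \frac{1}{2\eta}\|z-y\|^2 \ge f(p) + \langle g, z-p\rangle + \frac{1}{2\eta}\|z-y\|^2$; minimizing the right-hand side over $z$ and simplifying (using $x - p = \eta g$) yields exactly $f_\eta(y) \ge f_\eta(x) + \langle g, y-x\rangle$. For uniqueness, I would use the elementary upper bound $f_\eta(y) \le f(p) + \frac{1}{2\eta}\|p-y\|^2$ (feasibility of $p$ at $y$) together with $f_\eta(x) = f(p)+\frac{1}{2\eta}\|p-x\|^2$ to obtain $f_\eta(y) - f_\eta(x) \le \langle g, y-x\rangle + \frac{1}{2\eta}\|y-x\|^2$; any $v \in \partial f_\eta(x)$ must then satisfy $\langle v - g, u\rangle \le 0$ for all directions $u$ after taking $y = x + tu$ and letting $t \to 0^+$, forcing $v = g$. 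A convex function with a singleton subdifferential is differentiable there, so $\nabla f_\eta(x) = \frac{1}{\eta}(x - \prox_{\eta f}(x))$.

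Next I would establish the Lipschitz bound on the gradient, which simultaneously upgrades differentiability to $C^1$. The key lemma is firm nonexpansiveness of the prox: for $p = \prox_{\eta f}(x)$ and $q = \prox_{\eta f}(y)$, monotonicity of $\partial f$ applied to $\frac{1}{\eta}(x-p) \in \partial f(p)$ and $\frac{1}{\eta}(y-q) \in \partial f(q)$ yields $\langle x - y, p - q\rangle \ge \|p-q\|^2$. Expanding $\|(x-y)-(p-q)\|^2$ then shows $I - \prox_{\eta f}$ is nonexpansive, so $\|\nabla f_\eta(x) - \nabla f_\eta(y)\| = \frac{1}{\eta}\|(I - \prox_{\eta f})(x) - (I - \prox_{\eta f})(y)\| \le \frac{1}{\eta}\|x-y\|$, giving $\lip(\nabla f_\eta) \le \frac{1}{\eta}$ and continuity of the gradient.

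Finally, under the extra assumption that $f$ is $L$-Lipschitz, the envelope inequalities follow from the representation $f_\eta(x) = f(p) + \frac{1}{2\eta}\|p-x\|^2$. The bound $f_\eta(x) \le f(x)$ comes from taking $z = x$ in the infimum, giving $f(x) - f_\eta(x) \ge 0$. For the upper bound I would use $f(x) - f(p) \le L\|x - p\|$ to get $f(x) - f_\eta(x) \le L\|x-p\| - \frac{1}{2\eta}\|x-p\|^2$, then maximize the right-hand side over $t = \|x-p\| \ge 0$ to obtain $\frac{L^2\eta}{2}$. That $f_\eta$ is itself $L$-Lipschitz follows from $\|\nabla f_\eta(x)\| = \frac{1}{\eta}\|x-p\| \le L$, since $\frac{1}{\eta}(x-p) \in \partial f(p)$ and every subgradient of an $L$-Lipschitz convex function has norm at most $L$. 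The main obstacle is the gradient step: establishing the singleton subdifferential and the firm nonexpansiveness of the prox (via monotonicity of $\partial f$) is where the real work lies, whereas the Lipschitz-case estimates are then short one-line consequences.
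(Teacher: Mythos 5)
Your proof is correct, but it takes a genuinely different route from the paper: the paper does not prove this theorem at all --- its ``proof'' is a one-line citation to Theorem 2.26 of Rockafellar and Wets --- whereas you supply a complete, self-contained argument. Your development is the standard one in spirit but carefully executed: the optimality condition $\tfrac{1}{\eta}(x-\prox_{\eta f}(x)) \in \partial f(\prox_{\eta f}(x))$ drives everything; convexity of $f_\eta$ follows from partial minimization of a jointly convex function; the gradient formula is obtained by exhibiting $g = \tfrac{1}{\eta}(x-p)$ as a subgradient of $f_\eta$ and then forcing uniqueness via the quadratic upper bound $f_\eta(y) - f_\eta(x) \le \langle g, y-x\rangle + \tfrac{1}{2\eta}\|y-x\|^2$ (I checked the minimization computation behind the lower bound; it closes exactly as you claim); the Lipschitz constant $1/\eta$ follows from firm nonexpansiveness of the prox via monotonicity of $\partial f$, which also upgrades differentiability to $C^1$; and the $L$-Lipschitz case reduces to maximizing $Lt - t^2/(2\eta)$ and to the norm bound on subgradients of Lipschitz convex functions. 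What the paper's approach buys is brevity and delegation to an authoritative reference; what yours buys is a proof readable without external sources, and it makes visible exactly which hypotheses are used where (closedness and convexity for existence and uniqueness of the prox, monotonicity for the gradient Lipschitz bound, the Lipschitz hypothesis only in the final estimates). One cosmetic remark: in the gradient step you use the point $y$ before introducing it --- it should be declared as the arbitrary point at which $f_\eta$ is being evaluated --- but this is a notational slip, not a gap.
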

\begin{proof}
See Theorem 2.26 of \cite{RW98}.
\end{proof}
However, when $f$ is not convex, $f_\eta$ may no longer be smooth as we show in Figure~\ref{fig:l0_envelope} where we use $\ell_0$ as an example.
\begin{figure}[t]
\centering
\includegraphics[width=0.5\textwidth]{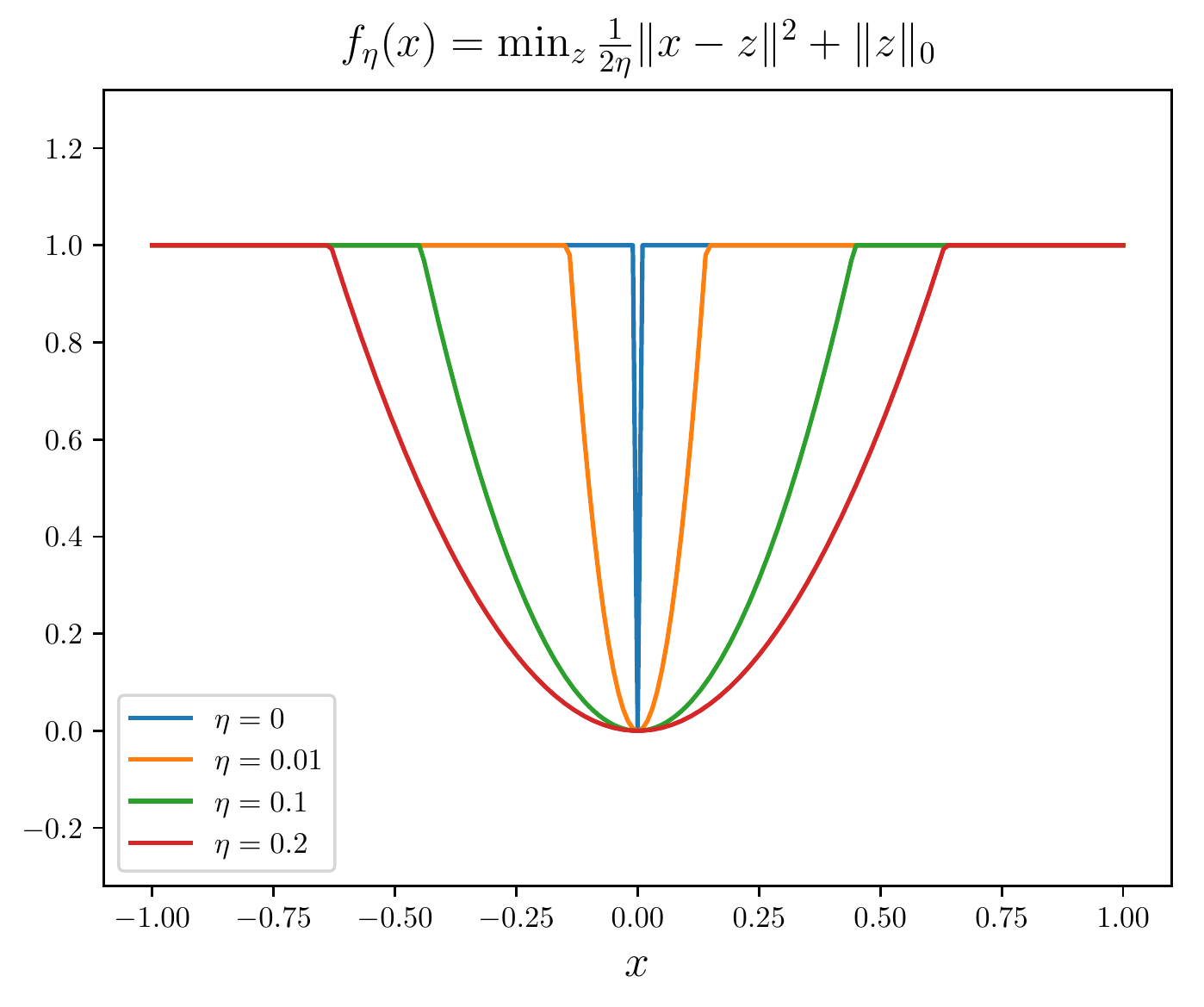}
\caption{Envelope functions indexed by the parameter $\eta$, for $f = \|\cdot\|_0$. In contrast to the convex case, here all $f_\eta$ are nonsmooth and nonconvex. }
\label{fig:l0_envelope}
\end{figure}

\subsection*{Common Prox Operators}
The prox operator is useful when designing algorithms that handle
non-smooth and non-convex functions.
Its calculation is often straightforward when the function $f$
decouples element-wise.
To illustrate the idea, we derive proximal mappings for $\ell_1, \ell_0, \ell_2^2$, and $\ell_2$. 
Many more operators can be found e.g. in~\cite{combettes2011proximal}.

\begin{itemize}
\item $f(\cdot) = \|\cdot\|_1$. The $\ell_1$ norm is a convex nonsmooth penalty often used to promote sparse solutions in 
regression problems.
  We include a derivation of the proximity operator for this problem
  and the remaining operators have similar derivations.
\begin{lemma}[$\ell_1$]
  The prox operator of $\ell_1$ is an element-wise soft-thresholding
  action on the given vector.
  \begin{equation} \label{eq:proxl1def}
\begin{aligned}
  \bx &= \prox_{\eta f}(\bm y) = \argmin_{\bx}~\frac{1}{2}\|\bx - \bm y\|^2 + \eta\|\bx\|_1~~\Rightarrow~~\\
  x_i & = \begin{cases}
    y_i - \eta, & y_i > \eta\\
    0, & |y_i| \le \eta\\
    y_i + \eta, & y_i < -\eta
  \end{cases}.
  \end{aligned}
  \end{equation}
\end{lemma}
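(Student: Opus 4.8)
The plan is to exploit the fact that both terms in the objective decouple across coordinates, reducing the vector problem to $n$ independent scalar minimizations, and then to solve each scalar problem using the subdifferential optimality condition for convex functions. First I would observe that
\[
\frac{1}{2}\|\bx - \by\|^2 + \eta\|\bx\|_1 = \sum_i \left[ \tfrac{1}{2}(x_i - y_i)^2 + \eta|x_i|\right],
\]
so the joint minimizer is obtained by minimizing each scalar function $g(x) := \tfrac{1}{2}(x-y)^2 + \eta|x|$ separately. Since $g$ is the sum of a strictly convex quadratic and a convex term, it is strictly convex, hence it has a \emph{unique} minimizer characterized by the stationarity condition $0 \in \partial g(x) = (x - y) + \eta\,\partial |x|$, which for convex functions is both necessary and sufficient for global optimality (as recorded in the preliminaries).

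Next I would carry out the case analysis using $\partial |x| = \{\sign(x)\}$ for $x \neq 0$ and $\partial|0| = [-1,1]$. For $x > 0$ the condition forces $0 = (x-y) + \eta$, i.e. $x = y - \eta$, which is consistent with $x>0$ only when $y > \eta$; for $x < 0$ it forces $x = y + \eta$, consistent only when $y < -\eta$; and $x = 0$ satisfies $0 \in -y + \eta[-1,1]$ exactly when $|y| \le \eta$. These three regimes partition the real line in the variable $y$, and each produces precisely the branch asserted in~\eqref{eq:proxl1def}, yielding the soft-thresholding formula coordinate-by-coordinate.

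The main subtlety---rather than a genuine obstacle---is the nonsmooth point $x = 0$: one must use the set-valued subdifferential there instead of an ordinary derivative, and verify that the three cases are mutually consistent and exhaustive, including on the boundaries $y = \pm\eta$ (where the $x=0$ branch meets the shrinkage branches continuously). Strict convexity of $g$ then guarantees that the point satisfying the optimality inclusion is the unique global minimizer, which closes the argument. As the statement notes, the remaining prox operators for $\ell_0$, $\ell_2^2$, and $\ell_2$ follow by the same template, with the case analysis adapted to the relevant subdifferential.
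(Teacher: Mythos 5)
Your proposal is correct and follows essentially the same route as the paper's proof: decouple the objective coordinate-wise, write the subdifferential of each scalar problem (splitting into the cases $x_i>0$, $x_i=0$, $x_i<0$), and read off the soft-thresholding formula from the stationarity condition. You simply make explicit the case-checking the paper leaves to the reader and add the (correct) observation that strict convexity yields uniqueness.
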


\begin{proof}
  Note that the optimization problem may be written as
  \begin{equation}
  \begin{aligned}
    \argmin_{\bx}~&\frac{1}{2}\|\bx - \bm y\|^2 + \eta\|\bx\|_1 \\
    &= \argmin_{\bx}~
    \frac{1}{2}\sum_{i=1}^n (x_i-y_i)^2 + \eta |x_i| \; ,
\end{aligned}
  \end{equation}
  i.e. the problem decouples over the elements of $\bm y$.
  For each $i$, the optimization problem has the subdifferential

  \begin{equation}
  \begin{aligned}
  &  \partial_{x_i} \left ( \frac{1}{2} (x_i-y_i)^2 + \eta |x_i| \right )\\
&    = \begin{cases}
    x_i - y_i + \eta, & x_i > 0\\
    x_i - y_i + \{z : |z| \le \eta\} , & x_i = 0\\
    x_i -y_i - \eta, & x_i < 0
  \end{cases} \; .
  \end{aligned}
  \end{equation}
  After checking the possible stationary points given this formula
  for the subdifferential, it is simple to derive \eqref{eq:proxl1def}.
\end{proof}

\item $f(\cdot) = \|\cdot\|_0$. 
  The $\ell_0$ penalty directly controls the number of
  non-zeros in the vector instead of penalizing the
  magnitude of elements as $\ell_1$ does.
  However, it is non-convex and in practice regression formulations with $\ell_0$ 
  regularization can be trapped in local minima instead of finding the true support.
  \begin{lemma}[$\ell_0$]
  The prox operator of $\ell_0$ is simple, element-wise
  hard-thresholding:
  \begin{equation}
  \label{eq:l0}
\begin{aligned}
\bx &= \prox_{\eta f}(\bm y) = \argmin_{\bx}~\frac{1}{2}\|\bx - \bm y\|^2 + \eta\|\bx\|_0~~\Rightarrow~~\\
x_i &= \begin{cases}
y_i, & |y_i| > \sqrt{2\eta}\\
0, & |y_i| \le \sqrt{2\eta}
\end{cases}.
\end{aligned}
\end{equation}
  \end{lemma}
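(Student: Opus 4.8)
The plan is to mirror the $\ell_1$ derivation: since $\|\bx\|_0$ simply counts the nonzero entries of $\bx$, the objective decouples over coordinates, reducing the vector problem to $n$ independent scalar minimizations. First I would observe that
\[
\frac{1}{2}\|\bx - \bm y\|^2 + \eta\|\bx\|_0 = \sum_{i=1}^n \left( \frac{1}{2}(x_i - y_i)^2 + \eta\, g(x_i)\right),
\]
where $g(t) = 1$ for $t \neq 0$ and $g(t) = 0$ for $t = 0$, so that it suffices to solve each scalar subproblem $\min_{x_i} \tfrac{1}{2}(x_i-y_i)^2 + \eta\, g(x_i)$ in isolation.

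For the scalar problem, I would split into the two regimes dictated by $g$. If $x_i = 0$, the objective equals $\tfrac{1}{2}y_i^2$. If $x_i \neq 0$, the penalty term is the constant $\eta$, and the remaining quadratic $\tfrac{1}{2}(x_i-y_i)^2$ is minimized at $x_i = y_i$ (which is itself nonzero whenever $y_i \neq 0$), giving objective value $\eta$. Comparing the two candidate optima $\tfrac{1}{2}y_i^2$ and $\eta$ then selects the minimizer coordinate-wise.

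The comparison yields the stated threshold directly: keeping $x_i = y_i$ is strictly better exactly when $\eta < \tfrac{1}{2}y_i^2$, i.e. $|y_i| > \sqrt{2\eta}$, whereas setting $x_i = 0$ is at least as good when $|y_i| \le \sqrt{2\eta}$, which is precisely \eqref{eq:l0}.

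The only delicate point---and the closest thing to an obstacle---is the boundary $|y_i| = \sqrt{2\eta}$, where $x_i = 0$ and $x_i = y_i$ attain identical objective values, so the minimizer is genuinely nonunique; this nonuniqueness reflects the nonconvexity of $\ell_0$, and the stated formula simply adopts the convention of thresholding to zero. The degenerate case $y_i = 0$ is covered by the same convention, since there $x_i = 0$ is trivially optimal. No continuity or stationarity argument is needed, as the reduction to a finite comparison between two explicit candidates is exact.
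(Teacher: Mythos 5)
Your proof is correct and takes essentially the same approach as the paper's: decouple the objective coordinate-wise, observe that the only candidate minimizers in each coordinate are $x_i = 0$ and $x_i = y_i$, and compare the resulting values $\tfrac{1}{2}y_i^2$ versus $\eta$ to obtain the threshold $\sqrt{2\eta}$. The paper leaves the final comparison as "checking the conditions," while you carry it out explicitly and note the tie (nonuniqueness) at $|y_i| = \sqrt{2\eta}$; this is just a more detailed rendering of the same argument.
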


\begin{proof}
Analogous to the $\ell_1$, the prox problem for $\ell_0$ can be decoupled across coordinates: 
\[
\frac{1}{2}\|\bx - \bm y\|^2 + \eta\|\bx\|_0 = \argmin_{\bx}~
    \frac{1}{2}\sum_{i=1}^n (x_i-y_i)^2 + \eta \mathrm{1}_{\{x_i = 0\}} \; .
\]
From this formula, it is clear that the only possible solutions for each coordinate are $x_i = 0$ or $x_i = y_i$. 
The formula~\eqref{eq:l0} follows from checking the conditions for these cases. 
\end{proof}

\item $f(\cdot) = \frac{1}{2}\|\cdot\|^2$.
  The $\ell_2^2$ penalty can be used as a smooth and convex
  penalty which biases towards zero.
  When combined with linear regression, it is commonly known as
  ridge regression.
  \begin{lemma}[$\ell_2^2$]
  The prox of $\ell_2^2$ is scaling.
\[
\bx = \prox_{\eta f}(\bm y) = \argmin_{\bx}~\frac{1}{2}\|\bx - \bm y\|^2 + \frac{\eta}{2}\|\bx\|^2 = \frac{1}{1+\eta}\bm y.
\]
  \end{lemma}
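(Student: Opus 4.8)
The plan is to exploit the fact that the objective
\[
g(\bx) := \frac{1}{2}\|\bx - \bm y\|^2 + \frac{\eta}{2}\|\bx\|^2
\]
is a smooth, strictly convex quadratic for any $\eta > 0$: it is the sum of the convex quadratic $\frac{1}{2}\|\bx-\bm y\|^2$ and the strictly convex quadratic $\frac{\eta}{2}\|\bx\|^2$. Strict convexity guarantees a unique global minimizer, and smoothness means that this minimizer is characterized by the single first-order condition $\nabla g(\bx) = \bm 0$. Unlike the $\ell_1$ and $\ell_0$ cases above, no subdifferential calculus or coordinate-wise case analysis is needed, since $g$ is differentiable everywhere.

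First I would compute the gradient directly, obtaining
\[
\nabla g(\bx) = (\bx - \bm y) + \eta \bx = (1+\eta)\bx - \bm y.
\]
Setting $\nabla g(\bx) = \bm 0$ and solving the resulting linear equation gives $\bx = \frac{1}{1+\eta}\bm y$, which matches the claimed formula. Since $g$ is strictly convex, this stationary point is the unique global minimizer, so it is exactly $\prox_{\eta f}(\bm y)$.

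There is essentially no obstacle here; the only point that deserves a sentence is the justification that the stationary point is a global minimum, which follows immediately from strict convexity (equivalently, from positive definiteness of the Hessian $(1+\eta)\bI$). Alternatively, one could mirror the decoupled treatment used for $\ell_1$ and $\ell_0$ and solve the scalar problem $\min_{x_i}\, \frac{1}{2}(x_i - y_i)^2 + \frac{\eta}{2}x_i^2$ for each coordinate separately, but the vectorized gradient computation is the most direct route.
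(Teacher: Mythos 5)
Your proof is correct and matches the paper's approach: the paper simply states that the result ``follows directly from calculus,'' and your gradient computation $\nabla g(\bx) = (1+\eta)\bx - \bm y = \bm 0$ together with the strict-convexity justification is exactly that calculus spelled out. Nothing is missing.
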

\begin{proof}
The proof follows directly from calculus. 
\end{proof}

\item $f(\cdot) = \|\cdot\|$.
  The $\ell_2$ norm adds a group sparsity prior, i.e. the vector $\bx$ is
  biased toward being the zero vector.
  Often, this penalty is applied to each column of a matrix of variables.
  Unlike the prox operators above, $\|\cdot\|$ (by design) does not
  decouple into scalar problems.
  Fortunately, a closed form solution is easy to obtain.
  \begin{lemma}
\[
\begin{aligned}
\bx &= \prox_{\eta f}(\bm y) = \argmin_{\bx}~\frac{1}{2}\|\bx - \bm y\|^2 + \eta\|\bx\|~~\Rightarrow~~\\
\bx &= \begin{cases}
\frac{\|\bm y\| - \eta}{\|\bm y\|}\bm y, & \|\bm y\| > \eta\\
\bm 0, & \|\bm y\| \le \eta
\end{cases}.
\end{aligned}
\]
  \end{lemma}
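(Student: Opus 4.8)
The plan is to exploit the rotational symmetry of the objective to collapse the vector problem to a one-dimensional one, and to handle the nonsmooth kink at the origin separately. Since $f(\bx) = \|\bx\|$ is convex, the minimizer is characterized by the first-order optimality condition $\bm 0 \in \bx - \bm y + \eta\,\partial\|\bx\|$, where the subdifferential of the Euclidean norm is $\partial\|\bx\| = \{\bx/\|\bx\|\}$ for $\bx \neq \bm 0$ and $\partial\|\bm 0\| = \{\bm v : \|\bm v\| \le 1\}$, the closed unit ball. This is the same subdifferential-based strategy used for the $\ell_1$ lemma above, now adapted to the non-separable setting.

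First I would dispose of the case $\|\bm y\| \le \eta$. Here the claim is that $\bx = \bm 0$ is optimal: substituting $\bx = \bm 0$ into the optimality condition requires $\bm y/\eta \in \{\bm v : \|\bm v\| \le 1\}$, which holds precisely when $\|\bm y\| \le \eta$. Convexity of the objective then guarantees that this stationary point is the global minimizer, so $\prox_{\eta f}(\bm y) = \bm 0$.

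Next, for $\|\bm y\| > \eta$ I would look for a nonzero solution. Writing the condition at $\bx \neq \bm 0$ as $\bm y = \bx\,(1 + \eta/\|\bx\|)$ shows that $\bx$ must be a \emph{positive} scalar multiple of $\bm y$, say $\bx = c\,\bm y$ with $c > 0$. Substituting back and matching the scalar coefficients gives $1 = c + \eta/\|\bm y\|$, hence $c = (\|\bm y\| - \eta)/\|\bm y\|$, which is positive exactly under $\|\bm y\| > \eta$, yielding the stated formula. Equivalently, one could substitute the ansatz $\bx = r\,\bm y/\|\bm y\|$ with $r \ge 0$ and reduce the objective to $\tfrac{1}{2}(r - \|\bm y\|)^2 + \eta r$ up to a constant, a scalar problem whose minimizer on $[0,\infty)$ is $r = \max(\|\bm y\| - \eta, 0)$; this recovers both cases at once and makes the threshold transparent.

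The only real subtlety---and the step I would treat most carefully---is the behavior at the origin, where $\|\cdot\|$ fails to be differentiable and its subdifferential inflates to the full unit ball. This is exactly what produces the block-thresholding behavior and the clean dichotomy at $\|\bm y\| = \eta$, so the argument hinges on using the set-valued subdifferential rather than a gradient there. Everything else is routine scalar algebra, and the degenerate case $\bm y = \bm 0$ is automatically covered by the first case since then $\|\bm y\| = 0 \le \eta$.
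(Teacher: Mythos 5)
Your proof is correct, but it takes a different route from the paper's. You argue via the convex first-order optimality condition $\bm 0 \in \bx - \bm y + \eta\,\partial\|\bx\|$, treating the set-valued subdifferential at the origin (the $\eta$-ball condition gives the $\|\bm y\| \le \eta$ case) and solving $\bm y = \bx(1 + \eta/\|\bx\|)$ for the nonzero case --- in effect extending the subdifferential strategy the paper uses for its $\ell_1$ lemma to the non-separable vector setting. The paper instead argues geometrically: for any fixed value of $\|\bx\|$, the objective is minimized by taking $\bx$ in the direction of $\bm y$ (a Cauchy--Schwarz observation), which collapses the problem to a scalar one in $r = \|\bx\|$, and then the one-dimensional soft-thresholding reasoning from the $\ell_1$ case applies. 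Your closing remark about substituting the ansatz $\bx = r\,\bm y/\|\bm y\|$ and minimizing $\tfrac{1}{2}(r - \|\bm y\|)^2 + \eta r$ over $r \ge 0$ is essentially the paper's proof, so you have both arguments in hand. The trade-off: the paper's symmetry reduction is shorter but leaves the alignment step and the restriction to nonnegative multiples implicit, while your subdifferential derivation is more self-contained --- it identifies all stationary points directly, makes the thresholding dichotomy at $\|\bm y\| = \eta$ an exact equivalence rather than a consequence of a scalar sub-case, and guarantees global optimality by convexity without needing a separate argument that the ansatz loses no generality.
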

  \begin{proof}
    Observe that for any fixed value of $\|\bx\|$ the objective
    \begin{equation}
      \frac{1}{2}\|\bx - \bm y\|^2 + \eta\|\bx\|
    \end{equation}
    is minimized by taking $\bx$ in the direction of $\bm y$.
    This reduces the problem to finding the optimal value of
    $\|\bx\|$, for which the same reasoning as the $\ell_1$
    penalty applies.
  \end{proof}

\end{itemize}

\subsection*{Proximal Gradient Descent}

\begin{algorithm}[h!]
\caption{Proximal gradient descent}
\label{alg:pg_proto}
\begin{algorithmic}[1]
\State {\bfseries Input:} $\bx_0,\eta$
\State {\bfseries Initialize:} $k=0$
\While{not converged}
\Let{$k$}{$k+1$}
\Let{$\bx_k$}{$\prox_{\eta g}(\bx_{k-1} - \eta \nabla f(\bx_{k-1}))$}
\EndWhile
\State {\bfseries Output:} $\bx_k$
\end{algorithmic}
\end{algorithm}

Consider an objective of the form $p(x) = f(x) + g(x)$.
Given a step size $t$, the proximal gradient
descent algorithm is as defined in Algorithm~\ref{alg:pg}
\cite{combettes2011proximal}.
This algorithm has been studied extensively.
Among other results, we have
\begin{theorem}[Proximal Gradient Descent] \label{thm:pg}
Assume $p = f+g$ and both $p$ and $g$ are closed
convex functions. Let $p^*$ denote the optimal function
value and $\bx^*$ denote the optimal solution.
\begin{itemize}
\item If $\nabla f$
  is $\beta$ Lipschitz continuous, then, setting the
  step size as $1/\beta$, the iterates generated
  by proximal gradient descent satisfy
\[p(\bx^k) - p^* \le \frac{\beta\|\bx^0 - \bx^*\|^2}{2(k+1)}.\]
\item Furthermore, if $p$ is also $\alpha$ strongly
  convex, we have,
\[
\|\bx^k - \bx^*\|^2 \le \left(1 - \frac{\alpha}{\beta}\right)^k\|\bx^0 - \bx^*\|^2.
\]
\end{itemize}
\end{theorem}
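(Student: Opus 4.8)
The plan is to reduce both claims to a single \emph{fundamental prox-gradient inequality} and then read off each rate from it. Writing $t = 1/\beta$ and $\bx^k = \prox_{tg}(\bx^{k-1} - t\nabla f(\bx^{k-1}))$, the first-order optimality condition for the prox subproblem gives $\tfrac1t(\bx^{k-1}-\bx^k) - \nabla f(\bx^{k-1}) \in \partial g(\bx^k)$. I would combine this subgradient inequality (tested against an arbitrary point $z$) with the descent lemma $f(\bx^k) \le f(\bx^{k-1}) + \langle \nabla f(\bx^{k-1}), \bx^k - \bx^{k-1}\rangle + \tfrac{\beta}{2}\|\bx^k-\bx^{k-1}\|^2$ and the convexity bound $f(\bx^{k-1}) + \langle \nabla f(\bx^{k-1}), z - \bx^{k-1}\rangle \le f(z)$. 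The gradient terms then telescope, and after completing the square via $\|z-\bx^{k-1}\|^2 - \|z-\bx^k\|^2 = 2\langle z-\bx^k, \bx^k-\bx^{k-1}\rangle + \|\bx^k - \bx^{k-1}\|^2$, I obtain for every $z$
\[
p(\bx^k) \le p(z) + \tfrac{\beta}{2}\big(\|z-\bx^{k-1}\|^2 - \|z-\bx^k\|^2\big).
\]
This one inequality does all the work.

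For the sublinear bound I would apply it twice. Taking $z = \bx^*$ and summing over $k$ telescopes the right-hand side to at most $\tfrac{\beta}{2}\|\bx^0-\bx^*\|^2$, bounding the running sum $\sum_k (p(\bx^k)-p^*)$. Taking instead $z = \bx^{k-1}$ shows $p(\bx^k) \le p(\bx^{k-1})$, so the objective values are monotonically non-increasing; hence the final iterate value is no larger than the average, and dividing the summed bound by the number of terms yields the $\tfrac{\beta\|\bx^0-\bx^*\|^2}{2(k+1)}$ rate, up to the indexing convention for where the iteration count begins.

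For the linear rate I would strengthen the fundamental inequality using strong convexity, attributing the modulus $\alpha$ to the smooth part $f$ (exactly the paper's situation, where $f = \tfrac12\|\bA\bx-\bb\|^2$ is $\alpha$-strongly convex precisely when $\bA$ has full rank). Replacing the plain convexity inequality for $f$ by its strongly convex version adds a term $-\tfrac{\alpha}{2}\|z-\bx^{k-1}\|^2$ to the right-hand side. Setting $z = \bx^*$ and discarding the nonnegative quantity $p(\bx^k)-p^*$ then gives
\[
\tfrac{\beta}{2}\|\bx^k - \bx^*\|^2 \le \tfrac{\beta-\alpha}{2}\|\bx^{k-1}-\bx^*\|^2,
\]
that is, $\|\bx^k-\bx^*\|^2 \le (1-\alpha/\beta)\|\bx^{k-1}-\bx^*\|^2$, and iterating produces the claimed geometric decay.

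The main obstacle is pinning down the constant in the strongly convex case. If one assumes only that the composite $p$ is strongly convex, without knowing how the modulus splits between $f$ and $g$, then the naive argument (combining the fundamental inequality at $z=\bx^*$ with $p(\bx^k)-p^* \ge \tfrac{\alpha}{2}\|\bx^k-\bx^*\|^2$) yields only the weaker factor $\tfrac{\beta}{\alpha+\beta} = (1+\alpha/\beta)^{-1}$. Recovering the sharp $1-\alpha/\beta$ requires routing the strong convexity through the smooth part $f$ as above, so I would make that attribution explicit; this is harmless in every application here, since the strong convexity always originates in the quadratic data-fidelity term.
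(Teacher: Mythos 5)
Your proof is correct, and it differs from the paper in the most basic way possible: the paper does not prove this theorem at all, but simply cites standard references (Beck--Teboulle, Combettes--Pesquet, Parikh--Boyd). Your self-contained argument via the fundamental inequality $p(\bx^k) \le p(z) + \tfrac{\beta}{2}\bigl(\|z-\bx^{k-1}\|^2 - \|z-\bx^k\|^2\bigr)$, specialized at $z=\bx^*$ (telescoping) and at $z=\bx^{k-1}$ (monotonicity), is the standard textbook route, and both rates follow from it exactly as you describe; the $2(k+1)$ versus $2k$ denominator is, as you say, purely an indexing convention, and the same ambiguity sits in the paper's own statement. The real value you add beyond the citation is the explicit treatment of where the strong convexity lives, and here you have identified a genuine defect in the statement as written: with the modulus $\alpha$ attached only to the composite $p$, the factor $1-\alpha/\beta$ is not merely unprovable by this argument --- it is false in general. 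Take $f \equiv 0$ (so $\nabla f$ is $\beta$-Lipschitz for every $\beta>0$) and $g = \tfrac{\alpha}{2}\|\cdot\|^2$ with $\alpha = 0.9\,\beta$; then the iteration is $\bx^k = \bx^{k-1}/(1+\alpha/\beta)$, whose squared contraction factor per step is exactly $(1+\alpha/\beta)^{-2} \approx 0.277$, exceeding the claimed $1-\alpha/\beta = 0.1$. So the hypothesis must be read, as you do, with the strong convexity residing in the smooth part $f$, yielding $\tfrac{\beta}{2}\|\bx^k-\bx^*\|^2 \le \tfrac{\beta-\alpha}{2}\|\bx^{k-1}-\bx^*\|^2$. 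This reading is also the one consistent with the only place the paper invokes the linear rate, Theorem~\ref{col:cvx}, where $f = \tfrac12\|\bA\bx-\bb\|^2$ is $\sigma_{\mathrm{min}}(\bA)^2$-strongly convex precisely when $\bA$ has full rank and $\beta = \sigma_{\mathrm{max}}(\bA)^2$, so your attribution loses nothing in any application made of the theorem.
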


These results are well known; see e.g.~\cite{beck2009fastlin,combettes2011proximal,parikh2014proximal}
and the tutorial section 4.4 of~\cite{aravkin2017generalized}.

\section*{Theoretical Results}
In the main text, it is demonstrated that SR3  \eqref{eq:valueExp}
outperforms the standard regression problem \eqref{eq:generic},
achieving faster convergence and obtaining higher quality solutions.
%
%
%
Here, we develop some theory to explain the performance of SR3
from the perspective of the relaxed coordinates, $\bw$.
We obtain an explicit formula for the SR3 problem in $\bw$ alone
and then analyze the spectral properties of that new problem,
demonstrating that the conditioning of the $\bw$ problem is
greatly improved over that of the original problem.
We also obtain a quantitative measure of the distance between
the solutions of the original problem and the SR3 relaxation.
%

%

%
\subsection*{Spectral Properties of $\bF_\kappa$}
%

\subsubsection{Proof of Theorem~\ref{thm:sol}}

  The first property can be verified by direct calculation. We have
  \begin{align*}
    \bF_{\kappa}^\top \bF_\kappa \bw - \bF_\kappa^\top \bg_\kappa =& (\kappa \mbf I - \kappa^2 \bC \bH_\kappa^{-1} \bC^\top)\bw - \kappa \bC \bH_\kappa^{-1} \bA^\top \bb\\
    = &\kappa \bH_\kappa^{-1}[(\bH_\kappa - \kappa \mbf I)\bw - \bA^\top \bb] \\
     = &\kappa \bH_\kappa^{-1}(\bA^\top \bA \bw - \bA^\top \bb)
  \end{align*}
  so that $\bF_\kappa^\top \bF_\kappa \bw - \bF_\kappa^\top \bg_\kappa = \bm 0
  \iff \bA^\top \bA \bw + \mbf A^\top \bb = \bm 0$.
\blue{  
By simple algebra, we have,
\begin{equation}
\label{eq:ftf}
\begin{aligned}
\bF_\kappa^\top \bF_\kappa &= \kappa \mbf I - \kappa^2 \bC \bH_\kappa^{-1} \bC^\top \\
\sigma_i(\bF_\kappa^\top \bF_\kappa) &= \kappa - \kappa^2\sigma_{n-i+1}(\bC \bH_\kappa^{-1} \bC^\top).
\end{aligned}
\end{equation}
Since $\bC \bH_\kappa^{-1} \bC^\top$ and $\bF_\kappa^\top \bF_\kappa$ are positive semi-definite matrices,
we have $\mbf 0 \preceq \bF_\kappa^\top \bF_\kappa \preceq \kappa \mbf I$. Denote the SVD for $\bC$ by
\(
\bC = \mbf U_c \mbf \Sigma_c \mbf V_c^\top.
\)
When $n \ge d$ and $\bC$ is full rank, we know $\mbf \Sigma_c$ is invertible and $\mbf V_c$ is orthogonal.
Then
\begin{align*}
\bC \bH_\kappa^{-1} \bC^\top &= \mbf U_c \mbf \Sigma_c \mbf V_c^\top (\mbf A^\top \mbf A + \kappa \mbf V_c \mbf \Sigma_c^2 \mbf V_c^\top)^{-1} \mbf V_c \mbf \Sigma_c \mbf U_c^\top\\
&=\mbf U_c \mbf  (\mbf \Sigma_c^{-1} \mbf V_c^\top \mbf A^\top \mbf A  \mbf V_c \mbf \Sigma_c^{-1}+ \kappa \mbf I)^{-1} \mbf U_c^\top
\end{align*}
This gives a lower bound of the spectrum of $\bC \bH_\kappa^{-1} \bC^\top$,
\begin{align*}
&\smin(\mbf \Sigma_c^{-1} \mbf V_c^\top \mbf A^\top \mbf A  \mbf V_c \mbf \Sigma_c^{-1}) \ge \smin(\mbf A^\top \mbf A) /\smax(\bC^\top \bC)\\
\Rightarrow~~
&\smax(\bC \bH_\kappa^{-1} \bC^\top) \le 1/(\smin(\mbf A^\top \mbf A) /\smax(\bC^\top \bC) + \kappa)
\end{align*}
Then we obtain the conclusion,
\[
\begin{aligned}
&\smin(\bF_\kappa^\top \bF_\kappa) \ge \kappa - \frac{\kappa^2}{\smin(\mbf A^\top \mbf A)/\smax(\bC^\top \bC) + \kappa}\\
 &= \frac{\smin(\mbf A^\top \mbf A)/\smax(\bC^\top \bC)}{1 + \smin(\mbf A^\top \mbf A)/(\kappa\smax(\bC^\top \bC))}.
\end{aligned}
\]
}

When $\bC = \bI$, 
  we have that 
  \begin{align*}
    \bF_\kappa^\top \bF_\kappa &= \kappa [\mbf I - \kappa (\mbf A^\top \mbf A + \kappa \mbf I)^{-1}]\\
    &= \bA^\top(\mbf I + \bA \bA^\top/\kappa)^{-1}\bA
  \end{align*}
  Assume $\bA\in\R^{m \times n}$ has the singular value decomposition (SVD)
  $\bA = \mbf U \mbf \Sigma \mbf V^\top$, where $\mbf U \in \R^{m \times m}$,
  $\mbf \Sigma \in \R^{m \times m}$, and $\mbf V \in \R^{m \times m}$. We have
  \[
  \bF_\kappa^\top \bF_\kappa = \mbf V \mbf \Sigma^\top(\mbf I
  + \mbf \Sigma \mbf \Sigma^\top/\kappa)^{-1}\mbf \Sigma \mbf V^\top.
  \]
  Let $\mathbf{\hat{\Sigma}}\in\R^{l \times l}$ denote the reduced diagonal part
  of $\mbf \Sigma$, i.e. the top-left $l\times l$ submatrix of $\mbf \Sigma$
  with $l = \min(m,n)$.
  When $m \ge n$, we have
\begin{equation} \label{eq:singmgen}
\mbf \Sigma = \begin{bmatrix}
\mathbf{\hat{\Sigma}} \\
\mbf 0
\end{bmatrix}, \quad
\bF_\kappa^\top \bF_\kappa = \mbf V \mathbf{\hat{\Sigma}}^\top(\mbf I + \mathbf{\hat{\Sigma}}^2/\kappa)^{-1} \mathbf{\hat{\Sigma}} \mbf V^\top
\end{equation}
And when $m < n$,
\begin{equation} \label{eq:singmltn}
\mbf \Sigma = \begin{bmatrix}
\mathbf{\hat{\Sigma}} & \mbf 0
\end{bmatrix}, \quad
\bF_\kappa^\top \bF_\kappa = \mbf V\begin{bmatrix}
\mathbf{\hat{\Sigma}}^\top(\mbf I + \mathbf{\hat{\Sigma}}^2/\kappa)^{-1} \mathbf{\hat{\Sigma}} & \mbf 0\\
\mbf 0 & \mbf 0
\end{bmatrix} \mbf V^\top
\end{equation}
\eqref{eq:FtFCI} and~\eqref{eq:svFtFCI} follow immediately.

%
%

  Note that the function
  \[
  \frac{x}{\sqrt{1+x^2/a}}
  \]
  is an increasing function of $x$ when $x, a > 0$. Therefore, by \eqref{eq:svFtFCI}, we have
  \[
  \begin{aligned}
  \sigma_{\textrm{max}}(\bF_\kappa) &= \frac{\sigma_{\textrm{max}}(\bA)}
        {\sqrt{1+\sigma_{\textrm{max}}(\bA)^2/\kappa}} \quad \mbox{ and } \\
        \sigma_{\textrm{min}}(\bF_\kappa) &= \frac{\sigma_{\textrm{min}}(\bA)}
              {\sqrt{1+\sigma_{\textrm{min}}(\bA)^2/\kappa}} \; .
              \end{aligned}
              \]
              \eqref{eq:condf} follows by the definition of the
              condition number.

\subsubsection{Proof of Theorem~\ref{thm:general_lr}.}
%
For the iterates of the proximal gradient method, we have
\[
\bx_{k+1} = \argmin_{\bx} \frac{1}{2}\|\bx - (\bx_k - \eta \nabla f(\bx_k))\|^2 + \eta g(\bx)
\]
and from the first order optimality condition we have
\[\begin{aligned}
\bm 0 &\in \bx_{k+1} - \bx_k + \eta \nabla f(\bx_k) + \eta\partial g(\bx_{k+1})\\
\Rightarrow~~&\frac{1}{\eta}(\bx_k - \bx_{k+1}) + \nabla f(\bx_{k+1}) - \nabla f(\bx_k) \\
& \quad \in \nabla f(\bx_{k+1}) + \partial g(\bx_{k+1})\\
\Rightarrow~~&(\|\bA\|_2^2 \mathbf I - \bA^\top\bA)(\bx_k - \bx_{k+1}) \in \partial p(\bx_{k+1}) \; ,
\end{aligned}\]
which establishes the first statement.
Next, consider the following inequality
\begin{align*}
p(\bx_{k+1}) &= \frac{1}{2}\|\bA \bx_{k+1} - \bb\|^2 + \lambda R(\bx_{k+1})\\
&= \frac{1}{2}\|\bA \bx_{k} - \bb + \bA(\bx_{k+1} - \bx_k)\|^2 + \lambda R(\bx_{k+1})\\
&= \frac{1}{2}\|\bA \bx_k - \bb\|^2 + \lambda R(\bx_{k+1}) \\
& \qquad + \ip{\bA^\top (\bA \bx_k - \bb), \bx_{k+1} - \bx_k} \\
& \qquad + \frac{1}{2}\|\bA(\bx_{k+1} - \bx_k)\|^2\\
&\le \frac{1}{2}\|\bA \bx_k - \bb\|^2 + \lambda R(\bx_k)
-\frac{\|\bA\|_2^2}{2}\|\bx_{k+1} - \bx_k\|^2 \\
&\qquad + \frac{1}{2}\|\bA(\bx_{k+1} - \bx_k)\|^2
\; ,
\end{align*}
which implies the inequality
\[
\begin{aligned}
\ip{\bx_k - \bx_{k+1}, (\|\bA\|_2^2 \mathbf I - \bA^\top\bA)(\bx_k - \bx_{k+1})}  \\
\le p(\bx_k) - p(\bx_{k+1}) \\
\Rightarrow~~\|\bA\|_2^2 \|\bx_{k+1} - \bx_k\|^2 \le p(\bx_k) - p(\bx_{k+1}).
\end{aligned}
\]
Setting $\bv_{k+1} = (\|\bA\|_2^2 \mathbf I - \bA^\top\bA)(\bx_k - \bx_{k+1})$, we have
\[
\|\bv_{k+1}\|^2 \le \|\bA\|_2^4\|\bx_{k+1} - \bx_k\|^2 \le \|\bA\|_2^2(p(\bx_{k}) - p(\bx_{k+1})) \; .
\]
After we add up and simplify, we obtain
\[
\begin{aligned}
\frac{1}{N}\sum_{k=0}^{N-1}\|\bv_{k+1}\|^2 \le \frac{\|\bA\|_2^2}{N}(p(\bx_0) - p(\bx_N))\\
 \le \frac{\|\bA\|_2^2}{N}(p(\bx_0) - p^*) \; ,
\end{aligned}
\]
which is the desired convergence result.

\subsubsection{Proof of Theorem~\ref{col:cvx}.}

The result is immediate from combining Theorem~\ref{thm:general_lr} and Theorem~\ref{thm:sol}.

\subsubsection{Proof of Corollary~\ref{cor:ctcrates}.}
\blue{
The result is immediate from combining Theorem~\ref{thm:general_lr} and Corollary~\ref{cor:tight_frame}.
}

%

\subsection*{Characterizing Optimal Solutions of \SR}
In this section, we quantify the relation between the solution of \eqref{eq:generic} and \eqref{eq:valueExp} when $\bC = \mbf I$.
In this analysis, we fix $\kappa$ as a constant and set $\bC = \mbf I$.
\begin{lemma}[Optimality conditions for \eqref{eq:generic} and \eqref{eq:valueExp}]
\label{lm:stationarity}
Define the sets
\[\begin{aligned}
\mathcal{S}_1(\bx, \lambda_1) &= \{\bA^\top\bA \bx - \bA^\top\bb + \lambda_1 \bm v_1 : \bm v_1 \in \partial R(\bx)\}\\
\mathcal{S}_2(\bw, \lambda_2) &= \{\kappa\bH_\kappa^{-1}(\bA^\top\bA \bw - \bA^\top\bb) + \lambda_2 \bm v_2 : \bm v_2 \in \partial R(\bw)\} \; ,
\end{aligned}\]
where $\bH_\kappa = \bA^\top\bA + \kappa \bI$, as above. These sets
contain the subgradients of \eqref{eq:generic} and \eqref{eq:valueExp}.
If we assume $\hat \bx$ and $\hat \bw$ are the (stationary)
solutions of \eqref{eq:generic} and \eqref{eq:valueExp}, namely
\[
\bm 0 \in \mathcal{S}_1(\hat\bx, \lambda_1), \quad \bm 0 \in \mathcal{S}_2(\hat\bw, \lambda_2) \; ,
\]
then 
\begin{align*}
[\mbf I - (\lambda_1/\lambda_2)\kappa \bH_\kappa^{-1}](\bA^\top\bA \hat \bw - \bA^\top\bb) &\in \mathcal{S}_1(\hat\bw, \lambda_1),\\
[\kappa \bH_\kappa^{-1} - (\lambda_2/\lambda_1)\mbf I](\bA^\top\bA \hat \bx - \bA^\top\bb) &\in \mathcal{S}_2(\hat\bx, \lambda_2).
\end{align*}
\end{lemma}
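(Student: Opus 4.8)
The plan is to exploit the key structural observation that both $\mathcal{S}_1$ and $\mathcal{S}_2$ are assembled from the \emph{same} subdifferential $\partial R$ evaluated at the argument; the two sets differ only in the linear operator applied to the data-fidelity residual $\bA^\top\bA(\cdot) - \bA^\top\bb$. Because of this, a subgradient that certifies stationarity in one set can be reused verbatim as an admissible subgradient in the other set, once we substitute the matching point. So the whole proof reduces to extracting a subgradient from one stationarity condition and feeding it into the other set's definition.

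First I would unpack the stationarity of $\hat\bw$. Since $\bm 0 \in \mathcal{S}_2(\hat\bw,\lambda_2)$, there exists $\bm v_2 \in \partial R(\hat\bw)$ with
\[
\kappa\bH_\kappa^{-1}(\bA^\top\bA\hat\bw - \bA^\top\bb) + \lambda_2 \bm v_2 = \bm 0,
\]
so that $\bm v_2 = -(\kappa/\lambda_2)\bH_\kappa^{-1}(\bA^\top\bA\hat\bw - \bA^\top\bb)$. Next I would insert this same $\bm v_2$ into the definition of $\mathcal{S}_1(\hat\bw,\lambda_1)$ as the choice $\bm v_1 = \bm v_2$, which is legitimate precisely because $\bm v_2 \in \partial R(\hat\bw)$. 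Writing $\bm r := \bA^\top\bA\hat\bw - \bA^\top\bb$, the resulting element of $\mathcal{S}_1(\hat\bw,\lambda_1)$ is
\[
\bm r + \lambda_1 \bm v_2 = \bm r - \frac{\lambda_1\kappa}{\lambda_2}\bH_\kappa^{-1}\bm r = \Big[\mbf I - \frac{\lambda_1}{\lambda_2}\kappa\bH_\kappa^{-1}\Big]\bm r,
\]
which is exactly the claimed first inclusion.

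The second inclusion follows by the symmetric manoeuvre. From $\bm 0 \in \mathcal{S}_1(\hat\bx,\lambda_1)$ I would extract $\bm v_1 \in \partial R(\hat\bx)$ satisfying $\bm v_1 = -(1/\lambda_1)(\bA^\top\bA\hat\bx - \bA^\top\bb)$, then reuse it as $\bm v_2 = \bm v_1 \in \partial R(\hat\bx)$ in $\mathcal{S}_2(\hat\bx,\lambda_2)$. Setting $\bm s := \bA^\top\bA\hat\bx - \bA^\top\bb$, the corresponding element of $\mathcal{S}_2(\hat\bx,\lambda_2)$ is $\kappa\bH_\kappa^{-1}\bm s - (\lambda_2/\lambda_1)\bm s = [\kappa\bH_\kappa^{-1} - (\lambda_2/\lambda_1)\mbf I]\bm s$, matching the second claim.

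There is essentially no analytical obstacle here: the argument is pure substitution, with no appeal to convexity. The only point requiring genuine care is the set-valued, possibly nonconvex nature of $\partial R$, where $\partial R$ denotes the limiting subdifferential. I must confirm that the subgradient produced by stationarity at a point is an \emph{admissible} element of $\partial R$ at that \emph{same} point in the other set's definition. Since the first inclusion evaluates $\partial R$ at $\hat\bw$ in both $\mathcal{S}_1$ and $\mathcal{S}_2$, and the second evaluates it at $\hat\bx$ in both, the subdifferential set is identical across the two definitions, so the reuse is immediate and holds in full generality.
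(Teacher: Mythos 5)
Your proposal is correct and follows essentially the same argument as the paper: extract the subgradient $\bm v_2 \in \partial R(\hat\bw)$ (resp.\ $\bm v_1 \in \partial R(\hat\bx)$) from the stationarity condition, reuse it as an admissible element in the other set's definition at the same point, and substitute its closed-form expression to obtain the claimed inclusions. The only difference is presentational—you make explicit the justification for reusing the subgradient across the two sets, which the paper leaves implicit.
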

\begin{proof}
As $\hat \bx$ and $\hat \bw$ are the (stationary) solutions of \eqref{eq:generic} and \eqref{eq:valueExp}, we have
\begin{align*}
\exists \bm v_1 \in \partial R(\hat \bx), \quad &\lambda_1 \bm v_1 = -(\bA^\top\bA \hat \bx - \bA^\top\bb),\\
\exists \bm v_2 \in \partial R(\hat \bw), \quad &\lambda_2 \bm v_2 = -\kappa\bH_\kappa^{-1}(\bA^\top\bA \hat \bw - \bA^\top\bb).
\end{align*}
Then, 
\begin{align*}
&\bA^\top \bA \hat \bw - \bA^\top \bb + \lambda_1 \bv_2 \in \mathcal{S}_1(\hat \bw, \lambda_1) \\
&~~\Rightarrow~~[\mbf I - (\lambda_1/\lambda_2)\kappa \bH_\kappa^{-1}](\bA^\top\bA \hat \bw - \bA^\top\bb) \in \mathcal{S}_1(\hat\bw, \lambda_1),\\
&\kappa\bH_\kappa^{-1}(\bA^\top\bA \hat \bx - \bA^\top\bb) + \lambda_2 \bm v_1 \in \mathcal{S}_2(\hat \bx, \lambda_2)\\
&~~\Rightarrow~~[\kappa \bH_\kappa^{-1} - (\lambda_2/\lambda_1)\mbf I](\bA^\top\bA \hat \bx - \bA^\top\bb) \in \mathcal{S}_2(\hat\bx, \lambda_2).
\end{align*}
\end{proof}
\subsubsection{Proof of Theorem~\ref{thm:ratio}}
Using the definitions of Lemma~\ref{lm:stationarity}, we have
\begin{align*}
  &\mathrm{dist}(\bm 0, \mathcal{S}_1(\hat\bw, \lambda_1)) \\
  &\quad \le \frac{1}{\hat \tau}\|(\hat \tau \mbf I - \kappa \bH_\kappa^{-1})(\bA^\top \bA \hat \bw - \bA^\top \bb)\|\\
&\quad  = \frac{1}{\hat \tau}\|\hat \tau \mbf I - \kappa \bH_\kappa^{-1}\|_2 \|\bA^\top \bA \hat \bw - \bA^\top \bb\|\\
&\quad  = \frac{1}{\hat \tau}\|\hat\tau \bm{1} - \kappa \sigma(\bH_\kappa^{-1})\|_\infty \|\bA^\top \bA \hat \bw - \bA^\top \bb\| \\
&\quad  = \frac{\sigma_\mathrm{max}(\bH_\kappa) - \sigma_\mathrm{min}(\bH_\kappa)}{\sigma_\mathrm{max}(\bH_\kappa) + \sigma_\mathrm{min}(\bH_\kappa)}\|\bA^\top \bA \hat \bw - \bA^\top \bb\| \; \\
&\quad  = \frac{\sigma_\mathrm{max}(\bA)^2 - \sigma_\mathrm{min}(\bA)^2}{\sigma_\mathrm{max}(\bA)^2 + \sigma_\mathrm{min}(\bA)^2 + 2\kappa}\|\bA^\top \bA \hat \bw - \bA^\top \bb\| \; .
\end{align*}

%
If $\hat \bx = \hat \bw$, then
$\bm r = \bA^\top\bA \hat \bw - \bA^\top\bb = \bA^\top\bA \hat \bx - \bA^\top\bb$ is
in the null space of $\tau \mbf I - \kappa \bH_\kappa^{-1}$, where
$\tau = \lambda_2/\lambda_1$.
This establishes a connection between $\lambda_1$ and $\lambda_2$.
For instance, we have the following result.
In the case that $\bA$ has orthogonal rows or columns, theorem~\ref{thm:ratio}
provides some explicit bounds on the distance between these solutions.
\begin{corollary}
  If $\bA^\top \bA = \mbf I$, then
  $\mathrm{dist}(\bm 0, \mathcal{S}_1(\hat\bw, \lambda_1)) = 0$, i.e.
  $\hat\bw$ is the stationary point of \eqref{eq:generic}.
  If $\bA \bA^\top = \mbf I$, then $\mathrm{dist}(\bm 0, \mathcal{S}_1(\hat\bw, \lambda_1)) \le 1/(1+2\kappa)$.
\end{corollary}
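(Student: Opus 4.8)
The plan is to read both bounds directly off the computation that already underlies Theorem~\ref{thm:ratio}, specializing the spectrum of $\bH_\kappa = \bA^\top\bA + \kappa\bI$ to each orthogonality hypothesis. Recall from that proof that, at the optimal ratio $\hat\tau$, the distance to stationarity of $\hat\bw$ for~\eqref{eq:generic} is controlled by
\[
\frac{1}{\hat\tau}\|\hat\tau\bI - \kappa\bH_\kappa^{-1}\|_2\,\|\bA^\top\bA\hat\bw - \bA^\top\bb\|
= \frac{\smax(\bH_\kappa) - \smin(\bH_\kappa)}{\smax(\bH_\kappa) + \smin(\bH_\kappa)}\,\|\bA^\top\bA\hat\bw - \bA^\top\bb\|,
\]
where I keep the coefficient in its $\bH_\kappa$-eigenvalue form rather than the singular-value form displayed in the theorem. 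Everything then reduces to identifying $\smax(\bH_\kappa)$ and $\smin(\bH_\kappa)$ in the two cases.

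First I would treat $\bA^\top\bA = \bI$. Here $\bH_\kappa = (1+\kappa)\bI$, so $\kappa\bH_\kappa^{-1} = \tfrac{\kappa}{1+\kappa}\bI$ is a scalar multiple of the identity. Choosing $\hat\tau = \kappa/(1+\kappa)$ makes $\hat\tau\bI - \kappa\bH_\kappa^{-1} = \bm 0$ exactly; equivalently $\smax(\bH_\kappa) = \smin(\bH_\kappa) = 1+\kappa$, so the coefficient vanishes. By Lemma~\ref{lm:stationarity}, the inclusion $[\mbf I - (\lambda_1/\lambda_2)\kappa\bH_\kappa^{-1}](\bA^\top\bA\hat\bw - \bA^\top\bb)\in\mathcal{S}_1(\hat\bw,\lambda_1)$ then places $\bm 0$ in $\mathcal{S}_1(\hat\bw,\lambda_1)$, so $\hat\bw$ is a stationary point of~\eqref{eq:generic}.

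For $\bA\bA^\top = \bI$ I would use that $\bA^\top\bA$ is then idempotent and symmetric, i.e. an orthogonal projector of rank $m$, with eigenvalues $0$ and $1$. Consequently $\bH_\kappa$ has eigenvalues $\kappa$ and $1+\kappa$, giving $\smax(\bH_\kappa) = 1+\kappa$ and $\smin(\bH_\kappa) = \kappa$. Substituting into the coefficient yields $\frac{(1+\kappa)-\kappa}{(1+\kappa)+\kappa} = \frac{1}{1+2\kappa}$, which is the claimed bound, carrying the residual factor $\|\bA^\top\bA\hat\bw - \bA^\top\bb\|$ through exactly as in Theorem~\ref{thm:ratio}.

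The one point that needs care — and essentially the only place the argument can go wrong — is that in the second case one must not substitute into the singular-value form $\frac{\smax(\bA)^2 - \smin(\bA)^2}{\smax(\bA)^2 + \smin(\bA)^2 + 2\kappa}$ printed in the theorem. That expression silently assumes $\smin(\bH_\kappa) = \smin(\bA)^2 + \kappa$, which holds only when $\bA^\top\bA$ is nonsingular. When $\bA\bA^\top = \bI$ with $m < d$, the null space of $\bA$ contributes the smaller eigenvalue $\kappa$ to $\bH_\kappa$, so $\smin(\bH_\kappa) = \kappa$ rather than $\smin(\bA)^2 + \kappa = 1+\kappa$. Working with the true extreme eigenvalues of $\bH_\kappa$ avoids this pitfall and delivers the factor $1/(1+2\kappa)$.
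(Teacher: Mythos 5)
Your proof is correct and follows essentially the same route as the paper's: specialize the extreme eigenvalues of $\bH_\kappa$ under each hypothesis ($\smax(\bH_\kappa)=\smin(\bH_\kappa)=1+\kappa$ in the first case; $1+\kappa$ and $\kappa$ in the second) and substitute into the bound $\frac{\smax(\bH_\kappa)-\smin(\bH_\kappa)}{\smax(\bH_\kappa)+\smin(\bH_\kappa)}\|\bA^\top\bA\hat\bw-\bA^\top\bb\|$ from the proof of Theorem~\ref{thm:ratio}. Your cautionary remark --- that when $\bA\bA^\top=\bI$ with $m<d$ one must work with the eigenvalues of $\bH_\kappa$ directly, since $\smin(\bH_\kappa)=\kappa$ rather than $\smin(\bA)^2+\kappa$, so the theorem's displayed singular-value formula would wrongly give $0$ --- is precisely the subtlety the paper's terse proof glosses over, and you handle it correctly.
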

\begin{proof}
  The formula for $\bH_\kappa$ simplifies under these assumptions.
  When $\bA^\top \bA = \mbf I$, we have $\bH_\kappa = (1+ \kappa)\mbf I$ and $\sigma_\mathrm{max}(\bH_\kappa) = \sigma_\mathrm{min}(\bH_\kappa) = 1+\kappa$.
  When $\bA \bA^\top = \mbf I$, we have $\sigma_\mathrm{max}(\bH_\kappa) = 1 + \kappa$ and $\sigma_\mathrm{min}(\bH_\kappa) = \kappa$.
  Theorem~\ref{thm:ratio} then implies the result.
\end{proof}

\subsection{Implementation of $\ell_q$ proximal operator.}
\blue{
Here we summarize our implementation. 
%
%
%
The first and second derivatives are given by 
\begin{equation}
\label{eq:scalar_derivatives}
\begin{aligned}
f_{\alpha,p}'(x;z) &= \frac{1}{\alpha}(x - |z|) + p x^{p-1},\\
f_{\alpha,p}''(x;z) &= \frac{1}{\alpha} + p(p-1) x^{p-2}.
\end{aligned}
\end{equation}
The point $\tilde x = \sqrt[p-2]{-1/(\alpha p (p-1))}$ is the only inflection point of $f_{\alpha, p}$,
with $f_{\alpha,p}''(x) < 0$ for $0 \le x < \tilde x$, and
$f_{\alpha,p}''(x;z) > 0$ when $x > \tilde x$. 
\begin{itemize}
\item If $f'_{\alpha,p}(\tilde x;z) \ge 0$, we have $f'_{\alpha,p}(x;z) \ge 0$, for all $x \ge 0$.
Then $\argmin_{x \ge 0}~f_{\alpha,p}(x;z) = 0$.
\item If $f_{\alpha,p}'(\tilde x;z) < 0$, one local min $\bar x \in (\tilde x, |z|)$ exists,
and we can use Newton's method to find it. 
Then we compare the values at $0$ and $\bar x$, obtaining 
\[
\argmin_{x \ge 0}~f_{\alpha,p}(x;z) = \begin{cases}
0, & f_{\alpha,p}(0;z) \le f_{\alpha,p}(\bar x;z)\\
\bar x, & f_{\alpha,p}(0;z) > f_{\alpha,p}(\bar x;z)
\end{cases}.
\]
\end{itemize}
}

\bibliographystyle{abbrv}
\bibliography{refs,relax_si}

\end{document}